\documentclass{article}

\PassOptionsToPackage{numbers, compress}{natbib}

\usepackage[preprint]{neurips_2026}

\usepackage[utf8]{inputenc} 
\usepackage[T1]{fontenc}    
\PassOptionsToPackage{unicode=false}{hyperref}
\usepackage{hyperref}       
\usepackage{url}            
\usepackage{booktabs}       
\usepackage{amsmath}        
\usepackage{amsfonts}       
\usepackage{amsthm}         
\newtheorem{lemma}{Lemma}
\newtheorem{proposition}{Proposition}

\usepackage{algorithm}      
\usepackage{algpseudocode}  

\usepackage{nicefrac}       
\usepackage{microtype}      
\usepackage{xcolor}         
\usepackage{placeins}       

\newif\ifreviewmode
\reviewmodefalse
\ifreviewmode
  \newcommand{\PPc}[1]{{\color{orange}[\textbf{PP}: #1]}}
  \newcommand{\PP}[1]{{\color{orange}{#1}}}
  \definecolor{nicolascolor}{RGB}{0,128,128}
  \newcommand{\nicolasc}[1]{{\color{nicolascolor}[\textbf{ND}: #1]}}
  \newcommand{\nicolas}[1]{{\color{nicolascolor}{#1}}}
\else
  \newcommand{\PPc}[1]{}
  \newcommand{\PP}[1]{#1}
  \newcommand{\nicolasc}[1]{}
  \newcommand{\nicolas}[1]{#1}
\fi

\newcommand{\myparagraph}[1]{\vspace{0.08cm}\noindent {\bf #1}~}

\usepackage{amsmath}
\usepackage{amssymb}
\usepackage{pgfplots}
\usepackage{pgfplotstable}
\pgfplotsset{compat=1.18}
\usepgfplotslibrary{groupplots, statistics, fillbetween, colormaps}
\usepgfplotslibrary{statistics}
\usetikzlibrary{patterns, decorations.pathreplacing, fadings, calc, arrows.meta}
\usepackage{tikzviolinplots}

\usetikzlibrary{external}
\tikzset{external/mode=only graphics}

\definecolor{palCoral}     {HTML}{E89B9B}  
\definecolor{palCoralDark} {HTML}{B66B6B}  
\definecolor{palSlate}     {HTML}{7BA5C4}  
\definecolor{palSlateDark} {HTML}{4A7898}
\definecolor{palSage}      {HTML}{A8C99A}  
\definecolor{palSageDark}  {HTML}{6F9460}

\definecolor{palRose}      {HTML}{E8B4C4}  
\definecolor{palRoseDark}  {HTML}{B47D8E}
\definecolor{palLavender}  {HTML}{B5A8D9}  
\definecolor{palLavDark}   {HTML}{7C6FA8}
\definecolor{palTeal}      {HTML}{6FA0A8}  
\definecolor{palTealDark}  {HTML}{426E76}
\definecolor{palOchre}     {HTML}{D9B870}  
\definecolor{palOchreDark} {HTML}{96772A}

\definecolor{palPeach}     {HTML}{F0B98D}  
\definecolor{palMint}      {HTML}{9DCFB6}  
\definecolor{palMauve}     {HTML}{C896B6}  
\definecolor{palButter}    {HTML}{E6CB85}  
\definecolor{palGray}      {HTML}{ADA7A0}  

\definecolor{palAmber}     {HTML}{F0C070}
\definecolor{palAmberDark} {HTML}{B07020}

\definecolor{hmGood}       {HTML}{8FBF9A}
\definecolor{hmMid}        {HTML}{F4F0E8}
\definecolor{hmBad}        {HTML}{D88A8A}

\pgfplotsset{
  /pgfplots/colormap={fidmap}{
    rgb255(0cm)=(143,191,154)
    rgb255(1cm)=(244,240,232)
    rgb255(2cm)=(216,138,138)
  },
  /pgfplots/colormap={rhomap}{
    rgb255(0cm)=(74,120,152)
    rgb255(1cm)=(244,240,232)
    rgb255(2cm)=(182,107,107)
  },
}

\pgfplotsset{
  fidlottery/.style={
    width=0.98\linewidth,
    height=5.2cm,
    font=\small,
    tick label style={font=\footnotesize},
    label style={font=\small},
    title style={font=\small\bfseries, yshift=-2pt},
    legend style={
      font=\footnotesize,
      draw=black!20,
      fill=white,
      fill opacity=0.92,
      text opacity=1,
      inner sep=2pt,
      rounded corners=1pt,
    },
    legend cell align=left,
    grid=major,
    major grid style={line width=0.25pt, draw=black!12},
    minor grid style={line width=0.2pt, draw=black!8},
    axis line style={draw=black!40, line width=0.4pt},
    tick style={draw=black!40, line width=0.4pt},
    every axis plot/.append style={line width=0.9pt},
  },
}

\pgfplotsset{
  thinmark/.style={mark size=1.1pt, mark options={line width=0.4pt}},
  midmark/.style={mark size=1.6pt, mark options={line width=0.5pt}},
}

\newcommand{\flsf}{\fontfamily{phv}\selectfont}
\newcommand{\fltitle}{\fontfamily{pag}\selectfont}

\usepackage{tcolorbox}
\tcbuselibrary{skins, breakable}
\newtcolorbox{advicebox}{%
  enhanced,
  colback=palSage!10, colframe=palSageDark,
  boxrule=0.5pt, arc=2pt,
  left=8pt, right=8pt, top=10pt, bottom=4pt,
  fonttitle=\bfseries, coltitle=white,
  colbacktitle=palSageDark,
  title={Advice for Practitioners},
  attach boxed title to top left={xshift=8pt, yshift=-6pt},
  boxed title style={colback=palSageDark, sharp corners, boxrule=0pt},
  before skip=8pt, after skip=8pt,
}

\newtcolorbox{tldrbox}{%
  enhanced, breakable,
  colback=palSlate!10, colframe=palSlateDark,
  boxrule=0.5pt, arc=2pt,
  left=8pt, right=8pt, top=10pt, bottom=4pt,
  fonttitle=\bfseries, coltitle=white,
  colbacktitle=palSlateDark,
  title={TL;DR},
  attach boxed title to top left={xshift=8pt, yshift=-6pt},
  boxed title style={colback=palSlateDark, sharp corners, boxrule=0pt},
  before skip=6pt, after skip=8pt,
}
\BeforeBeginEnvironment{tldrbox}{\tikzexternaldisable}
\AfterEndEnvironment{tldrbox}{\tikzexternalenable}

\title{The FID Lottery: Quantifying Hidden Randomness\\in Generative Model Evaluation}

\author{
  Nicolas Dufour \\
  Kyutai
 \And 
  Alexei A.\ Efros \\
  UC Berkeley
\And
  Patrick P\'erez \\
  Kyutai
}

\begin{document}

\maketitle

\begin{abstract}
The Fr\'echet Inception Distance (FID) is the de facto arbiter of image generation, yet most papers report just a single number from a single trained model using a single sampling seed. How reproducible is that number if we retrain the model, or merely resample from it?   In this paper, we treat FID as a random variable on a two-axis panel of training and generation seeds, and measure its variance directly on several hundred SiT networks trained on class-conditional ImageNet $256{\times}256$. We report surprising findings: (a)~Retraining the model using the same recipe with a different seed moves FID $3.2{\times}$ more (in Inception feature space) than redrawing samples from a fixed network. (b)~That gap is driven by three factors: random initialisation, data ordering, and the per-step Gaussian noise of the flow-matching loss. (c)~Increasing compute or model size barely tightens the spread, holding the FID coefficient of variation (CoV) inside a $1$--$2\%$ band. (d)~Per-cell classifier-free-guidance tuning halves the spread but reshuffles which seeds work best, and a lucky training seed reaches the same FID with up to $2{\times}$ less compute than an unlucky one.  Based on these findings, we recommend a new FID evaluation protocol:   evaluate under per-cell optimal guidance, treat any FID gap below the empirically measured $\approx\!1.3\%$ CoV as inconclusive, and report an error bar over several training seeds rather than a single FID number.
Project page: \href{https://kyutai.org/fid-lottery}{\url{https://kyutai.org/fid-lottery}}
\end{abstract}

\section{Introduction}

\begin{quote}
\raggedleft\itshape\small
``If the lottery is an intensification of chance, a periodic
infusion of chaos into the cosmos, would it not be desirable for
chance to intervene at all stages of the lottery and not merely
in the drawing?''
\par\smallskip
\upshape\textemdash\ Jorge Luis Borges, \emph{The Lottery in Babylon}
\end{quote}
\label{sec:intro}

\definecolor{cInk}    {HTML}{2A2620}   
\definecolor{cInkSoft}{HTML}{6B645B}   
\definecolor{cRail}   {HTML}{B8AFA2}   
\definecolor{cTrainBg}{HTML}{EEF3F7}   
\definecolor{cGenBg}  {HTML}{FBF5EC}   
\definecolor{cFrame}  {HTML}{D9CFC0}   
\definecolor{palSteel}    {HTML}{8C8FA8} 
\definecolor{palSteelDark}{HTML}{5B5E78}

\newcommand{\diepips}[2]{%
  \def\uo{0.255}\def\ur{0.083}
  \ifcase#2\relax
  \or \fill[#1] (0.5,0.5) circle(\ur);
  \or \fill[#1] (0.5-\uo,0.5+\uo) circle(\ur) (0.5+\uo,0.5-\uo) circle(\ur);
  \or \fill[#1] (0.5-\uo,0.5+\uo) circle(\ur) (0.5,0.5) circle(\ur) (0.5+\uo,0.5-\uo) circle(\ur);
  \or \fill[#1] (0.5-\uo,0.5+\uo) circle(\ur) (0.5+\uo,0.5+\uo) circle(\ur)
                 (0.5-\uo,0.5-\uo) circle(\ur) (0.5+\uo,0.5-\uo) circle(\ur);
  \or \fill[#1] (0.5-\uo,0.5+\uo) circle(\ur) (0.5+\uo,0.5+\uo) circle(\ur) (0.5,0.5) circle(\ur)
                 (0.5-\uo,0.5-\uo) circle(\ur) (0.5+\uo,0.5-\uo) circle(\ur);
  \or \fill[#1] (0.5-\uo,0.5+\uo) circle(\ur) (0.5+\uo,0.5+\uo) circle(\ur)
                 (0.5-\uo,0.5) circle(\ur) (0.5+\uo,0.5) circle(\ur)
                 (0.5-\uo,0.5-\uo) circle(\ur) (0.5+\uo,0.5-\uo) circle(\ur);
  \fi
}

\newcommand{\onedie}[8]{%
  \pgfmathsetmacro{\ds}{#4}%
  \pgfmathsetmacro{\dz}{0.46*#4}
  \pgfmathsetmacro{\ccx}{#1+0.5*#4+0.5*\dz}
  \pgfmathsetmacro{\ccy}{#2+0.5*#4+0.5*\dz}%
  \begin{scope}[rotate around={#5:(\ccx,\ccy)}]
    \filldraw[fill=#3!58!black, draw=#3!72!black, line width=0.6pt, line join=round]
      (#1+\ds,#2) -- ++(\dz,\dz) -- ++(0,\ds) -- ++(-\dz,-\dz) -- cycle;
    \begin{scope}[cm={\dz,\dz,0,\ds,(#1+\ds,#2)}]\diepips{#3!22!white}{#8}\end{scope}
    \filldraw[fill=#3!36, draw=#3!72!black, line width=0.6pt, line join=round]
      (#1,#2+\ds) -- ++(\dz,\dz) -- ++(\ds,0) -- ++(-\dz,-\dz) -- cycle;
    \begin{scope}[cm={\ds,0,\dz,\dz,(#1,#2+\ds)}]\diepips{#3!75!black}{#7}\end{scope}
    \filldraw[fill=white, draw=#3!72!black, line width=0.8pt, line join=round]
      (#1,#2) rectangle (#1+\ds,#2+\ds);
    \begin{scope}[cm={\ds,0,0,\ds,(#1,#2)}]\diepips{#3!75!black}{#6}\end{scope}
  \end{scope}
}

\newcommand{\dietwo}[3]{%
  \onedie{#1-0.73}{#2+0.03}{#3}{0.30}{16}{5}{1}{3}
  \onedie{#1-0.43}{#2-0.15}{#3}{0.28}{-13}{2}{4}{6}
}

\newcommand{\gpucard}[4]{%
  \pgfmathsetmacro{\gw}{#3}\pgfmathsetmacro{\gh}{0.56*#3}%
  \fill[#4!58!black, rounded corners=0.4pt]
    (#1-0.5*\gw,#2-0.5*\gh-0.11*\gw) rectangle (#1+0.5*\gw,#2-0.5*\gh+0.05*\gw);
  \fill[#4!58!black] (#1+0.30*\gw,#2+0.5*\gh) rectangle (#1+0.44*\gw,#2+0.5*\gh+0.07*\gw);
  \filldraw[fill=#4!22, draw=#4!72!black, line width=0.6pt, rounded corners=1.4pt]
    (#1-0.5*\gw,#2-0.5*\gh) rectangle (#1+0.5*\gw,#2+0.5*\gh);
  \foreach \fx in {-0.235,0.235}{%
    \filldraw[fill=#4!40, draw=#4!72!black, line width=0.4pt] (#1+\fx*\gw,#2) circle (0.185*\gw);
    \foreach \a in {30,150,270}{%
      \draw[#4!72!black, line width=0.32pt] (#1+\fx*\gw,#2) -- ++(\a:0.16*\gw);}
    \fill[#4!72!black] (#1+\fx*\gw,#2) circle (0.05*\gw);}
}

\newcommand{\photoicon}[4]{%
  \begin{scope}
    \clip[rounded corners=1.2pt] (#1-#3,#2-#3) rectangle (#1+#3,#2+#3);
    \fill[#4!28] (#1-#3,#2-#3) rectangle (#1+#3,#2+#3);
    \fill[palAmber] ($(#1+0.42*#3,#2+0.44*#3)$) circle (0.26*#3);
    \fill[#4!58!black] (#1-#3,#2-#3) -- ($(#1-0.18*#3,#2+0.30*#3)$) -- ($(#1+0.34*#3,#2-#3)$) -- cycle;
    \fill[#4!72!black] ($(#1-0.06*#3,#2-#3)$) -- ($(#1+0.52*#3,#2+0.40*#3)$) -- (#1+#3,#2-#3) -- cycle;
  \end{scope}
  \draw[rounded corners=1.2pt, black!35, line width=0.4pt]
    (#1-#3,#2-#3) rectangle (#1+#3,#2+#3);
}

\newcommand{\noisetile}[4]{%
  \pgfmathsetseed{#4}%
  \pgfmathsetmacro{\stp}{2*#3/16}%
  \begin{scope}
    \clip[rounded corners=1.5pt] (#1-#3,#2-#3) rectangle (#1+#3,#2+#3);
    \foreach \i in {0,...,15}{%
      \foreach \j in {0,...,15}{%
        \pgfmathtruncatemacro{\g}{100*rnd}%
        \fill[black!\g] (#1-#3+\i*\stp,#2-#3+\j*\stp) rectangle ++(\stp,\stp);%
      }%
    }%
  \end{scope}
  \draw[rounded corners=1.5pt, black!45, line width=0.5pt]
    (#1-#3,#2-#3) rectangle (#1+#3,#2+#3);
}

\newcommand{\sampletile}[4]{%
  \pgfmathsetmacro{\sz}{2*#3}%
  \begin{scope}
    \clip[rounded corners=1.5pt] (#1-#3,#2-#3) rectangle (#1+#3,#2+#3);
    \node[inner sep=0] at (#1,#2)
      {\includegraphics[width=\sz cm,height=\sz cm]{#4}};
  \end{scope}
  \draw[rounded corners=1.5pt, black!45, line width=0.5pt]
    (#1-#3,#2-#3) rectangle (#1+#3,#2+#3);
}

\newcommand{\neticon}[4]{%
  \def\lA{#1-0.42}\def\lB{#1}\def\lC{#1+0.42}%
  \ifnum#3=1
    \def\nodefill{#4!30}\def\nodedraw{#4!70!black}%
    \foreach \ay in {0.32,0,-0.32}{\foreach \by in {0.42,0.14,-0.14,-0.42}{%
      \draw[#4!55!black, line width=0.45pt] (\lA,#2+\ay) -- (\lB,#2+\by);}}
    \foreach \ay in {0.42,0.14,-0.14,-0.42}{\foreach \by in {0.32,0,-0.32}{%
      \draw[#4!55!black, line width=0.45pt] (\lB,#2+\ay) -- (\lC,#2+\by);}}
  \else
    \def\nodefill{white}\def\nodedraw{black!45}%
    \pgfmathsetseed{91}%
    \foreach \ay in {0.32,0,-0.32}{\foreach \by in {0.42,0.14,-0.14,-0.42}{%
      \pgfmathsetmacro{\op}{20+60*rnd}%
      \draw[black!\op, line width=0.4pt] (\lA,#2+\ay) -- (\lB,#2+\by);}}
    \foreach \ay in {0.42,0.14,-0.14,-0.42}{\foreach \by in {0.32,0,-0.32}{%
      \pgfmathsetmacro{\op}{20+60*rnd}%
      \draw[black!\op, line width=0.4pt] (\lB,#2+\ay) -- (\lC,#2+\by);}}
  \fi
  \foreach \y in {0.32,0,-0.32}{%
    \filldraw[fill=\nodefill, draw=\nodedraw, line width=0.5pt] (\lA,#2+\y) circle (0.062);}
  \foreach \y in {0.42,0.14,-0.14,-0.42}{%
    \filldraw[fill=\nodefill, draw=\nodedraw, line width=0.5pt] (\lB,#2+\y) circle (0.062);}
  \foreach \y in {0.32,0,-0.32}{%
    \filldraw[fill=\nodefill, draw=\nodedraw, line width=0.5pt] (\lC,#2+\y) circle (0.062);}
}

\newcommand{\stagebox}[5]{%
  \fill[black!11, rounded corners=3.5pt]
    (#1-#3+0.06,#2-#4-0.07) rectangle (#1+#3+0.06,#2+#4-0.07);
  \filldraw[fill=white, draw=cFrame, line width=0.8pt, rounded corners=3.5pt]
    (#1-#3,#2-#4) rectangle (#1+#3,#2+#4);
  \fill[#5!85!white, rounded corners=2pt]
    (#1-#3+0.08,#2+#4-0.13) rectangle (#1+#3-0.08,#2+#4-0.05);
}

\newcommand{\srcname}[4]{%
  \node[font=\flsf\bfseries\scriptsize, #3!72!black, anchor=base] at (#1,#2) {#4};}
\newcommand{\srcnum}[4]{%
  \filldraw[fill=#3!85!black, draw=white, line width=0.7pt] (#1,#2) circle (0.12);
  \node[font=\flsf\bfseries\scriptsize, white] at (#1,#2) {#4};}
\newcommand{\sidephrase}[3]{%
  \node[anchor=west, align=left, font=\flsf\scriptsize, cInkSoft] at (#1,#2) {#3};}

\newcommand{\flowarrow}[3]{
  \draw[-{Stealth[length=6.5pt,width=6.5pt]}, cRail, line width=1.8pt]
    (#1,#3) -- (#2,#3);}

\newcommand{\capnum}[2]{%
  \begin{tikzpicture}[baseline=-0.5ex, external/export=false]
    \node[circle, fill=#1!85!black, text=white, inner sep=0pt, minimum size=1.5ex,
          font=\fontsize{5}{5}\selectfont\bfseries] {#2};
  \end{tikzpicture}}

\begin{figure}[t]
  \centering
  \tikzsetnextfilename{output-pipeline}%
  \resizebox{\linewidth}{!}{%
  \begin{tikzpicture}
    \useasboundingbox (0,1.95) rectangle (17.95,6.55);

    \filldraw[fill=cTrainBg, draw=palSlate!45, line width=0.8pt, rounded corners=7pt]
      (0.20,2.10) rectangle (11.725,6.25);
    \filldraw[fill=cGenBg, draw=palCoral!55, line width=0.8pt, rounded corners=7pt]
      (13.70,2.10) rectangle (17.75,6.25);

    \node[anchor=west, font=\flsf\bfseries\footnotesize, white,
          fill=palSlateDark, rounded corners=2.5pt, inner xsep=6pt, inner ysep=3pt]
          at (0.40,5.96) {TRAINING LOTTERY};

    \flowarrow{2.72}{3.13}{3.25}
    \flowarrow{5.47}{5.88}{3.25}
    \flowarrow{8.22}{8.63}{3.25}

    \srcname{1.55}{5.35}{palMint}{random init}
    \sidephrase{0.40}{4.66}{starting\\weights}
    \dietwo{2.70}{4.50}{palMint}
    \srcnum{1.92}{4.98}{palMint}{1}
    \stagebox{1.55}{3.25}{1.15}{0.92}{palMint}
    \begin{scope}[shift={(1.55,3.25)}, scale=1.45, shift={(-1.55,-3.25)}]
      \neticon{1.55}{3.25}{0}{palMint}
    \end{scope}

    \srcname{4.30}{5.35}{palMauve}{data order}
    \sidephrase{3.15}{4.66}{reshuffled\\each epoch}
    \dietwo{5.45}{4.50}{palMauve}
    \srcnum{4.67}{4.98}{palMauve}{2}
    \stagebox{4.30}{3.25}{1.15}{0.92}{palMauve}
    \photoicon{3.98}{3.42}{0.32}{palSlate}
    \photoicon{4.30}{3.20}{0.32}{palSage}
    \photoicon{4.62}{3.42}{0.32}{palCoral}
    \draw[-{Stealth[length=2.8pt]}, cInkSoft, line width=0.7pt]
      (4.14,2.655) to[bend left=45] (4.46,2.655);
    \draw[-{Stealth[length=2.8pt]}, cInkSoft, line width=0.7pt]
      (4.46,2.555) to[bend left=45] (4.14,2.555);

    \srcname{7.05}{5.35}{palPeach}{training noise}
    \sidephrase{5.90}{4.66}{fresh noise\\every step}
    \dietwo{8.20}{4.50}{palPeach}
    \srcnum{7.42}{4.98}{palPeach}{3}
    \stagebox{7.05}{3.25}{1.15}{0.92}{palPeach}
    \photoicon{6.55}{3.34}{0.40}{palSage}
    \node[font=\scriptsize, palPeach!72!black] at (7.05,3.34) {$\boldsymbol{+}$};
    \noisetile{7.55}{3.34}{0.40}{77}
    \node[font=\flsf\scriptsize, cInkSoft] at (7.05,2.55) {$\circlearrowright$ every step};

    \srcname{9.80}{5.35}{palSteel}{hardware noise}
    \sidephrase{8.65}{4.66}{bitwise drift\\across GPUs}
    \dietwo{10.95}{4.50}{palSteel}
    \srcnum{10.17}{4.98}{palSteel}{4}
    \stagebox{9.80}{3.25}{1.15}{0.92}{palSteel}
    \begin{scope}[palSteelDark, line width=0.55pt, line cap=round]
      \draw (9.33,3.56) -- (10.27,3.56);
      \draw (9.33,2.94) -- (10.27,2.94);
      \draw (9.33,3.56) -- (9.33,2.94);
      \draw (10.27,3.56) -- (10.27,2.94);
      \draw (9.33,3.56) -- (10.27,2.94);
      \draw (10.27,3.56) -- (9.33,2.94);
    \end{scope}
    \gpucard{9.33}{3.56}{0.66}{palSteel}
    \gpucard{10.27}{3.56}{0.66}{palSteel}
    \gpucard{9.33}{2.94}{0.66}{palSteel}
    \gpucard{10.27}{2.94}{0.66}{palSteel}

    \node[font=\flsf\bfseries\footnotesize, palSageDark, align=center] at (12.7125,4.60) {trained\\network};
    \fill[black!11, rounded corners=3.5pt] (11.9725,2.26) rectangle (13.5725,4.10);
    \filldraw[fill=palSage!16, draw=palSageDark!70, line width=1pt, rounded corners=3.5pt]
      (11.9125,2.33) rectangle (13.5125,4.17);
    \begin{scope}[shift={(12.7125,3.25)}, scale=1.45, shift={(-12.7125,-3.25)}]
      \neticon{12.7125}{3.25}{1}{palSage}
    \end{scope}

    \flowarrow{10.97}{11.90}{3.25}
    \flowarrow{13.53}{14.46}{3.25}

    \node[anchor=west, font=\flsf\bfseries\footnotesize, white,
          fill=palCoralDark, rounded corners=2.5pt, inner xsep=6pt, inner ysep=3pt]
          at (13.90,5.96) {GENERATION LOTTERY};

    \srcname{15.725}{5.35}{palCoral}{initial noise}
    \sidephrase{14.475}{4.66}{a fresh draw\\for each image}
    \dietwo{16.975}{4.50}{palCoral}
    \srcnum{16.195}{4.98}{palCoral}{5}
    \stagebox{15.725}{3.25}{1.25}{0.92}{palCoral}
    \noisetile{15.20}{3.34}{0.40}{31}
    \draw[-{Stealth[length=5pt]}, cInkSoft, line width=1pt] (15.63,3.34) -- (15.82,3.34);
    \sampletile{16.25}{3.34}{0.40}{figures/fig_data/fid_hero/flamingo0}
    \node[font=\flsf\scriptsize, cInkSoft] at (15.725,2.55) {$\circlearrowright$ for every sample};
  \end{tikzpicture}}
  \vspace{-.2in}
  \caption[All sources of randomness behind a generative model]{%
  \textbf{All sources of randomness behind a generative model.}
  Training and then sampling a generative model is a chain of pseudo-random
  draws. 
  They fall into two lotteries. The
  \emph{training lottery} (left) is drawn from four sources: the random weight initialisation
  \capnum{palMint}{1}, the order in which examples are visited
  \capnum{palMauve}{2}, the fresh Gaussian noise the flow-matching loss injects
  at every gradient step \capnum{palPeach}{3}, and the bitwise non-determinism
  of multi-GPU execution \capnum{palSteel}{4}.  This draws a single trained network out of the many networks the same recipe could have produced.   The \emph{generation lottery} (right)  then draws a fresh initial noise $x_T\!\sim\!\mathcal{N}(0,I)$ for every
  sampled image \capnum{palCoral}{5}. Common practice accounts for only
  \capnum{palCoral}{5}, reporting error bars over sampling seeds on one fixed
  network. In this work we study all five. }
  \label{fig:lottery-machine}
\end{figure}

Open any recent image generation paper and the central claim usually
rests on a single number, the Fr\'echet Inception Distance (FID).
FID is the closest criterium image generation has to an arbiter: a half-unit shift
reorders the leaderboard. A decade of recipes have been justified
by single-FID-unit gains, and budgets in the low millions of
GPU-hours hinge on which architecture lands a few decimals lower.
But behind every reported FID number sits a chain of pseudo-random draws
(parameter initialisation, minibatch order, per-step Gaussian noise
injected by the training loss, hardware stochasticity and \nicolas{the initial noise drawn at
sampling time}), any of which could have produced a potentially
different score had the seed been different. Conventional wisdom considers this variance in FID to be negligible, especially for well-trained models. In this paper we show that the FID reproducibility gap is real and a serious concern.  

\nicolas{Each time one trains a generative model and reports its
FID, two lotteries are played (\autoref{fig:lottery-machine}). The \emph{training lottery} runs
once during training: it draws an initialisation, a data
ordering, and the per-step noise that the loss injects at every
gradient step, and what comes out is one trained network among
many that could have been produced by different seeds. The
\emph{generation lottery} runs on the trained network: one draws
an initial noise $x_T\!\sim\!\mathcal{N}(0,I)$ to seed the
sampler, generates a sample set, and scores it. Practitioners
have learned to mitigate the second lottery by redrawing the
initial noise across several seeds and reporting an error bar
or an averaged FID
score~\citep{chong2020effectively,parmar2022aliased}. However, no
amount of resampling on a single trained network says anything
about where a re-trained network run would have landed. The
training variability stays hidden behind the one ticket we actually
drew. Diffusion makes the problem worse: the
flow-matching~\citep{lipman2023flowmatching} or
score-matching~\citep{song2021scoresde} loss redraws a fresh
Gaussian $\varepsilon\!\sim\!\mathcal{N}(0,I)$ at every gradient
step, so the training noise never settles. It is a permanent
random injection that an independent training run would resolve
differently, not transient noise that longer training averages
out.} Scale
offers no automatic remedy. Neural scaling
laws~\citep{kaplan2020scaling,hoffmann2022chinchilla} characterise
how the mean loss falls with parameters and tokens, leaving the
seed-induced \emph{spread around} that mean
unspecified. \citet{zhang2024emergence} recently showed that
independently-trained diffusion networks converge to nearly the same
noise-to-image mapping. But does this also hold for the FID metric computed over a set of generated images?


\FloatBarrier
\begin{figure}[t]
  \centering
  \pgfplotsset{width=\linewidth, height=6.6cm}
  \begin{tikzpicture}
    \violinsetoptions[scaled]{
      xlabel={Training seed (sorted by mean Inception FID)},
      ylabel={Inception FID},
      xmin=0.0,
      xmax=26.0,
      ymin=33.30,
      ymax=35.95,
      xtick={1,5,10,15,20,25},
      ytick={33.5,34.0,34.5,35.0,35.5},
      ymajorgrids,
    }

    \violinplotwholefile[%
      col sep=tab,
      primary color=palCoralDark,
      secondary color=palCoral,
      indexes={s01,s02,s03,s04,s05,s06,s07,s08,s09,s10,s11,s12,s13,s14,s15,s16,s17,s18,s19,s20,s21,s22,s23,s24,s25},
      labels={,,,,,,,,,,,,,,,,,,,,,,,,},
      spacing=1.0,
    ]{figures/fig_data/fig1a_violins_wholefile.tsv}

    \begin{axis}[
      xmin=0.0, xmax=26.0,
      ymin=33.30, ymax=35.95,
      axis line style={draw=none},
      tick style={draw=none},
      xticklabels={,,}, yticklabels={,,},
      xmajorticks=false, ymajorticks=false,
      axis on top,
      clip=false,
    ]
      \draw[dashed, color=black!28, line width=0.5pt]
        (axis cs:0.0,34.744) -- (axis cs:26.0,34.744);
      \node[anchor=west, font=\scriptsize, color=black!50,
            fill=white, fill opacity=0.85, text opacity=1, inner sep=1pt]
            at (axis cs:0.15,34.744) {grand mean $34.74$};

      \addplot+[
        only marks, mark=*,
        mark size=0.8pt,
        mark options={fill=black!50, draw=black!50, line width=0pt, fill opacity=0.65},
      ] table[x=xj, y=fid_inc] {figures/fig_data/fig1a_strip.tsv};

      \addplot+[
        only marks, mark=-, mark size=4.5pt,
        mark options={draw=black!88, line width=1.2pt},
      ] table[x=x, y=mean_inc] {figures/fig_data/fig1a_means.tsv};

      \addplot+[
        only marks, mark=*,
        mark size=4.6pt,
        mark options={fill=palSage, draw=palSageDark, line width=1.2pt},
      ] coordinates {(0.908, 33.586)};
      \node[
        anchor=west,
        font=\footnotesize\bfseries, color=palSageDark,
        fill=white, fill opacity=0.92, text opacity=1,
        inner sep=2.5pt, draw=palSageDark, line width=0.4pt,
        rounded corners=2pt,
      ] (bestlbl) at (axis cs:4.0, 33.55) {best $\mathbf{33.59}$};
      \draw[->, >=stealth, palSageDark, line width=0.5pt]
        (bestlbl.west) -- (axis cs:1.05, 33.586);

      \addplot+[
        only marks, mark=*,
        mark size=4.6pt,
        mark options={fill=palCoralDark, draw=black!75, line width=1.2pt},
      ] coordinates {(25.203, 35.690)};
      \node[
        anchor=east,
        font=\footnotesize\bfseries, color=palCoralDark,
        fill=white, fill opacity=0.92, text opacity=1,
        inner sep=2.5pt, draw=palCoralDark, line width=0.4pt,
        rounded corners=2pt,
      ] (worstlbl) at (axis cs:22.0, 35.78) {worst $\mathbf{35.69}$};
      \draw[->, >=stealth, palCoralDark, line width=0.5pt]
        (worstlbl.east) -- (axis cs:25.05, 35.690);
    \end{axis}
  \end{tikzpicture}
    \vspace{-.1in}
  \caption{\textbf{The FID lottery in SiT-B/2 at 400k steps.}
  \nicolas{Each violin is one of $25$ independently trained SiT-B/2
  models, sorted by per-seed mean Inception FID. Small dots are the
  $250$ individual sampling-seed evaluations and short black ticks are
  per-seed means. The two highlighted markers pick out the single best
  ($\mathbf{33.59}$) and worst ($\mathbf{35.69}$) FID across the panel,
  a $\mathbf{2.10}$-point gap produced purely by changing seeds. Each
  violin is short ($\sigma_{\text{within}}\!\approx\!0.14$, evaluation
  lottery). The per-seed means stagger over a $3{\times}$ wider range
  ($\sigma_{\text{between}}\!=\!0.44$, training lottery).}}
  \label{fig:overview}
\end{figure}
\nicolas{The lotteries defines two axis of FID variance: a
\emph{training axis} of $N$ independent training runs and a
\emph{generation axis} of $K$ sampling seeds per run. To measure
both, we score the resulting $N{\times}K$ panel of FID
evaluations.} \autoref{fig:overview}
renders the panel for a converged SiT-B/2: every violin is one
trained network, every dot is one FID evaluation, and the spread
along the training axis already overshoots the spread along the
generation axis at a glance. \nicolas{On this panel} we first decompose
the training lottery into its independent random sources, then
sweep four practitioner-controlled axes (classifier-free
guidance~\citep{ho2022cfg}, compute, model size, and learning
rate, transferred across widths via $\mu$P~\citep{yang2022mup}, a
parameterisation that makes the optimal learning rate
width-invariant) to test whether any of them tightens it.
\nicolas{Across several hundred SiT networks from S through XL on
ImageNet $256{\times}256$, the training axis dominates the
evaluation axis at every scale we probe and none of the four knobs
closes the gap. A single-seed Inception FID therefore sits on a
noise floor that "SoTA" improvements regularly fall below.}

\paragraph{Paper's Contributions:}
\begin{itemize}
\item \textbf{A measurement of the FID lottery on modern
diffusion.} Treating FID as a random variable over training and
sampling seeds, we measure its spread across several hundred SiT
networks and find that retraining moves FID $3.2{\times}$ more than
resampling does. The relative noise floor stays remarkably stable
across model sizes and training budgets, which we turn into a
concrete calibration target for single-seed FID claims
(\autoref{sec:results-sampling}, \autoref{sec:results-scaling}).
\item \textbf{An examination of different sources of randomness.}
We separate the contributions of initialisation, data order, hardware noise, and
the per-step Gaussian noise of the flow-matching loss. (\autoref{sec:results-decomposition}).
\item \textbf{GS-FID, a per-cell golden-section classifier-free guidance (CFG) protocol.}
Tuning guidance individually for every (training, sampling) seed
pair tightens the noise floor, but at substantial evaluation cost
and reshuffling which seeds rank best
(\autoref{sec:results-guidance}).
\item \nicolas{\textbf{The luck of the draw.} A lucky training seed
reaches the same FID with up to $2{\times}$ less compute than an
unlucky one (\autoref{sec:results-speedup},
\autoref{fig:lucky-speedup}).}
\end{itemize}

\section{Related Work}
\label{sec:related}

\paragraph{FID and its limitations.}
The Fr\'echet Inception Distance \citep{heusel2017gans}, which compares
Gaussian moments of Inception-V3 \citep{szegedy2016inceptionv3} features, has
displaced the Inception Score \citep{salimans2016improved} despite well-known
fragilities. \citet{barratt2018note} flag systematic Inception-Score bias.
\citet{chong2020effectively} prove finite-sample FID is a model-dependent
biased estimator whose ranking can flip from sampling noise alone, and
propose FID$_\infty$. \citet{parmar2022aliased} show that aliased resizing
and JPEG compression shift FID by amounts comparable to claimed
state-of-the-art gains (Clean-FID). \citet{kynkaanniemi2023role} show that
class-balance matching against ImageNet \citep{deng2009imagenet} histograms
lowers FID without changing perceived quality. \citet{stein2023exposing}
report that FID systematically penalises diffusion models relative to human
raters. Replacements include KID \citep{binkowski2018demystifying},
precision/recall
\citep{sajjadi2018assessing,kynkaanniemi2019improved,naeem2020reliable},
self-supervised features \citep{morozov2021self,oquab2024dinov2}, and CMMD
\citep{jayasumana2024rethinking}, with \citet{stein2023exposing} recommending
DINOv2 and \citet{wu2025pragmatic} finding small FID gaps uncorrelated with
downstream utility. FID nonetheless dominates
\citep{borji2019pros,borji2022pros,lucic2018gans} on a decade of comparable
numbers and the rank-consistency argument of \citet{chong2020effectively}.

\myparagraph{Reproducibility and statistical methodology.}
Empirical machine learning has long been under scrutiny, prefigured by the
satire of \citet{laloudouana2003data}. \citet{henderson2018deep} show that
nominally identical RL agents diverge across seeds. NLP
\citep{reimers2017reporting,mosbach2021stability,dodge2020fine,
crane2018questionable,madhyastha2019stability} and language modelling
\citep{melis2018state} reach the same verdict. Systemic critiques
\citep{sculley2018winners,gundersen2018state,pineau2021improving,raff2019step},
echoed by the {Pascal VOC} retrospective \citep{everingham2014pascal} on a
decade of competition methodology, call for stronger reporting. \citet{dodge2019showyourwork} argue for
disclosed hyperparameter-search budgets.
\citet{bouthillier2019unreproducible,bouthillier2021accounting} decompose
total variance into algorithmic, data and implementation sources.
\citet{pham2020problems,summers2021nondeterminism,picard2021torch} document
fluctuations large enough to invert rankings even in nominally deterministic
pipelines. Finally, \citet{recht2019imagenet,damour2022underspecification} show
off-distribution disagreement among equivalent models. The classical
comparison toolkit is under-applied in generative modelling: paired tests
and $5\times2$ cross-validation \citep{dietterich1998tests}, Friedman/Nemenyi
\citep{demsar2006comparisons}, Bayesian variants \citep{benavoli2017bayesian},
resampling \citep{efron1979bootstrap,diciccio1996bootstrap}, Welch's $t$-test
\citep{welch1947ttest}, multiple-comparison corrections
\citep{dunn1961multiple}, and rank statistics
\citep{spearman1904rank,kendall1939w}. NLP imports these
\citep{dror2018hitchhiker,dodge2019showyourwork}. Closest to ours,
\citet{banerjee2024measuring} propose a Kolmogorov--Smirnov test for
seed-induced model variability, and \citet{bench2025quantifying} use Monte
Carlo dropout in the feature extractor to obtain a distribution over FID-like
scores.

\begin{figure*}[!tb]
  \centering
  \begin{tikzpicture}
    \begin{groupplot}[
        group style={
          group size=2 by 1,
          horizontal sep=1.7cm,
        },
        fidlottery,
        width=0.46\textwidth,
        height=5.6cm,
      ]

      \nextgroupplot[
        title={(a) Per-seed mean FID by condition},
        xlabel={Inception FID (per-seed mean)},
        xmin=33.7, xmax=35.55,
        ymin=-0.7, ymax=4.7,
        ytick={0,1,2,3,4},
        yticklabels={vary all, vary noise, vary init, vary data, same seed},
        y dir=reverse,
        yticklabel style={font=\footnotesize},
        boxplot/draw direction=x,
      ]
        \addplot+[
          boxplot prepared={
            draw position=0,
            lower whisker=33.7517, lower quartile=34.4687,
            median=34.8405, upper quartile=35.0304, upper whisker=35.4163,
            box extend=0.42, whisker extend=0.18,
          },
          fill=palGray!75, draw=black!55, line width=0.4pt,
        ] coordinates {};
        \addplot+[
          boxplot prepared={
            draw position=1,
            lower whisker=33.9927, lower quartile=34.3389,
            median=34.5087, upper quartile=34.7803, upper whisker=35.3210,
            box extend=0.42, whisker extend=0.18,
          },
          fill=palPeach!85, draw=palCoralDark, line width=0.4pt,
        ] coordinates {};
        \addplot+[
          boxplot prepared={
            draw position=2,
            lower whisker=34.2031, lower quartile=34.6320,
            median=34.7945, upper quartile=35.0078, upper whisker=35.2926,
            box extend=0.42, whisker extend=0.18,
          },
          fill=palMint!85, draw=palSageDark, line width=0.4pt,
        ] coordinates {};
        \addplot+[
          boxplot prepared={
            draw position=3,
            lower whisker=34.2181, lower quartile=34.2664,
            median=34.4228, upper quartile=34.5643, upper whisker=35.0351,
            box extend=0.42, whisker extend=0.18,
          },
          fill=palMauve!85, draw=palLavDark, line width=0.4pt,
        ] coordinates {};
        \addplot+[
          boxplot prepared={
            draw position=4,
            lower whisker=34.5485, lower quartile=34.6187,
            median=34.6354, upper quartile=34.6578, upper whisker=34.8086,
            box extend=0.42, whisker extend=0.18,
          },
          fill=palSlate!85, draw=palSlateDark, line width=0.4pt,
        ] coordinates {};

        \addplot+[
          only marks, mark=*, mark size=1.4pt,
          mark options={fill=black!75, draw=black!85, line width=0.2pt, fill opacity=0.7},
        ] table[col sep=tab, x=mean_inc, y=yj] {figures/fig_data/fig2a_strip_vary_all.tsv};
        \addplot+[
          only marks, mark=*, mark size=1.4pt,
          mark options={fill=palCoralDark, draw=palCoralDark, line width=0.2pt, fill opacity=0.7},
        ] table[col sep=tab, x=mean_inc, y=yj] {figures/fig_data/fig2a_strip_vary_noise.tsv};
        \addplot+[
          only marks, mark=*, mark size=1.4pt,
          mark options={fill=palSageDark, draw=palSageDark, line width=0.2pt, fill opacity=0.7},
        ] table[col sep=tab, x=mean_inc, y=yj] {figures/fig_data/fig2a_strip_vary_init.tsv};
        \addplot+[
          only marks, mark=*, mark size=1.4pt,
          mark options={fill=palLavDark, draw=palLavDark, line width=0.2pt, fill opacity=0.7},
          y filter/.expression={\pgfmathresult-1},
        ] table[col sep=tab, x=mean_inc, y=yj] {figures/fig_data/fig2a_strip_vary_data.tsv};
        \addplot+[
          only marks, mark=*, mark size=1.4pt,
          mark options={fill=palSlateDark, draw=palSlateDark, line width=0.2pt, fill opacity=0.7},
        ] table[col sep=tab, x=mean_inc, y=yj] {figures/fig_data/fig2a_strip_same_seed.tsv};

      \nextgroupplot[
        title={(b) Between-seed vs. within-seed $\boldsymbol{\sigma}$},
        ylabel={$\sigma$ (Inception FID)},
        ybar=1pt,
        bar width=6pt,
        ymin=0, ymax=0.62,
        symbolic x coords={vary all, vary noise, vary init, vary data, same seed},
        xtick=data,
        x tick label style={font=\footnotesize, rotate=20, anchor=north east},
        enlarge x limits=0.12,
        legend style={
          at={(0.99,0.97)}, anchor=north east,
          draw=black!15, fill=white, fill opacity=0.92, text opacity=1,
          font=\scriptsize, inner sep=2pt, rounded corners=1pt,
        },
        legend image post style={scale=0.7},
      ]
        \addplot+[
          fill=palCoral, draw=palCoralDark, line width=0.3pt,
        ] coordinates {
          (vary all,   0.4379)
          (vary noise, 0.3339)
          (vary init,  0.2936)
          (vary data,  0.2180)
          (same seed,  0.0473)
        };
        \addlegendentry{between-seed $\sigma$}

        \addplot+[
          fill=palSage, draw=palSageDark, line width=0.3pt,
        ] coordinates {
          (vary all,   0.1373)
          (vary noise, 0.1441)
          (vary init,  0.1503)
          (vary data,  0.1501)
          (same seed,  0.1193)
        };
        \addlegendentry{within-seed $\sigma$ (sampling)}

    \end{groupplot}
  \end{tikzpicture}
    \vspace{-.1in}
  \caption{\textbf{Variance decomposition of the training-seed lottery
  (SiT-B/2, 400k, no CFG).}
  \nicolas{(a)~Per-seed mean Inception FID under the three single-source
  conditions plus the fully-stochastic baseline (\emph{vary all}) and
  the \emph{same seed} control. Each dot is one training seed, boxes show
  the 25/50/75 percentiles. (b)~Between-seed $\sigma$ (coral) versus
  within-seed sampling $\sigma$ (sage) per condition. The four
  random-source conditions are ordered monotonically by between-seed $\sigma$,
  with sampling $\sigma$ flat across all four. The rightmost
  \emph{same seed} column is a control that fixes init, data order,
  \emph{and} training noise to identical values, leaving only the bitwise
  non-determinism of multi-GPU (DDP) execution: its between-seed $\sigma$
  collapses to $0.047$, below the within-seed sampling floor, even though
  the trained weights differ by ${\approx}5$--$6\%$ of their norm.
  Numerical non-determinism is thus not a meaningful source of FID
  variance.}}
  \label{fig:disentangle}
\end{figure*}

\myparagraph{Why runs differ.}
The lottery-ticket hypothesis \citep{frankle2019lottery,frankle2020linear}
and the loss-landscape view of \citet{fort2019deepensembles} argue that the stochastic gradient descent 
visits a discretely diverse set of basins, amplified by initialisation
\citep{glorot2010xavier,he2015kaiming}, batch order, and adaptive optimisers
\citep{kingma2015adam}. Architectures
\citep{he2016resnet,vaswani2017transformer,dosovitskiy2021vit} change basin
geometry but not multiplicity. \citet{nagarajan2019uniform} formalise why
these gaps are intrinsic, while
\citet{wenzel2020hyperensembles,jordan2023calibrated} exploit them for
uncertainty quantification. \citet{zhang2024emergence} show diffusion models
are unusually well-behaved at the function level. We add that near-identical
noise-to-image maps still yield percent-level FID fluctuations.

\myparagraph{Generative reproducibility, scaling, and seeds.}
Variance studies in generative modelling remain rare. \citet{lucic2018gans}
find equalised hyperparameter budgets erase most reported GAN gains.
\citet{degeorge2025imagenet} attack the data side with a redistributable
ImageNet-only text-to-image protocol. Scaling laws
\citep{hestness2017predictable,kaplan2020scaling,hoffmann2022chinchilla,
henighan2020scaling}, including for diffusion transformers
\citep{liang2024scalingdit,peebles2023dit}, encourage treating seed variance
as vanishing residual. We instead measure the variance that remains within a
scale. Diffusion models \citep{sohldickstein2015thermo,song2019ncsn,
ho2020ddpm,song2021ddim,song2021scoresde,nichol2021iddpm,dhariwal2021adm,
kingma2021vdm,karras2022edm,ho2022cfg,rombach2022ldm,lipman2023flowmatching,
liu2023rectified,karras2024edm2,ma2024sit,esser2024sd3,li2024mar} on ImageNet
\citep{deng2009imagenet}, FFHQ \citep{karras2019stylegan} and LAION-5B
\citep{schuhmann2022laion5b} dominate recent state of the art yet escape
systematic variance characterisation.
\citet{kadkhodaie2024generalization,zhang2024emergence} characterise
function-level reproducibility but not metric-level noise.
\citet{xu2024goodseed} demonstrate an inference-time ``seed lottery''
complementing our training-time variance. The closest precursors, \citet{chong2020effectively} on finite-sample FID bias and
\citet{bench2025quantifying} on feature-extractor uncertainty, vary
neither training seeds, architectures, nor checkpoints.

\section{Experimental Setup}

\label{sec:setup}

This section describes the experimental setting used for all experiments in 
\autoref{sec:results}: the $N\!\times\!K$ panel of FID evaluations,
the random number generators that populate it, and the 3 nested
statistics that sum it up.


\definecolor{fidGold} {HTML}{9C7C2F}   
\definecolor{fidInk}  {HTML}{2A2620}   
\definecolor{fidLo}   {rgb}{0.56,0.80,0.78}  
\definecolor{fidMid}  {rgb}{0.97,0.93,0.75}  
\definecolor{fidHi}   {rgb}{0.95,0.69,0.72}  

\newcommand{\fidtile}[3]{%
  \fill[black!22, rounded corners=2.5pt] (#1-0.80,#2-0.90) rectangle (#1+0.90,#2+0.80);
  \begin{scope}
    \clip[rounded corners=2.5pt] (#1-0.85,#2-0.85) rectangle (#1+0.85,#2+0.85);
    \node[inner sep=0] at (#1,#2)
      {\includegraphics[width=1.72cm,height=1.72cm]{figures/fig_data/fid_hero/#3.jpg}};
  \end{scope}
  \draw[rounded corners=2.5pt, fidGold, line width=0.8pt]
    (#1-0.85,#2-0.85) rectangle (#1+0.85,#2+0.85);
}

\begin{figure}[t]
  \centering
  \tikzsetnextfilename{output-howfid}%
  \resizebox{\linewidth}{!}{%
  \begin{tikzpicture}
    \useasboundingbox (-0.10,-0.90) rectangle (12.65,6.05);

    \foreach \cls/\ry in {macaw/5.05, flamingo/3.00, burger/0.95}{
      \foreach \cx/\j in {2.45/4, 4.29/3, 6.13/2, 7.97/1, 9.81/5, 11.65/0}{
        \fidtile{\cx}{\ry}{\cls\j}
      }
    }
    \foreach \name/\ry in {macaw/5.05, flamingo/3.00, burger/0.95}{
      \node[anchor=east, font=\flsf\footnotesize\bfseries, fidInk] at (1.48,\ry) {\name};
    }

    \shade[left color=fidHi, middle color=fidMid, right color=fidLo,
           rounded corners=1pt] (1.60,-0.52) rectangle (12.50,-0.32);
    \draw[fidGold!60, line width=0.5pt, rounded corners=1pt]
      (1.60,-0.52) rectangle (12.50,-0.32);
    \foreach \cx/\fid in {2.45/43.0, 4.29/23.3, 6.13/13.5, 7.97/6.8, 9.81/4.9, 11.65/3.6}{
      \draw[fidInk!55, line width=0.5pt] (\cx,-0.52) -- (\cx,-0.62);
      \node[font=\flsf\scriptsize, fidInk] at (\cx,-0.77) {\fid};
    }
    \node[anchor=west, font=\flsf\scriptsize\itshape, black!55] at (1.60,-0.20) {higher FID};
    \node[anchor=east, font=\flsf\scriptsize\itshape, black!55] at (12.50,-0.20) {lower FID};
  \end{tikzpicture}}
  \vspace{-.15in}
  \caption{\textbf{What does FID spectrum looks like?}
  Each row is one scene rendered by SiT-XL model whose Inception FID
  falls log-uniformly from $\mathbf{43}$ (left) to $\mathbf{3.6}$ (right),
  a $\mathbf{12\times}$ range: quality improves toward the right, as FID goes down.  
  FID is defined \emph{at the distribution level}. It's a Fr\'echet distance
  between Gaussians fit to the Inception features of the reference distribution (the ImageNet dataset) and
  $50{,}000$ generated images. It is a property of the whole
  \emph{set}, never of any single generated image. 
  For a fuller, set-level impression
  at each FID level, see the per-class galleries in
  \autoref{app:fid-visualization}.}
  \label{fig:how-fid}
\end{figure}

\myparagraph{Experimental Setting.}
\nicolas{All experiments train Scalable Interpolant Transformers
(SiT)~\citep{ma2024sit} at four widths (S/2, B/2, L/2, XL/2) on
class-conditional ImageNet
$256{\times}256$~\citep{deng2009imagenet} under conditional flow
matching~\citep{lipman2023flowmatching}. The loss redraws a Gaussian
$\varepsilon\!\sim\!\mathcal{N}(0,I)$ at every gradient step. In
contrast to the trained weights, this per-step noise never settles,
so we treat it as one of three training-time random number generators
(\autoref{sec:results-decomposition}). FID is computed in
Inception-V3 feature space~\citep{szegedy2016inceptionv3,heusel2017gans}
on one shared pipeline.
Because FID is a Fr\'echet distance between feature \emph{distributions}
rather than a per-image score, it can shift on differences too small to
perceive (\autoref{fig:how-fid}).
Sampling uses a fixed deterministic ODE solver and number of function
evaluations. Classifier-free guidance~\citep{ho2022cfg} is off by
default and only enabled in \autoref{sec:results-guidance}, where the
scale is selected per (training, sampling) seed pair by golden-section
search~\citep{kiefer1953sequential}. Per-experiment $N$, $K$, and
step counts are in \autoref{app:setup-detail}.}

\myparagraph{The two-axis panel.}
\nicolas{Every §4 experiment produces an $N\!\times\!K$ panel: $N$
independently trained models (\emph{training seeds}) and, for each,
$K$ generations under different \emph{sampling seeds}. A training
seed drives parameter initialisation, data-loader order, and the
per-step flow-matching noise. A sampling seed drives the initial
noise drawn at generation time. Panel sizes are stated per
subsection.}

\myparagraph{Notation.}
\nicolas{Three nested statistics summarise a panel.
$\sigma_{\text{within}}$, computed across the $K$ sampling evaluations
of one training seed and averaged over $N$ seeds, measures FID noise
for a fixed model (the \emph{generation lottery}).
$\sigma_{\text{between}}$, across the $N$ per-seed means measures
training-seed spread (the \emph{training lottery}). The coefficient
of variation $\mathrm{CoV}\!=\!\sigma/\mu$ (\%) is dimensionless and
comparable across panels whose absolute FID differs by an order of
magnitude (e.g.\ unguided vs.\ guided).}


\section{Experiments}
%
%

\label{sec:results}

\nicolas{Each subsection below answers one question about the FID
seed lottery on the panels of \autoref{sec:setup}. Two
further analyses on the converged SiT-B/2 panel appear in the
appendix: how the choice of summary statistic reshuffles
training-seed rankings (\autoref{sec:results-training}), and whether
``good'' init seeds transfer across (data, noise) pairings
(\autoref{sec:results-optimality}).}

\subsection{Training Variability Dominates Evaluation Variability}
\label{sec:results-sampling}

\begin{tldrbox}
The error bar you get from resampling a fixed model is the small
one. Retraining the same recipe moves FID $3.2{\times}$ more than
redrawing samples does, so most of the variance hides in the single
training run you happened to draw.
\end{tldrbox}

\myparagraph{The training lottery has a $3.2{\times}$ larger effect than the generation lottery.}
On the converged SiT-B/2 panel of \autoref{fig:overview} ($N\!=\!25$
training seeds, $K\!=\!10$ sampling seeds, $400$k steps, no CFG,
\autoref{app:setup-detail}) the asymmetry is visible at a glance.
Column-to-column spread (training-seed) overshoots within-column
spread (sampling-seed). The between-seed
$\sigma_{\text{between}}\!=\!0.438$ ($\mathrm{CoV}\!\approx\!1.3\%$)
is $3.2{\times}$ the within-seed
$\sigma_{\text{within}}\!=\!0.137$ ($\mathrm{CoV}\!\approx\!0.4\%$).
Per-seed mean Inception FIDs span $33.75\!\to\!35.42$ around a grand
mean of $34.74$. The variance a benchmark cares about lives in
\emph{which model} was trained, not in \emph{which samples} are drawn
from a fixed model.

\myparagraph{Sampling-seed \PP{confidence intervals} (CIs) misreport the spread.}
\nicolas{A $95\%$ Student's-$t$ interval of the grand mean from the
$25$ per-seed means is $34.74\!\pm\!0.18$ Inception FID, with a
one-$\sigma$ distance of $0.44$ FID. This is already larger than the
headline gain claimed in many recent papers. Multiplying the sampling
budget by ten on one run shrinks within-seed jitter by
$\sqrt{10}\!\approx\!3.2$ but leaves the $0.44$-wide between-seed
envelope untouched. The within-seed $\approx\!0.4\%$ CoV is
homoscedastic across the $25$ training seeds, so a single sampling-seed
FID carries $\approx\!0.14$ units of unrepeatable jitter even with the
model fixed. Only adding training seeds reduces the dominant source
of FID variance.}

\begin{figure*}[!tb]
  \centering
  \begin{minipage}[t]{0.66\textwidth}
    \pgfplotsset{width=\linewidth, height=4.6cm}
    \begin{tikzpicture}
      \violinsetoptions[scaled]{
        title={(a) Guided FID across 25 training seeds},
        xlabel={Training seed (sorted by per-seed mean guided FID)},
        ylabel={Guided Inception FID},
        xmin=0.0,
        xmax=26.0,
        ymin=7.18,
        ymax=7.66,
        xtick={1,5,10,15,20,25},
        ytick={7.2,7.3,7.4,7.5,7.6},
        ymajorgrids,
      }
      \violinplotwholefile[%
        col sep=tab,
        primary color=palLavDark,
        secondary color=palLavender,
        indexes={s01,s02,s03,s04,s05,s06,s07,s08,s09,s10,s11,s12,s13,s14,s15,s16,s17,s18,s19,s20,s21,s22,s23,s24,s25},
        labels={,,,,,,,,,,,,,,,,,,,,,,,,},
        spacing=1.0,
      ]{figures/fig_data/fig4_violins_wholefile.tsv}

      \begin{axis}[
        xmin=0.0, xmax=26.0,
        ymin=7.18, ymax=7.66,
        axis line style={draw=none},
        tick style={draw=none},
        xticklabels={,,}, yticklabels={,,},
        xmajorticks=false, ymajorticks=false,
        axis on top,
        clip=false,
      ]
        \draw[dashed, color=black!28, line width=0.5pt]
          (axis cs:0.0,7.418) -- (axis cs:26.0,7.418);
        \node[anchor=west, font=\scriptsize, color=black!50,
              fill=white, fill opacity=0.85, text opacity=1, inner sep=1pt]
            at (axis cs:0.15,7.418) {grand mean $7.42$};

        \addplot+[
          only marks, mark=*,
          mark size=0.8pt,
          mark options={fill=black!50, draw=black!50, line width=0pt, fill opacity=0.65},
        ] table[x=xj, y=fid] {figures/fig_data/fig4_strip.tsv};

        \addplot+[
          only marks, mark=-, mark size=4.5pt,
          mark options={draw=black!88, line width=1.2pt},
        ] table[x=x, y=mean_fid] {figures/fig_data/fig4_means.tsv};
      \end{axis}
    \end{tikzpicture}
  \end{minipage}\hfill
  \begin{minipage}[t]{0.31\textwidth}
    \begin{tikzpicture}
      \begin{axis}[
        title={(b) Unguided $\to$ guided rank},
        ylabel={Rank (1 = best)},
        ymin=0.4, ymax=25.6,
        y dir=reverse,
        ytick={1,5,10,15,20,25},
        xmin=0.7, xmax=2.3,
        xtick={1,2},
        xticklabels={Unguided, Guided ($\mathrm{cfg}^{\star}$)},
        x tick label style={font=\footnotesize},
        width=\linewidth, height=4.6cm,
      ]
        \addplot+[
          mark=*, mark size=1.4pt, line width=0.5pt,
          color=palLavender,
          mark options={fill=palLavender, draw=palLavDark, line width=0.3pt},
          opacity=0.75,
          forget plot,
        ] table[col sep=tab, x=x, y=rank]
            {figures/fig_data/fig4_bump_static.tsv};

        \addplot+[
          mark=*, mark size=2.4pt, line width=1.4pt,
          color=palCoralDark,
          mark options={fill=palCoral, draw=palCoralDark, line width=0.4pt},
          forget plot,
        ] table[col sep=tab, x=x, y=rank]
            {figures/fig_data/fig4_bump_moved.tsv};

        \node[anchor=north east, font=\scriptsize, fill=white, fill opacity=0.92,
              text opacity=1, inner sep=2.5pt, draw=black!18, rounded corners=1pt]
            at (axis cs:2.27, 1.2)
            {Spearman $\rho\!=\!0.73$};
      \end{axis}
    \end{tikzpicture}
  \end{minipage}
    \vspace{-.1in}
  \caption{\textbf{Per-cell guidance tuning halves the seed-induced FID
  spread, but reshuffles which seeds rank best.}
  \nicolas{Per-(training, sampling)-seed golden-section CFG search
  (GS-FID) across $25$ SiT-B/2 training seeds ($400$k steps, $10$
  sampling seeds per cell). (a)~Per-seed violins of guided Inception
  FID, sorted by per-seed mean. The relative spread tightens to
  $\mathrm{CoV}\!=\!0.67\%$, about half the $1.26\%$ measured unguided
  on the same panel. (b)~Tuning does not preserve the seed ranking:
  unguided and guided ranks correlate at only Spearman
  $\rho\!=\!0.73$. Lavender lines mark seeds that barely move
  ($|\Delta\!\operatorname{rank}|\!<\!5$), coral lines the $8/25$ seeds
  that shift by $|\Delta\!\operatorname{rank}|\!\geq\!5$.}}
  \label{fig:golden}
\end{figure*}

\subsection{Flow-Matching Noise Leads Init and Data Order}
\label{sec:results-decomposition}

\begin{tldrbox}
Training variance has three contributing sources. The per-step
Gaussian noise of the flow-matching loss is the largest, but
initialisation and data order add comparable amounts, and the three
overlap rather than sum.
\end{tldrbox}

\myparagraph{Three training-time sources.}
\nicolas{A SiT run draws from three independent generators:
initialisation, data-loader order, and the per-step Gaussian noise
$\varepsilon$ of the flow-matching loss (hereafter \emph{training
noise}, distinct from the sampling-time noise of
\autoref{sec:results-sampling}). Each single-source condition fixes
two and varies the third (\textsc{vary-noise/init/data}).
\textsc{vary-all} is the $25\!\times\!10$ panel of
\autoref{sec:results-sampling}. Per-condition $N$ and the SiT-B/2
$400$k protocol are in \autoref{app:setup-detail}.}

\myparagraph{Per-source contributions.}
The four conditions order monotonically by between-seed $\sigma$
(\autoref{fig:disentangle}b):
\textsc{vary-all} ($0.438$) $>$
\textsc{vary-noise} ($0.336$) $>$
\textsc{vary-init} ($0.294$) $>$
\textsc{vary-data} ($0.221$). Noise alone reproduces $77\%$ of the
baseline $\sigma$, init alone $67\%$, and data order $51\%$.
This contradicts the informal ``different seeds mean different inits'':
init matters, but as the second source. The within-seed $\sigma$ is
invariant across the four conditions ($0.137$--$0.150$, sage bars in
\autoref{fig:disentangle}b), so each per-source
$\sigma_{\text{between}}$ measures the trained model's spread, not the
scoring procedure's.

\myparagraph{The data lottery is shape-different, not just smaller.}
\textsc{Vary-data} has a tight bulk and a long upper tail (skewness
$+0.74$, IQR $0.30$, whiskers $-0.05$/$+0.47$), unlike the symmetric
\textsc{vary-init} ($-0.24$) and the broader, right-skewed
\textsc{vary-noise} ($+0.62$, \autoref{fig:disentangle}a).
Data-loader spread is therefore not continuous but a few outliers
resembling training failures: fixing init and noise yields mostly
near-identical FIDs punctuated by occasional bad runs.

\myparagraph{The sources combine sub-additively.}
The 
sum
$\sqrt{\sigma_{\text{noise}}^{2}\!+\!\sigma_{\text{init}}^{2}\!+\!\sigma_{\text{data}}^{2}}\!\approx\!0.50$ overshoots the observed
$\sigma_{\text{vary-all}}\!=\!0.44$ by $14\%$: noise, init and data
share variance through the trained weights, so each source counts
more in isolation than as a marginal increment on top of the others and
one-at-a-time ablations overestimate how much variance any individual fix recovers in a fully-random regime.

\myparagraph{Numerical non-determinism is not the source.}
\nicolas{A \textsc{same-seed} control fixes init, data order, \emph{and}
training noise across $N\!=\!24$ retrains, leaving only the bitwise
non-determinism of multi-GPU (DDP) execution, the floating-point
reduction-order effect known to perturb both training~\citep{summers2021nondeterminism}
and inference~\citep{he2025nondeterminism}. It compounds enough to
drive the EMA weights ${\approx}5$--$6\%$ of their norm apart (one run,
$33\%$): genuinely \emph{different} networks. Yet FID barely moves.
The between-seed $\sigma\!=\!0.047$ falls \emph{below} the within-seed
sampling floor ($0.119$), inverting the $3.2{\times}$ ratio of
\textsc{vary-all} to $0.4{\times}$ (\autoref{fig:disentangle}b,
rightmost). The lottery is thus driven by the \emph{intended} random
draws of init, data and noise, not by numerical noise.}

\begin{figure*}[!tb]
  \centering
  \begin{tikzpicture}
    \begin{groupplot}[
        group style={
          group size=3 by 1,
          horizontal sep=1.2cm,
        },
        fidlottery,
        width=0.36\textwidth,
        height=4.2cm,
        xmin=180000, xmax=2050000,
        scaled x ticks=false,
        xtick={200000,600000,1000000,1400000,1800000},
        xticklabels={200k,600k,1.0M,1.4M,1.8M},
        x tick label style={font=\scriptsize},
        xlabel={Training step},
      ]

      \nextgroupplot[
        title={(a) Per-seed FID},
        ylabel={Inception FID},
        ymin=13, ymax=54,
        ytick={15,20,25,30,40,50},
        legend to name=fig6legend,
        legend columns=-1,
        legend style={
          /tikz/every even column/.append style={column sep=0.6cm},
          font=\small, draw=black!15,
          inner sep=2pt, fill=white,
        },
        legend image post style={line width=1.2pt},
      ]
        \foreach \s in {3279,3479,3906,4166,11396,12281,13435,14593,28658,30496,32099,36049,71483,78908,83811,88697,96531,97081,97197} {
          \addplot+[mark=none, line width=0.3pt, color=palRose, opacity=0.45, forget plot]
            table[col sep=tab, x=step, y=s\s] {figures/fig_data/fig6a_wide_DiT-S.tsv};
        }
        \foreach \s in {3279,3479,4166,12281,13435,14593,18290,28658,29257,30496,32099,36049,55303,71483,73564,78908,83811,88697,97081,97197} {
          \addplot+[mark=none, line width=0.3pt, color=palLavender, opacity=0.45, forget plot]
            table[col sep=tab, x=step, y=s\s] {figures/fig_data/fig6a_wide_DiT-B.tsv};
        }
        \foreach \s in {3279,3479,3906,4166,11396,12281,13435,14593,18290,28658,29257,30496,32099,36049,55303,66238,71483,73564,77398,78908,83811,96531,97081,97197} {
          \addplot+[mark=none, line width=0.3pt, color=palTeal, opacity=0.45, forget plot]
            table[col sep=tab, x=step, y=s\s] {figures/fig_data/fig6a_wide_DiT-L.tsv};
        }
        \foreach \s in {3279,3479,3906,4166,11396,12281,13435,14593,18290,28658,29257,30496,32099,36049,55303,66238,71483,73564,77398,78908,83811,88697,96531,97081,97197} {
          \addplot+[mark=none, line width=0.3pt, color=palOchre, opacity=0.45, forget plot]
            table[col sep=tab, x=step, y=s\s] {figures/fig_data/fig6a_wide_DiT-XL.tsv};
        }
        \addplot+[
          mark=none, line width=1.4pt, color=palRoseDark, smooth,
        ] table[col sep=tab, x=step, y=mean] {figures/fig_data/fig6a_envelope_dits.tsv};
        \addlegendentry{SiT-S}
        \addplot+[
          mark=none, line width=1.4pt, color=palLavDark, smooth,
        ] table[col sep=tab, x=step, y=mean] {figures/fig_data/fig6a_envelope_ditb.tsv};
        \addlegendentry{SiT-B}
        \addplot+[
          mark=none, line width=1.4pt, color=palTealDark, smooth,
        ] table[col sep=tab, x=step, y=mean] {figures/fig_data/fig6a_envelope_ditl.tsv};
        \addlegendentry{SiT-L}
        \addplot+[
          mark=none, line width=1.4pt, color=palOchreDark, smooth,
        ] table[col sep=tab, x=step, y=mean] {figures/fig_data/fig6a_envelope_ditxl.tsv};
        \addlegendentry{SiT-XL}

      \nextgroupplot[
        title={(b) CoV $\boldsymbol{\sigma/\mu}$ over training},
        ylabel={CoV (\%)},
        ylabel shift=-8pt,
        ymin=0.7, ymax=2.5,
        ytick={1.0, 1.5, 2.0, 2.5},
      ]
        \addplot+[name path=bandlo, mark=none, draw=none, forget plot]
          coordinates {(180000,1.0) (2050000,1.0)};
        \addplot+[name path=bandhi, mark=none, draw=none, forget plot]
          coordinates {(180000,2.0) (2050000,2.0)};
        \addplot[palSage, opacity=0.30, forget plot] fill between[of=bandlo and bandhi];
        \node[anchor=west, font=\scriptsize, color=palSageDark, fill=white,
              fill opacity=0.85, text opacity=1, inner sep=1.5pt]
          at (axis cs:230000,2.35) {1.0--2.0\,\% band};

        \addplot+[
          mark=*, mark size=1.5pt, line width=1.0pt,
          color=palRoseDark, mark options={fill=palRose, draw=palRoseDark}, smooth, forget plot,
        ] table[col sep=tab, x=step, y=cov_pct]
            {figures/fig_data/fig6_summary_DiT-S.tsv};
        \addplot+[
          mark=*, mark size=1.5pt, line width=1.0pt,
          color=palLavDark, mark options={fill=palLavender, draw=palLavDark}, smooth, forget plot,
        ] table[col sep=tab, x=step, y=cov_pct]
            {figures/fig_data/fig6_summary_DiT-B.tsv};
        \addplot+[
          mark=*, mark size=1.5pt, line width=1.0pt,
          color=palTealDark, mark options={fill=palTeal, draw=palTealDark}, smooth, forget plot,
        ] table[col sep=tab, x=step, y=cov_pct]
            {figures/fig_data/fig6_summary_DiT-L.tsv};
        \addplot+[
          mark=*, mark size=1.5pt, line width=1.0pt,
          color=palOchreDark, mark options={fill=palOchre, draw=palOchreDark}, smooth, forget plot,
        ] table[col sep=tab, x=step, y=cov_pct]
            {figures/fig_data/fig6_summary_DiT-XL.tsv};

      \nextgroupplot[
        title={(c) Rank stability vs. 2M},
        ylabel={Spearman $\rho$ vs. 2M},
        ylabel shift=-4pt,
        ymin=0.3, ymax=1.05,
        ytick={0.4,0.6,0.8,1.0},
      ]
        \addplot+[mark=none, dashed, color=black!30, line width=0.5pt, forget plot]
          coordinates {(180000,0.8) (2050000,0.8)};
        \addplot+[mark=none, dashed, color=black!30, line width=0.5pt, forget plot]
          coordinates {(180000,1.0) (2050000,1.0)};

        \addplot+[
          mark=*, mark size=1.5pt, line width=1.0pt, color=palRoseDark,
          mark options={fill=palRose, draw=palRoseDark}, smooth, forget plot,
        ] table[col sep=tab, x=step, y=rho] {figures/fig_data/fig7a_rho_DiT-S.tsv};
        \addplot+[
          mark=*, mark size=1.5pt, line width=1.0pt, color=palLavDark,
          mark options={fill=palLavender, draw=palLavDark}, smooth, forget plot,
        ] table[col sep=tab, x=step, y=rho] {figures/fig_data/fig7a_rho_DiT-B.tsv};
        \addplot+[
          mark=*, mark size=1.5pt, line width=1.0pt, color=palTealDark,
          mark options={fill=palTeal, draw=palTealDark}, smooth, forget plot,
        ] table[col sep=tab, x=step, y=rho] {figures/fig_data/fig7a_rho_DiT-L.tsv};
        \addplot+[
          mark=*, mark size=1.5pt, line width=1.0pt, color=palOchreDark,
          mark options={fill=palOchre, draw=palOchreDark}, smooth, forget plot,
        ] table[col sep=tab, x=step, y=rho] {figures/fig_data/fig7a_rho_DiT-XL.tsv};

    \end{groupplot}
  \end{tikzpicture}

  \centerline{\pgfplotslegendfromname{fig6legend}}
  \vspace{-0.2em}
  \caption{\textbf{The seed lottery across compute and model size.}
  \nicolas{(a)~Inception FID over training: thin pastel lines are
  individual training-seed trajectories, bold lines are per-step means.
  The spread between seeds stays wide at every checkpoint and does not
  shrink as training converges. (b)~Coefficient of
  variation $\sigma/\mu$ over training: all four models stay near a
  $1$--$2\%$ band. Bigger models do not yield proportionally tighter FID.
  (c)~Spearman $\rho$ between the seed ranking at step $t$ and at $2$M:
  weak before $\sim\!1$M steps.}}
  \label{fig:variance-training}
\end{figure*}

\subsection{GS-FID Halves the Floor but Reshuffles Rankings}
\label{sec:results-guidance}

\begin{tldrbox}
Tuning CFG separately for every seed makes FID more repeatable,
nearly halving the relative spread ($\mathrm{CoV}\!:1.26\%\!\to\!0.67\%$).
But it reshuffles which seeds come out best (Spearman
$\rho\!=\!0.73$), so a seed chosen by unguided FID is not reliably the
best one once guidance is tuned.
\end{tldrbox}

\myparagraph{Procedure.}
\nicolas{\textbf{GS-FID} (golden-section FID) runs golden-section
search~\citep{kiefer1953sequential} on the CFG scale for every
(training, sampling) seed pair over $[\omega_{\min},\omega_{\max}]\!=\![1,2]$
at tolerance $0.01$. Each iteration queries two interior probes and
discards the half-bracket with the larger FID, costing $\approx\!14$
evaluations per cell. Pseudocode (\autoref{alg:gss}), the one-step
illustration (\autoref{fig:gss-diagram}), and the unimodality and
convergence proofs are in \autoref{app:golden-section}.}

\myparagraph{GS-FID halves the relative noise floor.}
Per-seed mean GS-FID spans $[7.31, 7.52]$ around a grand mean of
$7.42$, with $\sigma_{\text{between}}\!=\!0.050$ and
$\sigma_{\text{within}}\!=\!0.027$ (\autoref{fig:golden}a). The CoV
drops to $0.67\%$, half the $1.26\%$ unguided. The grand mean also
falls from $\approx\!34.7$ to $\approx\!7.4$, so absolute FID ranges
overstate the gain and the dimensionless CoV is the comparable
quantity across CFG conditions.

\myparagraph{Sampling jitter takes a larger share under GS-FID.}
\nicolas{The GS-FID floor still sits above the $\approx\!0.4\%$
pure-sampling floor of \autoref{sec:results-sampling}, and the
between-to-within $\sigma$ ratio falls only from $3.2{\times}$ to
$1.87{\times}$. Per-cell tuning does not eliminate the seed lottery.
Sampling jitter takes a larger share of what remains, so
multi-sampling-seed reporting matters \emph{more} under GS-FID, not
less. The recovered optima concentrate tightly
($\sigma_{\omega}\!\approx\!0.045$), so a $\boldsymbol{\pm 0.05}$
miscalibration of the scale injects FID noise comparable to the
within-seed floor: any single CFG number must come with its search
tolerance.}

\myparagraph{GS-FID reshuffles the seed ranking.}
Across mean, min, and median criteria, Spearman $\rho$ between the
unguided and GS-FID rankings of the $25$ training seeds is $0.73$
($p\!<\!10^{-4}$). The bump chart in \autoref{fig:golden}(b) shows
$8/25$ seeds shift by at least five places, and the top of the
leaderboard mixes seeds the unguided ranking placed near the middle.
A model selected on unguided FID is not guaranteed to be best under
GS-FID, and the two protocols should not be compared across papers.
Within a coherent ablation GS-FID is the more precise estimator.
Benchmarks that report either should report both the noise floor and
the search tolerance.

\begin{figure*}[!tb]
  \centering
  \resizebox{\textwidth}{!}{%
  \begin{tikzpicture}
    \begin{groupplot}[
        group style={
          group size=4 by 1,
          horizontal sep=0.55cm,
        },
        fidlottery,
        width=0.275\textwidth,
        height=4.6cm,
        xmin=600000, xmax=2080000,
        scaled x ticks=false,
        xtick={800000,1400000,2000000},
        xticklabels={800k,1.4M,2M},
        title style={font=\small\bfseries, yshift=-3pt},
        xlabel={Training step},
      ]

      \nextgroupplot[
        title={(a) SiT-S},
        ylabel={Inception FID},
        ymin=40.5, ymax=53,
        ytick={42,44,46,48,50,52},
      ]
        \fill[palAmber, opacity=0.22]
          (axis cs:1600000,40.5) rectangle (axis cs:2000000,53);
        \addplot+[name path=Smin, mark=none, draw=none, forget plot]
          table[col sep=tab, x=step, y=min] {figures/fig_data/fig6_summary_DiT-S.tsv};
        \addplot+[name path=Smax, mark=none, draw=none, forget plot]
          table[col sep=tab, x=step, y=max] {figures/fig_data/fig6_summary_DiT-S.tsv};
        \addplot[palRose, opacity=0.40, forget plot] fill between[of=Smin and Smax];
        \addplot+[
          mark=none, line width=1.2pt,
          color=palRoseDark, smooth, forget plot,
        ] table[col sep=tab, x=step, y=mean] {figures/fig_data/fig6_summary_DiT-S.tsv};
        \addplot+[mark=none, dashed, color=black!60, line width=0.7pt, forget plot]
          coordinates {(600000,41.74) (2080000,41.74)};
        \node[anchor=west, font=\footnotesize, fill=white, fill opacity=0.8,
              text opacity=1, inner sep=1.5pt]
          at (axis cs:680000,42.7) {$T\!=\!41.74$};
        \node[circle, draw=black!70, fill=palSage, inner sep=0pt, minimum size=6pt]
          at (axis cs:1600000,41.74) {};
        \node[circle, draw=black!70, fill=palCoral, inner sep=0pt, minimum size=6pt]
          at (axis cs:2000000,41.74) {};
        \node[anchor=center, font=\footnotesize\bfseries, color=white,
              fill=palAmberDark, rounded corners=2pt, inner sep=2.5pt]
          at (axis cs:1800000,48.5)
          {$\boldsymbol{1.25{\times}}$};

      \nextgroupplot[
        title={(b) SiT-B},
        ymin=22.5, ymax=31.5,
        ytick={23,25,27,29,31},
      ]
        \fill[palAmber, opacity=0.22]
          (axis cs:1600000,22.5) rectangle (axis cs:2000000,31.5);
        \addplot+[name path=Bmin, mark=none, draw=none, forget plot]
          table[col sep=tab, x=step, y=min] {figures/fig_data/fig6_summary_DiT-B.tsv};
        \addplot+[name path=Bmax, mark=none, draw=none, forget plot]
          table[col sep=tab, x=step, y=max] {figures/fig_data/fig6_summary_DiT-B.tsv};
        \addplot[palLavender, opacity=0.40, forget plot] fill between[of=Bmin and Bmax];
        \addplot+[
          mark=none, line width=1.2pt,
          color=palLavDark, smooth, forget plot,
        ] table[col sep=tab, x=step, y=mean] {figures/fig_data/fig6_summary_DiT-B.tsv};
        \addplot+[mark=none, dashed, color=black!60, line width=0.7pt, forget plot]
          coordinates {(600000,23.86) (2080000,23.86)};
        \node[anchor=west, font=\footnotesize, fill=white, fill opacity=0.8,
              text opacity=1, inner sep=1.5pt]
          at (axis cs:680000,24.7) {$T\!=\!23.86$};
        \node[circle, draw=black!70, fill=palSage, inner sep=0pt, minimum size=6pt]
          at (axis cs:1600000,23.86) {};
        \node[circle, draw=black!70, fill=palCoral, inner sep=0pt, minimum size=6pt]
          at (axis cs:2000000,23.86) {};
        \node[anchor=center, font=\footnotesize\bfseries, color=white,
              fill=palAmberDark, rounded corners=2pt, inner sep=2.5pt]
          at (axis cs:1800000,29)
          {$\boldsymbol{1.25{\times}}$};

      \nextgroupplot[
        title={(c) SiT-L},
        ymin=14, ymax=19,
        ytick={14,15,16,17,18},
      ]
        \fill[palAmber, opacity=0.22]
          (axis cs:1100000,14) rectangle (axis cs:2000000,19);
        \addplot+[name path=Lmin, mark=none, draw=none, forget plot]
          table[col sep=tab, x=step, y=min] {figures/fig_data/fig6_summary_DiT-L.tsv};
        \addplot+[name path=Lmax, mark=none, draw=none, forget plot]
          table[col sep=tab, x=step, y=max] {figures/fig_data/fig6_summary_DiT-L.tsv};
        \addplot[palTeal, opacity=0.40, forget plot] fill between[of=Lmin and Lmax];
        \addplot+[
          mark=none, line width=1.2pt,
          color=palTealDark, smooth, forget plot,
        ] table[col sep=tab, x=step, y=mean] {figures/fig_data/fig6_summary_DiT-L.tsv};
        \addplot+[mark=none, dashed, color=black!60, line width=0.7pt, forget plot]
          coordinates {(600000,15.22) (2080000,15.22)};
        \node[anchor=west, font=\footnotesize, fill=white, fill opacity=0.8,
              text opacity=1, inner sep=1.5pt]
          at (axis cs:680000,15.85) {$T\!=\!15.22$};
        \node[circle, draw=black!70, fill=palSage, inner sep=0pt, minimum size=6pt]
          at (axis cs:1100000,15.22) {};
        \node[circle, draw=black!70, fill=palCoral, inner sep=0pt, minimum size=6pt]
          at (axis cs:2000000,15.22) {};
        \node[anchor=center, font=\footnotesize\bfseries, color=white,
              fill=palAmberDark, rounded corners=2pt, inner sep=2.5pt]
          at (axis cs:1550000,17.4)
          {$\boldsymbol{1.82{\times}}$};

      \nextgroupplot[
        title={(d) SiT-XL},
        ymin=13.9, ymax=17.8,
        ytick={14,15,16,17},
      ]
        \fill[palAmber, opacity=0.22]
          (axis cs:1000000,13.9) rectangle (axis cs:2000000,17.8);
        \addplot+[name path=XLmin, mark=none, draw=none, forget plot]
          table[col sep=tab, x=step, y=min] {figures/fig_data/fig6_summary_DiT-XL.tsv};
        \addplot+[name path=XLmax, mark=none, draw=none, forget plot]
          table[col sep=tab, x=step, y=max] {figures/fig_data/fig6_summary_DiT-XL.tsv};
        \addplot[palOchre, opacity=0.40, forget plot] fill between[of=XLmin and XLmax];
        \addplot+[
          mark=none, line width=1.2pt,
          color=palOchreDark, smooth, forget plot,
        ] table[col sep=tab, x=step, y=mean] {figures/fig_data/fig6_summary_DiT-XL.tsv};
        \addplot+[mark=none, dashed, color=black!60, line width=0.7pt, forget plot]
          coordinates {(600000,14.94) (2080000,14.94)};
        \node[anchor=west, font=\footnotesize, fill=white, fill opacity=0.8,
              text opacity=1, inner sep=1.5pt]
          at (axis cs:680000,15.55) {$T\!=\!14.94$};
        \node[circle, draw=black!70, fill=palSage, inner sep=0pt, minimum size=6pt]
          at (axis cs:1000000,14.94) {};
        \node[circle, draw=black!70, fill=palCoral, inner sep=0pt, minimum size=6pt]
          at (axis cs:2000000,14.94) {};
        \node[anchor=center, font=\footnotesize\bfseries, color=white,
              fill=palAmberDark, rounded corners=2pt, inner sep=2.5pt]
          at (axis cs:1500000,16.6)
          {$\boldsymbol{2.00{\times}}$};

    \end{groupplot}
  \end{tikzpicture}}
  \vspace{-.2in}
  \caption{\textbf{The luck of the draw: a $\boldsymbol{1.2}$--$\boldsymbol{2.0{\times}}$
  convergence gap.}
  \nicolas{For each model the dashed horizontal line marks the target
  $T$, the FID reached by the \emph{unluckiest} of $\sim\!20$ seeds at
  $2$M. The green dot is the step at which the \emph{luckiest} seed
  first crosses $T$. The coral dot sits at $2$M where the unlucky seed
  finally reaches it. The amber band between them is the training
  compute the unlucky seed wastes catching up. The per-step shaded band is
  the min--max envelope across seeds, and the bold line is the per-step mean.}}
  \label{fig:lucky-speedup}
\end{figure*}

\subsection{The 1--2\,\% CoV Floor Survives Scale and Compute}
\label{sec:results-scaling}

\begin{tldrbox}
More compute and larger models do not reduce the variance: the
CoV stays in a $1$--$2\%$ band (median $1.30\%$) at every
checkpoint and every model size.
\end{tldrbox}

\myparagraph{The relative floor is scale-invariant.}
\nicolas{Across the SiT-S/B/L/XL panel ($N\!=\!25$, $K\!=\!10$ at
every $100$k-step checkpoint to $2$M, \autoref{app:setup-detail}),
mean Inception FID drops $\approx\!2{\times}$ from $200$k to $2$M
while $\sigma_{\text{between}}$ shrinks at most $2.4{\times}$. A fan of
seed-to-seed differences trails the mean down rather than collapsing
at convergence (\autoref{fig:variance-training}a). The CoV stays
inside $[0.74\%, 2.06\%]$ across all $76$ cells (median $1.30\%$,
\autoref{fig:variance-training}b): \emph{the FID noise floor is
$1$--$2\%$ of the mean FID the model has reached, at every compute
budget and scale on this family.} A gain below
$\approx\!2{\times}\mathrm{CoV}$ of the mean FID ($\approx\!3$--$4\%$
of the baseline) sits inside the floor and should not be reported as
real without multi-seed confirmation.}

\myparagraph{More parameters does not mean less variance.}
\nicolas{The CoV at $2$M is non-monotonic in scale: SiT-S ($0.74\%$)
and SiT-B ($1.24\%$) sit \emph{below} SiT-XL ($1.42\%$) and SiT-L
($1.72\%$). Bigger architectures do not automatically tighten the
floor. This matches the decomposition of
\autoref{sec:results-decomposition}: the per-step noise that
dominates the seed lottery is regenerated every batch, so it neither
fades with compute nor averages across width. Reproducibility is a
property of the metric and the loss, not of compute or scale.}

\myparagraph{Rank stability is weak before $\boldsymbol{\approx 1}$M
steps.}
\nicolas{Spearman $\rho$ between the seed ranking at step $t$ and at
$2$M is $0.39$--$0.61$ at $200$k, climbs to $0.65$--$0.81$ by $1.1$M,
and reaches $1.0$ at $2$M (\autoref{fig:variance-training}c):
selecting a seed on an early checkpoint and reusing it for a long
final run amounts to picking near-randomly through the first half of
training. The spread is Gaussian, not heavy-tailed:
$(\max\!-\!\min)/\sigma\!\in\![3.1, 5.0]$ at every step, bracketing
the $3.5$--$3.9$ predicted for a Gaussian sample of size
$n\!\in\![19, 25]$.}

\input{figures/fig9_mup_lr_sweep.tex}

\subsection{The Luck of the Draw: A \texorpdfstring{1.2--2.0$\boldsymbol{\times}$}{1.2--2.0x} Convergence Gap}
\label{sec:results-speedup}

\begin{tldrbox}
Drawing many training seeds and cherry-picking the luckiest gives
absurdly impressive headlines. Without changing the recipe, training
runs up to $2.0{\times}$ faster, from $1.25{\times}$ on SiT-S/B to
$2.0{\times}$ on SiT-XL.
\end{tldrbox}

\myparagraph{An extra 100k training steps and a seed swap move FID by
similar amounts.}
\nicolas{Past $\approx\!1.5$M steps, the smallest single-seed
improvement clearing $2\sigma$ ($\approx\!0.5$--$0.8$ FID at $2$M) is
indistinguishable from $200$--$500$k of extra training on the same
architecture. With $N$ seeds the resolvable gap scales as
$2\sigma/\sqrt{N}$: $N\!=\!5$ cuts the threshold to $\approx\!0.25$
FID, restoring detectability for $100$k-step increments.}

\myparagraph{The lucky--unlucky gap is 400k--1M steps depending on size.}
\nicolas{We anchor the gap to a familiar target (the FID the
unluckiest seed reaches at $2$M) and ask when the luckiest first
hits it (\autoref{fig:lucky-speedup}): $1.25{\times}$ on SiT-S/B,
$1.82{\times}$ on SiT-L, $2.0{\times}$ on SiT-XL, where the lucky seed
reaches at $1$M what the unlucky seed only attains at $2$M. An
unlucky seed therefore costs between a fifth and a half of the
training budget compared to a lucky one. Any single-seed paper
claiming $\approx\!1.3{\times}$ training speedup on this architecture
is implicitly competing with what the seed lottery already delivers
without changing a line of code.}

\subsection{\texorpdfstring{$\boldsymbol{\mu}$P}{muP} Transfers a \texorpdfstring{1.7$\boldsymbol{\times}$}{1.7x} LR Window, Not a Point}
\label{sec:results-mup}

\begin{tldrbox}
With CFG, several nearby LRs all give similar FID, a
$1.7{\times}$-wide window. Without CFG, the best LR moves to the
edge of training stability.
\end{tldrbox}

\myparagraph{$\boldsymbol{\mu}$P transfers, but to a window, not a point.}
\nicolas{The sweep covers $10$ $\mu$P-coordinated LRs log-spaced in
$[5\!\times\!10^{-5}, 5\!\times\!10^{-4}]$ across SiT-S/B/L/XL with
$N\!=\!10$ seeds per cell, $100$k steps, and both unguided FID and
GS-FID ($\approx\!400$ networks, \autoref{app:setup-detail}). The
GS-FID panel of \autoref{fig:mup-lr} shows flat-bottomed valleys near
$\omega^{\star}\!\approx\!2$--$3\!\times\!10^{-4}$ at every size, with
highlighted minima within one log-step. The two LRs flanking each
minimum sit inside its seed envelope, so three adjacent LRs share the
per-size best FID, a $1.7{\times}$ LR window per size
(\autoref{app:mup-extras}). $\mu$P does transfer the optimum across
widths, but the object it transfers is a window, not a single number.
A recipe-comparison study sweeping a few LRs and reporting the
per-recipe best is out-resolved by the seed lottery, exactly like the
single-LR practitioner of \autoref{sec:results-scaling}.}

\myparagraph{The seed CoV does not dip at the optimal LR.}
\nicolas{A flatter loss-landscape region should also be lower-variance,
but \autoref{fig:mup-lr}c refutes this: every size's argmin CoV lies
$1$--$3$ brackets \emph{below} its argmin mean, in the under-trained
regime where seeds have not moved far from initialisation. At the
actual optimum LR (open rings, panel c) the CoV is $1.7$--$2.3\%$,
inside the $1$--$2\%$ floor of \autoref{sec:results-scaling}. The
floor is therefore the variance at the LR a practitioner would
actually pick: under-training squashes seeds toward initialisation,
but the practitioner gives that regime up when they tune for FID.}

\myparagraph{Without CFG, the sweep points to the unstable edge.}
\nicolas{The unguided panel is monotone in LR at $100$k, so the
lowest mean sits at the right edge ($5\!\times\!10^{-4}$), the
same LR where $3/10$ SiT-S and $1/10$ SiT-XL seeds diverge.
Single-seed unguided LR selection therefore lands on an unstable
point with a confident-looking number, while GS-FID returns the
$1.7{\times}$ window around $3\!\times\!10^{-4}$.}

{\tikzexternaldisable\begin{advicebox}
\textbf{1. More evaluations can't substitute for more training runs.}
Resampling on a fixed network shrinks evaluation noise but leaves the
dominant training variance untouched
(\autoref{sec:results-sampling}). Only multi-seed training reaches
below the $\approx\!1.3\%$ CoV floor.

\medskip
\textbf{2. Treat any FID gap below $\approx\!2\%$ as inconclusive.}
FID variance stays inside a $1$--$2\%$ band across SiT-S/B/L/XL
from $200$k to $2$M training steps (\autoref{sec:results-scaling}).
Gaps below the band may just be seed noise. Use this as a cheap
first-pass check before running multiple seeds. Our
\href{https://kyutai.org/fid-lottery/\#calculator}{online error-bar
calculator} returns the seed-only $95\%$ CI for any reported FID.

\medskip
\textbf{3. Guided and unguided FID disagree on best seeds and
hyperparameters.}
GS-FID is more reliable, but its best seeds differ from unguided's
(Spearman $\rho\!\approx\!0.73$, \autoref{sec:results-guidance}). For
LRs, GS-FID returns a stable optimum while unguided picks the unstable
edge of training (\autoref{sec:results-mup}). Evaluate and tune with
the same FID you plan to report.

\medskip
\textbf{4. Under guided FID, the best LR is a flat region, not a
single value.}
On the $\mu$P sweep, GS-FID returns a $1.7{\times}$-wide window of
adjacent LRs that all give similar FID (\autoref{sec:results-mup}).
Seed variance blurs the optimum into a flat region.

\medskip
\textbf{5. Use golden-section search to pick the CFG scale.}
GS-FID finds the per-cell optimal CFG scale in a logarithmic number
of evaluations (\autoref{sec:results-guidance}) and gives the most
reliable comparisons under CFG.
\end{advicebox}
}

\section{Discussion}
\label{sec:conclusion}

\nicolas{\myparagraph{Limitations.}
Our measurements cover one combination: SiT, flow matching,
class-conditional ImageNet $256{\times}256$, Inception-V3 FID. The
$\approx\!1.3\%$ CoV is a calibration target for that combination,
not a universal constant. Other backbones, objectives, latent vs.\
pixel diffusion, text-to-image, or other Fr\'echet variants (FDD,
CMMD, KID) may sit on different floors. \autoref{app:supp-metrics}
replicates the analyses on DINOv2 FID and Inception
precision/recall/density/coverage: fidelity metrics track Inception
FID closely. Recall is the outlier. The panel is finite ($20$--$25$
training seeds, $10$ sampling seeds), and we do not push past SiT-XL
or $2$M steps, so production-scale behaviour is an extrapolation.}

\nicolas{\myparagraph{Outlook.}
We read these findings less as a verdict on FID than as a proposal for how
to measure it: a two-axis panel of training and sampling seeds, summarised
by a per-source variance decomposition. We establish this for a single
combination (SiT, flow matching, Inception-V3 FID). Whether the same panel
and the $1$--$2\%$ floor extend to other training methods, model families,
and Fr\'echet-style metrics is left to future work, as is whether
seed-noise floors can be predicted from proxies cheaper than a multi-seed
retrain.}

\nicolas{\myparagraph{Broader impact.}
The findings are dual-use in a benign sense: knowing the
$\approx\!1.3\%$ floor saves compute by avoiding sub-floor retraining,
but the same numbers reveal the size of headline a few extra seeds can
manufacture. We make this temptation explicit
(\autoref{sec:results-speedup}) so it can be priced into peer review.}

\section*{Acknowledgements}
We thank David Picard for the inspiration behind this project. We
thank Amil Dravid, Richard Zhang, A. Sophia Koepke, and David Picard
for proofreading the manuscript, and Adrien Ramanana-Rahary for the
interesting discussions. AE is supported, in part, by NSF IIS-2403305
and ONR MURI.

\newpage
\bibliographystyle{plainnat}
\bibliography{references}

\begin{thebibliography}{99}
\providecommand{\natexlab}[1]{#1}
\providecommand{\url}[1]{\texttt{#1}}
\expandafter\ifx\csname urlstyle\endcsname\relax
  \providecommand{\doi}[1]{doi: #1}\else
  \providecommand{\doi}{doi: \begingroup \urlstyle{rm}\Url}\fi

\bibitem[Banerjee et~al.(2024)Banerjee, Marrinan, Cannon, Chiang, and Sarwate]{banerjee2024measuring}
Sinjini Banerjee, Tim Marrinan, Reilly Cannon, Tony Chiang, and Anand~D. Sarwate.
\newblock Measuring training variability from stochastic optimization using robust nonparametric testing.
\newblock \emph{arXiv preprint arXiv:2406.08307}, 2024.

\bibitem[Barratt and Sharma(2018)]{barratt2018note}
Shane Barratt and Rishi Sharma.
\newblock A note on the inception score.
\newblock \emph{arXiv preprint arXiv:1801.01973}, 2018.

\bibitem[Benavoli et~al.(2017)Benavoli, Corani, Dem{\v{s}}ar, and Zaffalon]{benavoli2017bayesian}
Alessio Benavoli, Giorgio Corani, Janez Dem{\v{s}}ar, and Marco Zaffalon.
\newblock Time for a change: a tutorial for comparing multiple classifiers through {B}ayesian analysis.
\newblock \emph{Journal of Machine Learning Research}, 2017.

\bibitem[Bench and Thomas(2025)]{bench2025quantifying}
Ciaran Bench and Spencer~Angus Thomas.
\newblock Quantifying the uncertainty of model-based synthetic image quality metrics.
\newblock \emph{arXiv preprint arXiv:2504.03623}, 2025.

\bibitem[Bhatia(2007)]{bhatia2007positive}
Rajendra Bhatia.
\newblock \emph{Positive Definite Matrices}.
\newblock Princeton University Press, 2007.

\bibitem[Bi{\'n}kowski et~al.(2018)Bi{\'n}kowski, Sutherland, Arbel, and Gretton]{binkowski2018demystifying}
Miko{\l}aj Bi{\'n}kowski, Danica~J. Sutherland, Michael Arbel, and Arthur Gretton.
\newblock Demystifying {MMD} {GAN}s.
\newblock In \emph{6th International Conference on Learning Representations (ICLR)}, 2018.

\bibitem[Borji(2019)]{borji2019pros}
Ali Borji.
\newblock Pros and cons of {GAN} evaluation measures.
\newblock \emph{Computer Vision and Image Understanding}, 2019.

\bibitem[Borji(2022)]{borji2022pros}
Ali Borji.
\newblock Pros and cons of {GAN} evaluation measures: New developments.
\newblock \emph{Computer Vision and Image Understanding}, 2022.

\bibitem[Bouthillier et~al.(2019)Bouthillier, Laurent, and Vincent]{bouthillier2019unreproducible}
Xavier Bouthillier, C{\'e}sar Laurent, and Pascal Vincent.
\newblock Unreproducible research is reproducible.
\newblock In \emph{Proceedings of the 36th International Conference on Machine Learning (ICML)}, 2019.

\bibitem[Bouthillier et~al.(2021)Bouthillier, Delaunay, Bronzi, Trofimov, Nichyporuk, Szeto, Sepah, Raff, Madan, Voleti, Ebrahimi~Kahou, Michalski, Serdyuk, Arbel, Pal, Varoquaux, and Vincent]{bouthillier2021accounting}
Xavier Bouthillier, Pierre Delaunay, Mirko Bronzi, Assya Trofimov, Brennan Nichyporuk, Justin Szeto, Naz Sepah, Edward Raff, Kanika Madan, Vikram Voleti, Samira Ebrahimi~Kahou, Vincent Michalski, Dmitriy Serdyuk, Tal Arbel, Chris Pal, Ga{\"e}l Varoquaux, and Pascal Vincent.
\newblock Accounting for variance in machine learning benchmarks.
\newblock In \emph{Proceedings of Machine Learning and Systems (MLSys)}, 2021.

\bibitem[Brent(1973)]{brent1973algorithms}
Richard~P. Brent.
\newblock \emph{Algorithms for Minimization without Derivatives}.
\newblock Prentice-Hall, 1973.

\bibitem[Chong and Forsyth(2020)]{chong2020effectively}
Min~Jin Chong and David Forsyth.
\newblock Effectively unbiased {FID} and inception score and where to find them.
\newblock In \emph{Proceedings of the IEEE/CVF Conference on Computer Vision and Pattern Recognition (CVPR)}, 2020.

\bibitem[Crane(2018)]{crane2018questionable}
Matt Crane.
\newblock Questionable answers in question answering research: Reproducibility and variability of published results.
\newblock \emph{Transactions of the Association for Computational Linguistics}, 2018.

\bibitem[D'Amour et~al.(2022)D'Amour, Heller, Moldovan, Adlam, Alipanahi, Beutel, Chen, Deaton, Eisenstein, Hoffman, Hormozdiari, Houlsby, Hou, Jerfel, Karthikesalingam, Lucic, Ma, McLean, Mincu, Mitani, Montanari, Nado, Natarajan, Nielson, Osborne, Raman, Ramasamy, Sayres, Schrouff, Seneviratne, Sequeira, Suresh, Veitch, Vladymyrov, Wang, Webster, Yadlowsky, Yun, Zhai, and Sculley]{damour2022underspecification}
Alexander D'Amour, Katherine Heller, Dan Moldovan, Ben Adlam, Babak Alipanahi, Alex Beutel, Christina Chen, Jonathan Deaton, Jacob Eisenstein, Matthew~D. Hoffman, Farhad Hormozdiari, Neil Houlsby, Shaobo Hou, Ghassen Jerfel, Alan Karthikesalingam, Mario Lucic, Yian Ma, Cory McLean, Diana Mincu, Akinori Mitani, Andrea Montanari, Zachary Nado, Vivek Natarajan, Christopher Nielson, Thomas~F. Osborne, Rajiv Raman, Kim Ramasamy, Rory Sayres, Jessica Schrouff, Martin Seneviratne, Shannon Sequeira, Harini Suresh, Victor Veitch, Max Vladymyrov, Xuezhi Wang, Kellie Webster, Steve Yadlowsky, Taedong Yun, Xiaohua Zhai, and D.~Sculley.
\newblock Underspecification presents challenges for credibility in modern machine learning.
\newblock \emph{Journal of Machine Learning Research}, 2022.

\bibitem[Degeorge et~al.(2025)Degeorge, Ghosh, Dufour, Picard, and Kalogeiton]{degeorge2025imagenet}
Lucas Degeorge, Arijit Ghosh, Nicolas Dufour, David Picard, and Vicky Kalogeiton.
\newblock How far can we go with {ImageNet} for text-to-image generation?
\newblock In \emph{Advances in Neural Information Processing Systems 38 (NeurIPS)}, 2025.

\bibitem[Dem{\v{s}}ar(2006)]{demsar2006comparisons}
Janez Dem{\v{s}}ar.
\newblock Statistical comparisons of classifiers over multiple data sets.
\newblock \emph{Journal of Machine Learning Research}, 2006.

\bibitem[Deng et~al.(2009)Deng, Dong, Socher, Li, Li, and Fei-Fei]{deng2009imagenet}
Jia Deng, Wei Dong, Richard Socher, Li-Jia Li, Kai Li, and Li~Fei-Fei.
\newblock {ImageNet}: A large-scale hierarchical image database.
\newblock In \emph{Proceedings of the IEEE Conference on Computer Vision and Pattern Recognition (CVPR)}, 2009.

\bibitem[Dhariwal and Nichol(2021)]{dhariwal2021adm}
Prafulla Dhariwal and Alexander~Quinn Nichol.
\newblock Diffusion models beat {GAN}s on image synthesis.
\newblock In \emph{Advances in Neural Information Processing Systems 34 (NeurIPS)}, 2021.

\bibitem[DiCiccio and Efron(1996)]{diciccio1996bootstrap}
Thomas~J. DiCiccio and Bradley Efron.
\newblock Bootstrap confidence intervals.
\newblock \emph{Statistical Science}, 1996.

\bibitem[Dietterich(1998)]{dietterich1998tests}
Thomas~G. Dietterich.
\newblock Approximate statistical tests for comparing supervised classification learning algorithms.
\newblock \emph{Neural Computation}, 1998.

\bibitem[Dodge et~al.(2019)Dodge, Gururangan, Card, Schwartz, and Smith]{dodge2019showyourwork}
Jesse Dodge, Suchin Gururangan, Dallas Card, Roy Schwartz, and Noah~A. Smith.
\newblock Show your work: Improved reporting of experimental results.
\newblock In \emph{Proceedings of EMNLP-IJCNLP}, 2019.

\bibitem[Dodge et~al.(2020)Dodge, Ilharco, Schwartz, Farhadi, Hajishirzi, and Smith]{dodge2020fine}
Jesse Dodge, Gabriel Ilharco, Roy Schwartz, Ali Farhadi, Hannaneh Hajishirzi, and Noah~A. Smith.
\newblock Fine-tuning pretrained language models: Weight initializations, data orders, and early stopping.
\newblock \emph{arXiv preprint arXiv:2002.06305}, 2020.

\bibitem[Dosovitskiy et~al.(2021)Dosovitskiy, Beyer, Kolesnikov, Weissenborn, Zhai, Unterthiner, Dehghani, Minderer, Heigold, Gelly, Uszkoreit, and Houlsby]{dosovitskiy2021vit}
Alexey Dosovitskiy, Lucas Beyer, Alexander Kolesnikov, Dirk Weissenborn, Xiaohua Zhai, Thomas Unterthiner, Mostafa Dehghani, Matthias Minderer, Georg Heigold, Sylvain Gelly, Jakob Uszkoreit, and Neil Houlsby.
\newblock An image is worth 16x16 words: Transformers for image recognition at scale.
\newblock In \emph{International Conference on Learning Representations (ICLR)}, 2021.

\bibitem[Dror et~al.(2018)Dror, Baumer, Shlomov, and Reichart]{dror2018hitchhiker}
Rotem Dror, Gili Baumer, Segev Shlomov, and Roi Reichart.
\newblock The hitchhiker's guide to testing statistical significance in natural language processing.
\newblock In \emph{Proceedings of the 56th Annual Meeting of the Association for Computational Linguistics (ACL), Volume 1: Long Papers}, 2018.

\bibitem[Dunn(1961)]{dunn1961multiple}
Olive~Jean Dunn.
\newblock Multiple comparisons among means.
\newblock \emph{Journal of the American Statistical Association}, 1961.

\bibitem[Efron(1979)]{efron1979bootstrap}
B.~Efron.
\newblock Bootstrap methods: Another look at the jackknife.
\newblock \emph{The Annals of Statistics}, 1979.

\bibitem[Esser et~al.(2024)Esser, Kulal, Blattmann, Entezari, M{\"u}ller, Saini, Levi, Lorenz, Sauer, Boesel, Podell, Dockhorn, English, and Rombach]{esser2024sd3}
Patrick Esser, Sumith Kulal, Andreas Blattmann, Rahim Entezari, Jonas M{\"u}ller, Harry Saini, Yam Levi, Dominik Lorenz, Axel Sauer, Frederic Boesel, Dustin Podell, Tim Dockhorn, Zion English, and Robin Rombach.
\newblock Scaling rectified flow transformers for high-resolution image synthesis.
\newblock In \emph{Proceedings of the 41st International Conference on Machine Learning (ICML)}, 2024.

\bibitem[Everingham et~al.(2014)Everingham, Eslami, Van~Gool, Williams, Winn, and Zisserman]{everingham2014pascal}
Mark Everingham, S.~M.~Ali Eslami, Luc Van~Gool, Christopher K.~I. Williams, John Winn, and Andrew Zisserman.
\newblock The {Pascal} visual object classes challenge: A retrospective.
\newblock \emph{International Journal of Computer Vision}, 2014.

\bibitem[Fort et~al.(2019)Fort, Hu, and Lakshminarayanan]{fort2019deepensembles}
Stanislav Fort, Huiyi Hu, and Balaji Lakshminarayanan.
\newblock Deep ensembles: A loss landscape perspective.
\newblock \emph{arXiv preprint arXiv:1912.02757}, 2019.

\bibitem[Frankle and Carbin(2019)]{frankle2019lottery}
Jonathan Frankle and Michael Carbin.
\newblock The lottery ticket hypothesis: Finding sparse, trainable neural networks.
\newblock In \emph{International Conference on Learning Representations (ICLR)}, 2019.

\bibitem[Frankle et~al.(2020)Frankle, Dziugaite, Roy, and Carbin]{frankle2020linear}
Jonathan Frankle, Gintare~Karolina Dziugaite, Daniel~M. Roy, and Michael Carbin.
\newblock Linear mode connectivity and the lottery ticket hypothesis.
\newblock In \emph{Proceedings of the 37th International Conference on Machine Learning (ICML)}, 2020.

\bibitem[Glorot and Bengio(2010)]{glorot2010xavier}
Xavier Glorot and Yoshua Bengio.
\newblock Understanding the difficulty of training deep feedforward neural networks.
\newblock In \emph{Proceedings of the Thirteenth International Conference on Artificial Intelligence and Statistics (AISTATS)}, 2010.

\bibitem[Gundersen and Kjensmo(2018)]{gundersen2018state}
Odd~Erik Gundersen and Sigbj{\o}rn Kjensmo.
\newblock State of the art: Reproducibility in artificial intelligence.
\newblock In \emph{Proceedings of the AAAI Conference on Artificial Intelligence}, 2018.

\bibitem[He and {Thinking Machines Lab}(2025)]{he2025nondeterminism}
Horace He and {Thinking Machines Lab}.
\newblock Defeating nondeterminism in {LLM} inference.
\newblock \url{https://thinkingmachines.ai/blog/defeating-nondeterminism-in-llm-inference/}, 2025.
\newblock Thinking Machines Lab blog; accessed 2026.

\bibitem[He et~al.(2015)He, Zhang, Ren, and Sun]{he2015kaiming}
Kaiming He, Xiangyu Zhang, Shaoqing Ren, and Jian Sun.
\newblock Delving deep into rectifiers: Surpassing human-level performance on {ImageNet} classification.
\newblock In \emph{Proceedings of the IEEE International Conference on Computer Vision (ICCV)}, 2015.

\bibitem[He et~al.(2016)He, Zhang, Ren, and Sun]{he2016resnet}
Kaiming He, Xiangyu Zhang, Shaoqing Ren, and Jian Sun.
\newblock Deep residual learning for image recognition.
\newblock In \emph{Proceedings of the IEEE Conference on Computer Vision and Pattern Recognition (CVPR)}, 2016.

\bibitem[Henderson et~al.(2018)Henderson, Islam, Bachman, Pineau, Precup, and Meger]{henderson2018deep}
Peter Henderson, Riashat Islam, Philip Bachman, Joelle Pineau, Doina Precup, and David Meger.
\newblock Deep reinforcement learning that matters.
\newblock In \emph{Proceedings of the AAAI Conference on Artificial Intelligence}, 2018.

\bibitem[Henighan et~al.(2020)Henighan, Kaplan, Katz, Chen, Hesse, Jackson, Jun, Brown, Dhariwal, Gray, Hallacy, Mann, Radford, Ramesh, Ryder, Ziegler, Schulman, Amodei, and McCandlish]{henighan2020scaling}
Tom Henighan, Jared Kaplan, Mor Katz, Mark Chen, Christopher Hesse, Jacob Jackson, Heewoo Jun, Tom~B. Brown, Prafulla Dhariwal, Scott Gray, Chris Hallacy, Benjamin Mann, Alec Radford, Aditya Ramesh, Nick Ryder, Daniel~M. Ziegler, John Schulman, Dario Amodei, and Sam McCandlish.
\newblock Scaling laws for autoregressive generative modeling.
\newblock \emph{arXiv preprint arXiv:2010.14701}, 2020.

\bibitem[Hestness et~al.(2017)Hestness, Narang, Ardalani, Diamos, Jun, Kianinejad, Patwary, Yang, and Zhou]{hestness2017predictable}
Joel Hestness, Sharan Narang, Newsha Ardalani, Gregory Diamos, Heewoo Jun, Hassan Kianinejad, Md. Mostofa~Ali Patwary, Yang Yang, and Yanqi Zhou.
\newblock Deep learning scaling is predictable, empirically.
\newblock \emph{arXiv preprint arXiv:1712.00409}, 2017.

\bibitem[Heusel et~al.(2017)Heusel, Ramsauer, Unterthiner, Nessler, and Hochreiter]{heusel2017gans}
Martin Heusel, Hubert Ramsauer, Thomas Unterthiner, Bernhard Nessler, and Sepp Hochreiter.
\newblock {GANs} trained by a two time-scale update rule converge to a local {N}ash equilibrium.
\newblock In \emph{Advances in Neural Information Processing Systems 30 (NIPS 2017)}, 2017.

\bibitem[Ho and Salimans(2022)]{ho2022cfg}
Jonathan Ho and Tim Salimans.
\newblock Classifier-free diffusion guidance.
\newblock \emph{arXiv preprint arXiv:2207.12598}, 2022.

\bibitem[Ho et~al.(2020)Ho, Jain, and Abbeel]{ho2020ddpm}
Jonathan Ho, Ajay Jain, and Pieter Abbeel.
\newblock Denoising diffusion probabilistic models.
\newblock In \emph{Advances in Neural Information Processing Systems 33 (NeurIPS)}, 2020.

\bibitem[Hoffmann et~al.(2022)Hoffmann, Borgeaud, Mensch, Buchatskaya, Cai, Rutherford, de~Las~Casas, Hendricks, Welbl, Clark, Hennigan, Noland, Millican, van~den Driessche, Damoc, Guy, Osindero, Simonyan, Elsen, Rae, Vinyals, and Sifre]{hoffmann2022chinchilla}
Jordan Hoffmann, Sebastian Borgeaud, Arthur Mensch, Elena Buchatskaya, Trevor Cai, Eliza Rutherford, Diego de~Las~Casas, Lisa~Anne Hendricks, Johannes Welbl, Aidan Clark, Tom Hennigan, Eric Noland, Katie Millican, George van~den Driessche, Bogdan Damoc, Aurelia Guy, Simon Osindero, Karen Simonyan, Erich Elsen, Jack~W. Rae, Oriol Vinyals, and Laurent Sifre.
\newblock Training compute-optimal large language models.
\newblock In \emph{Advances in Neural Information Processing Systems 35 (NeurIPS)}, 2022.

\bibitem[Jayasumana et~al.(2024)Jayasumana, Ramalingam, Veit, Glasner, Chakrabarti, and Kumar]{jayasumana2024rethinking}
Sadeep Jayasumana, Srikumar Ramalingam, Andreas Veit, Daniel Glasner, Ayan Chakrabarti, and Sanjiv Kumar.
\newblock Rethinking {FID}: Towards a better evaluation metric for image generation.
\newblock In \emph{Proceedings of the IEEE/CVF Conference on Computer Vision and Pattern Recognition (CVPR)}, 2024.

\bibitem[Jordan(2023)]{jordan2023calibrated}
Keller Jordan.
\newblock Calibrated chaos: Variance between runs of neural network training is harmless and inevitable.
\newblock \emph{arXiv preprint arXiv:2304.01910}, 2023.

\bibitem[Kadkhodaie et~al.(2024)Kadkhodaie, Guth, Simoncelli, and Mallat]{kadkhodaie2024generalization}
Zahra Kadkhodaie, Florentin Guth, Eero~P. Simoncelli, and St{\'e}phane Mallat.
\newblock Generalization in diffusion models arises from geometry-adaptive harmonic representations.
\newblock In \emph{The Twelfth International Conference on Learning Representations (ICLR)}, 2024.

\bibitem[Kaplan et~al.(2020)Kaplan, McCandlish, Henighan, Brown, Chess, Child, Gray, Radford, Wu, and Amodei]{kaplan2020scaling}
Jared Kaplan, Sam McCandlish, Tom Henighan, Tom~B. Brown, Benjamin Chess, Rewon Child, Scott Gray, Alec Radford, Jeffrey Wu, and Dario Amodei.
\newblock Scaling laws for neural language models.
\newblock \emph{arXiv preprint arXiv:2001.08361}, 2020.

\bibitem[Karras et~al.(2019)Karras, Laine, and Aila]{karras2019stylegan}
Tero Karras, Samuli Laine, and Timo Aila.
\newblock A style-based generator architecture for generative adversarial networks.
\newblock In \emph{Proceedings of the IEEE/CVF Conference on Computer Vision and Pattern Recognition (CVPR)}, 2019.

\bibitem[Karras et~al.(2022)Karras, Aittala, Aila, and Laine]{karras2022edm}
Tero Karras, Miika Aittala, Timo Aila, and Samuli Laine.
\newblock Elucidating the design space of diffusion-based generative models.
\newblock In \emph{Advances in Neural Information Processing Systems 35 (NeurIPS)}, 2022.

\bibitem[Karras et~al.(2024)Karras, Aittala, Lehtinen, Hellsten, Aila, and Laine]{karras2024edm2}
Tero Karras, Miika Aittala, Jaakko Lehtinen, Janne Hellsten, Timo Aila, and Samuli Laine.
\newblock Analyzing and improving the training dynamics of diffusion models.
\newblock In \emph{Proceedings of the IEEE/CVF Conference on Computer Vision and Pattern Recognition (CVPR)}, 2024.

\bibitem[Kendall and Babington~Smith(1939)]{kendall1939w}
M.~G. Kendall and B.~Babington~Smith.
\newblock The problem of $m$ rankings.
\newblock \emph{The Annals of Mathematical Statistics}, 1939.

\bibitem[Kiefer(1953)]{kiefer1953sequential}
J.~Kiefer.
\newblock Sequential minimax search for a maximum.
\newblock \emph{Proceedings of the American Mathematical Society}, 1953.

\bibitem[Kingma and Ba(2015)]{kingma2015adam}
Diederik~P. Kingma and Jimmy Ba.
\newblock {Adam}: A method for stochastic optimization.
\newblock In \emph{3rd International Conference on Learning Representations (ICLR)}, 2015.

\bibitem[Kingma et~al.(2021)Kingma, Salimans, Poole, and Ho]{kingma2021vdm}
Diederik~P. Kingma, Tim Salimans, Ben Poole, and Jonathan Ho.
\newblock Variational diffusion models.
\newblock In \emph{Advances in Neural Information Processing Systems 34 (NeurIPS)}, 2021.

\bibitem[Kynk{\"a}{\"a}nniemi et~al.(2019)Kynk{\"a}{\"a}nniemi, Karras, Laine, Lehtinen, and Aila]{kynkaanniemi2019improved}
Tuomas Kynk{\"a}{\"a}nniemi, Tero Karras, Samuli Laine, Jaakko Lehtinen, and Timo Aila.
\newblock Improved precision and recall metric for assessing generative models.
\newblock In \emph{Advances in Neural Information Processing Systems 32 (NeurIPS)}, 2019.

\bibitem[Kynk{\"a}{\"a}nniemi et~al.(2023)Kynk{\"a}{\"a}nniemi, Karras, Aittala, Aila, and Lehtinen]{kynkaanniemi2023role}
Tuomas Kynk{\"a}{\"a}nniemi, Tero Karras, Miika Aittala, Timo Aila, and Jaakko Lehtinen.
\newblock The role of {I}mage{N}et classes in {F}r{\'e}chet inception distance.
\newblock In \emph{The Eleventh International Conference on Learning Representations (ICLR)}, 2023.

\bibitem[LaLoudouana et~al.(2003)LaLoudouana, Tarare, Center, and Selacie]{laloudouana2003data}
Doudou LaLoudouana, Mambobo~Bonouliqui Tarare, Lupano~Tecallonou Center, and GUANA Selacie.
\newblock Data set selection.
\newblock \emph{Journal of Machine Learning Gossip}, 2003.

\bibitem[Li et~al.(2024)Li, Tian, Li, Deng, and He]{li2024mar}
Tianhong Li, Yonglong Tian, He~Li, Mingyang Deng, and Kaiming He.
\newblock Autoregressive image generation without vector quantization.
\newblock In \emph{Advances in Neural Information Processing Systems 37 (NeurIPS)}, 2024.

\bibitem[Liang et~al.(2024)Liang, He, Yang, and Dai]{liang2024scalingdit}
Zhengyang Liang, Hao He, Ceyuan Yang, and Bo~Dai.
\newblock Scaling laws for diffusion transformers.
\newblock \emph{arXiv preprint arXiv:2410.08184}, 2024.

\bibitem[Lipman et~al.(2023)Lipman, Chen, Ben-Hamu, Nickel, and Le]{lipman2023flowmatching}
Yaron Lipman, Ricky T.~Q. Chen, Heli Ben-Hamu, Maximilian Nickel, and Matt Le.
\newblock Flow matching for generative modeling.
\newblock In \emph{The Eleventh International Conference on Learning Representations (ICLR)}, 2023.

\bibitem[Liu et~al.(2023)Liu, Gong, and Liu]{liu2023rectified}
Xingchao Liu, Chengyue Gong, and Qiang Liu.
\newblock Flow straight and fast: Learning to generate and transfer data with rectified flow.
\newblock In \emph{The Eleventh International Conference on Learning Representations (ICLR)}, 2023.

\bibitem[Lucic et~al.(2018)Lucic, Kurach, Michalski, Gelly, and Bousquet]{lucic2018gans}
Mario Lucic, Karol Kurach, Marcin Michalski, Sylvain Gelly, and Olivier Bousquet.
\newblock Are {GAN}s created equal? {A} large-scale study.
\newblock In \emph{Advances in Neural Information Processing Systems 31 (NeurIPS)}, 2018.

\bibitem[Ma et~al.(2024)Ma, Goldstein, Albergo, Boffi, Vanden-Eijnden, and Xie]{ma2024sit}
Nanye Ma, Mark Goldstein, Michael~S. Albergo, Nicholas~M. Boffi, Eric Vanden-Eijnden, and Saining Xie.
\newblock {SiT}: Exploring flow and diffusion-based generative models with scalable interpolant transformers.
\newblock In \emph{Computer Vision -- ECCV 2024}, 2024.

\bibitem[Madhyastha and Jain(2019)]{madhyastha2019stability}
Pranava Madhyastha and Rishabh Jain.
\newblock On model stability as a function of random seed.
\newblock In \emph{Proceedings of the 23rd Conference on Computational Natural Language Learning (CoNLL)}, 2019.

\bibitem[Melis et~al.(2018)Melis, Dyer, and Blunsom]{melis2018state}
G{\'a}bor Melis, Chris Dyer, and Phil Blunsom.
\newblock On the state of the art of evaluation in neural language models.
\newblock In \emph{International Conference on Learning Representations (ICLR)}, 2018.

\bibitem[Morozov et~al.(2021)Morozov, Voynov, and Babenko]{morozov2021self}
Stanislav Morozov, Andrey Voynov, and Artem Babenko.
\newblock On self-supervised image representations for {GAN} evaluation.
\newblock In \emph{9th International Conference on Learning Representations (ICLR)}, 2021.

\bibitem[Mosbach et~al.(2021)Mosbach, Andriushchenko, and Klakow]{mosbach2021stability}
Marius Mosbach, Maksym Andriushchenko, and Dietrich Klakow.
\newblock On the stability of fine-tuning {BERT}: Misconceptions, explanations, and strong baselines.
\newblock In \emph{International Conference on Learning Representations (ICLR)}, 2021.

\bibitem[Naeem et~al.(2020)Naeem, Oh, Uh, Choi, and Yoo]{naeem2020reliable}
Muhammad~Ferjad Naeem, Seong~Joon Oh, Youngjung Uh, Yunjey Choi, and Jaejun Yoo.
\newblock Reliable fidelity and diversity metrics for generative models.
\newblock In \emph{Proceedings of the 37th International Conference on Machine Learning}, 2020.

\bibitem[Nagarajan and Kolter(2019)]{nagarajan2019uniform}
Vaishnavh Nagarajan and J.~Zico Kolter.
\newblock Uniform convergence may be unable to explain generalization in deep learning.
\newblock In \emph{Advances in Neural Information Processing Systems 32 (NeurIPS)}, 2019.

\bibitem[Nichol and Dhariwal(2021)]{nichol2021iddpm}
Alexander~Quinn Nichol and Prafulla Dhariwal.
\newblock Improved denoising diffusion probabilistic models.
\newblock In \emph{Proceedings of the 38th International Conference on Machine Learning (ICML)}, 2021.

\bibitem[Oquab et~al.(2024)Oquab, Darcet, Moutakanni, Vo, Szafraniec, Khalidov, Fernandez, Haziza, Massa, El-Nouby, Assran, Ballas, Galuba, Howes, Huang, Li, Misra, Rabbat, Sharma, Synnaeve, Xu, J{\'e}gou, Mairal, Labatut, Joulin, and Bojanowski]{oquab2024dinov2}
Maxime Oquab, Timoth{\'e}e Darcet, Th{\'e}o Moutakanni, Huy~V. Vo, Marc Szafraniec, Vasil Khalidov, Pierre Fernandez, Daniel Haziza, Francisco Massa, Alaaeldin El-Nouby, Mahmoud Assran, Nicolas Ballas, Wojciech Galuba, Russell Howes, Po-Yao Huang, Shang-Wen Li, Ishan Misra, Michael Rabbat, Vasu Sharma, Gabriel Synnaeve, Hu~Xu, Herv{\'e} J{\'e}gou, Julien Mairal, Patrick Labatut, Armand Joulin, and Piotr Bojanowski.
\newblock {DINOv2}: Learning robust visual features without supervision.
\newblock \emph{Transactions on Machine Learning Research}, 2024.

\bibitem[Parmar et~al.(2022)Parmar, Zhang, and Zhu]{parmar2022aliased}
Gaurav Parmar, Richard Zhang, and Jun-Yan Zhu.
\newblock On aliased resizing and surprising subtleties in {GAN} evaluation.
\newblock In \emph{Proceedings of the IEEE/CVF Conference on Computer Vision and Pattern Recognition (CVPR)}, 2022.

\bibitem[Peebles and Xie(2023)]{peebles2023dit}
William Peebles and Saining Xie.
\newblock Scalable diffusion models with transformers.
\newblock In \emph{Proceedings of the IEEE/CVF International Conference on Computer Vision (ICCV)}, 2023.

\bibitem[Pham et~al.(2020)Pham, Qian, Wang, Lutellier, Rosenthal, Tan, Yu, and Nagappan]{pham2020problems}
Hung~Viet Pham, Shangshu Qian, Jiannan Wang, Thibaud Lutellier, Jonathan Rosenthal, Lin Tan, Yaoliang Yu, and Nachiappan Nagappan.
\newblock Problems and opportunities in training deep learning software systems: An analysis of variance.
\newblock In \emph{Proceedings of the 35th IEEE/ACM International Conference on Automated Software Engineering (ASE)}, 2020.

\bibitem[Picard(2021)]{picard2021torch}
David Picard.
\newblock Torch.manual\_seed(3407) is all you need: On the influence of random seeds in deep learning architectures for computer vision.
\newblock \emph{arXiv preprint arXiv:2109.08203}, 2021.

\bibitem[Pineau et~al.(2021)Pineau, Vincent-Lamarre, Sinha, Larivi{\`e}re, Beygelzimer, d'Alch{\'e} Buc, Fox, and Larochelle]{pineau2021improving}
Joelle Pineau, Philippe Vincent-Lamarre, Koustuv Sinha, Vincent Larivi{\`e}re, Alina Beygelzimer, Florence d'Alch{\'e} Buc, Emily Fox, and Hugo Larochelle.
\newblock Improving reproducibility in machine learning research (a report from the {NeurIPS} 2019 reproducibility program).
\newblock \emph{Journal of Machine Learning Research}, 2021.

\bibitem[Raff(2019)]{raff2019step}
Edward Raff.
\newblock A step toward quantifying independently reproducible machine learning research.
\newblock In \emph{Advances in Neural Information Processing Systems 32 (NeurIPS)}, 2019.

\bibitem[Recht et~al.(2019)Recht, Roelofs, Schmidt, and Shankar]{recht2019imagenet}
Benjamin Recht, Rebecca Roelofs, Ludwig Schmidt, and Vaishaal Shankar.
\newblock Do {ImageNet} classifiers generalize to {ImageNet}?
\newblock In \emph{Proceedings of the 36th International Conference on Machine Learning (ICML)}, 2019.

\bibitem[Reimers and Gurevych(2017)]{reimers2017reporting}
Nils Reimers and Iryna Gurevych.
\newblock Reporting score distributions makes a difference: Performance study of {LSTM}-networks for sequence tagging.
\newblock In \emph{Proceedings of the 2017 Conference on Empirical Methods in Natural Language Processing (EMNLP)}, 2017.

\bibitem[Rombach et~al.(2022)Rombach, Blattmann, Lorenz, Esser, and Ommer]{rombach2022ldm}
Robin Rombach, Andreas Blattmann, Dominik Lorenz, Patrick Esser, and Bj{\"o}rn Ommer.
\newblock High-resolution image synthesis with latent diffusion models.
\newblock In \emph{Proceedings of the IEEE/CVF Conference on Computer Vision and Pattern Recognition (CVPR)}, 2022.

\bibitem[Sajjadi et~al.(2018)Sajjadi, Bachem, Lucic, Bousquet, and Gelly]{sajjadi2018assessing}
Mehdi S.~M. Sajjadi, Olivier Bachem, Mario Lucic, Olivier Bousquet, and Sylvain Gelly.
\newblock Assessing generative models via precision and recall.
\newblock In \emph{Advances in Neural Information Processing Systems 31 (NeurIPS)}, 2018.

\bibitem[Salimans et~al.(2016)Salimans, Goodfellow, Zaremba, Cheung, Radford, and Chen]{salimans2016improved}
Tim Salimans, Ian Goodfellow, Wojciech Zaremba, Vicki Cheung, Alec Radford, and Xi~Chen.
\newblock Improved techniques for training {GANs}.
\newblock In \emph{Advances in Neural Information Processing Systems 29 (NIPS 2016)}, 2016.

\bibitem[Schuhmann et~al.(2022)Schuhmann, Beaumont, Vencu, Gordon, Wightman, Cherti, Coombes, Katta, Mullis, Wortsman, Schramowski, Kundurthy, Crowson, Schmidt, Kaczmarczyk, and Jitsev]{schuhmann2022laion5b}
Christoph Schuhmann, Romain Beaumont, Richard Vencu, Cade Gordon, Ross Wightman, Mehdi Cherti, Theo Coombes, Aarush Katta, Clayton Mullis, Mitchell Wortsman, Patrick Schramowski, Srivatsa Kundurthy, Katherine Crowson, Ludwig Schmidt, Robert Kaczmarczyk, and Jenia Jitsev.
\newblock {LAION-5B}: An open large-scale dataset for training next generation image-text models.
\newblock In \emph{Advances in Neural Information Processing Systems 35 (NeurIPS), Datasets and Benchmarks Track}, 2022.

\bibitem[Sculley et~al.(2018)Sculley, Snoek, Wiltschko, and Rahimi]{sculley2018winners}
D.~Sculley, Jasper Snoek, Alexander~B. Wiltschko, and Ali Rahimi.
\newblock Winner's curse? on pace, progress, and empirical rigor.
\newblock In \emph{6th International Conference on Learning Representations (ICLR), Workshop Track}, 2018.

\bibitem[Sohl-Dickstein et~al.(2015)Sohl-Dickstein, Weiss, Maheswaranathan, and Ganguli]{sohldickstein2015thermo}
Jascha Sohl-Dickstein, Eric~A. Weiss, Niru Maheswaranathan, and Surya Ganguli.
\newblock Deep unsupervised learning using nonequilibrium thermodynamics.
\newblock In \emph{Proceedings of the 32nd International Conference on Machine Learning (ICML)}, 2015.

\bibitem[Song et~al.(2021{\natexlab{a}})Song, Meng, and Ermon]{song2021ddim}
Jiaming Song, Chenlin Meng, and Stefano Ermon.
\newblock Denoising diffusion implicit models.
\newblock In \emph{9th International Conference on Learning Representations (ICLR)}, 2021{\natexlab{a}}.

\bibitem[Song and Ermon(2019)]{song2019ncsn}
Yang Song and Stefano Ermon.
\newblock Generative modeling by estimating gradients of the data distribution.
\newblock In \emph{Advances in Neural Information Processing Systems 32 (NeurIPS)}, 2019.

\bibitem[Song et~al.(2021{\natexlab{b}})Song, Sohl-Dickstein, Kingma, Kumar, Ermon, and Poole]{song2021scoresde}
Yang Song, Jascha Sohl-Dickstein, Diederik~P. Kingma, Abhishek Kumar, Stefano Ermon, and Ben Poole.
\newblock Score-based generative modeling through stochastic differential equations.
\newblock In \emph{9th International Conference on Learning Representations (ICLR)}, 2021{\natexlab{b}}.

\bibitem[Spearman(1904)]{spearman1904rank}
C.~Spearman.
\newblock The proof and measurement of association between two things.
\newblock \emph{The American Journal of Psychology}, 1904.

\bibitem[Stein et~al.(2023)Stein, Cresswell, Hosseinzadeh, Sui, Ross, Villecroze, Liu, Caterini, Taylor, and Loaiza-Ganem]{stein2023exposing}
George Stein, Jesse~C. Cresswell, Rasa Hosseinzadeh, Yi~Sui, Brendan~Leigh Ross, Valentin Villecroze, Zhaoyan Liu, Anthony~L. Caterini, J.~Eric~T. Taylor, and Gabriel Loaiza-Ganem.
\newblock Exposing flaws of generative model evaluation metrics and their unfair treatment of diffusion models.
\newblock In \emph{Advances in Neural Information Processing Systems 36 (NeurIPS)}, 2023.

\bibitem[Summers and Dinneen(2021)]{summers2021nondeterminism}
Cecilia Summers and Michael~J. Dinneen.
\newblock Nondeterminism and instability in neural network optimization.
\newblock In \emph{Proceedings of the 38th International Conference on Machine Learning (ICML)}, 2021.

\bibitem[Szegedy et~al.(2016)Szegedy, Vanhoucke, Ioffe, Shlens, and Wojna]{szegedy2016inceptionv3}
Christian Szegedy, Vincent Vanhoucke, Sergey Ioffe, Jon Shlens, and Zbigniew Wojna.
\newblock Rethinking the {I}nception architecture for computer vision.
\newblock In \emph{Proceedings of the IEEE Conference on Computer Vision and Pattern Recognition (CVPR)}, 2016.

\bibitem[Vaswani et~al.(2017)Vaswani, Shazeer, Parmar, Uszkoreit, Jones, Gomez, Kaiser, and Polosukhin]{vaswani2017transformer}
Ashish Vaswani, Noam Shazeer, Niki Parmar, Jakob Uszkoreit, Llion Jones, Aidan~N. Gomez, {\L}ukasz Kaiser, and Illia Polosukhin.
\newblock Attention is all you need.
\newblock In \emph{Advances in Neural Information Processing Systems 30 (NeurIPS)}, 2017.

\bibitem[Welch(1947)]{welch1947ttest}
B.~L. Welch.
\newblock The generalization of `{S}tudent's' problem when several different population variances are involved.
\newblock \emph{Biometrika}, 1947.

\bibitem[Wenzel et~al.(2020)Wenzel, Snoek, Tran, and Jenatton]{wenzel2020hyperensembles}
Florian Wenzel, Jasper Snoek, Dustin Tran, and Rodolphe Jenatton.
\newblock Hyperparameter ensembles for robustness and uncertainty quantification.
\newblock In \emph{Advances in Neural Information Processing Systems 33 (NeurIPS)}, 2020.

\bibitem[Wu et~al.(2025)Wu, Liu, Yilmaz, Konermann, Walter, and Stegmaier]{wu2025pragmatic}
Yuli Wu, Fucheng Liu, R{\"u}veyda Yilmaz, Henning Konermann, Peter Walter, and Johannes Stegmaier.
\newblock A pragmatic note on evaluating generative models with {Fr\'echet} inception distance for retinal image synthesis.
\newblock \emph{arXiv preprint arXiv:2502.17160}, 2025.

\bibitem[Xu et~al.(2024)Xu, Zhang, and Shi]{xu2024goodseed}
Katherine Xu, Lingzhi Zhang, and Jianbo Shi.
\newblock Good seed makes a good crop: Discovering secret seeds in text-to-image diffusion models.
\newblock \emph{arXiv preprint arXiv:2405.14828}, 2024.

\bibitem[Yang et~al.(2022)Yang, Hu, Babuschkin, Sidor, Liu, Farhi, Ryder, Pachocki, Chen, and Gao]{yang2022mup}
Greg Yang, Edward~J. Hu, Igor Babuschkin, Szymon Sidor, Xiaodong Liu, David Farhi, Nick Ryder, Jakub Pachocki, Weizhu Chen, and Jianfeng Gao.
\newblock Tensor programs {V}: Tuning large neural networks via zero-shot hyperparameter transfer.
\newblock In \emph{Advances in Neural Information Processing Systems}, 2022.

\bibitem[Zhang et~al.(2024)Zhang, Zhou, Lu, Guo, Wang, Shen, and Qu]{zhang2024emergence}
Huijie Zhang, Jinfan Zhou, Yifu Lu, Minzhe Guo, Peng Wang, Liyue Shen, and Qing Qu.
\newblock The emergence of reproducibility and consistency in diffusion models.
\newblock In \emph{Proceedings of the 41st International Conference on Machine Learning (ICML)}, 2024.

\end{thebibliography}


\newpage
\appendix

\begin{center}
  {\LARGE\bfseries Supplementary Material}
\end{center}
\vspace{1em}

\section{Overview}
\label{app:overview}

\noindent\emph{The supplementary material extends the main paper along
three axes: (i) further empirical analyses of the seed lottery on
Inception FID, (ii) a theoretical justification of the golden-section
search, and (iii) a metric-robustness stress test that replicates every
main-paper claim under DINOv2 FID and the four Inception PRDC metrics.}

\myparagraph{What the stress test changes (and what it does not).}
The cross-metric replication of \autoref{app:supp-metrics} sharpens
rather than softens the main paper. The training lottery still dominates
the generation lottery on every fidelity-axis metric, the
\emph{noise > init > data} hierarchy of \autoref{sec:results-decomposition}
holds verbatim, and the lucky-seed speedup of \autoref{sec:results-speedup}
\emph{grows} to $2$--$3{\times}$ on SiT-L/XL once the benchmark moves to
precision, density or coverage. Inception recall is the lone metric that
inverts the asymmetry: \autoref{app:supp-overview} traces the inversion to
the only PRDC metric whose $k$-NN balls live on the per-evaluation
generated set, and is therefore a property of recall's estimator rather
than a counterexample to the seed-lottery story.

\myparagraph{Roadmap.}
\begin{itemize}
\item \autoref{app:setup-detail} -- per-experiment configurations
  ($N$, $K$, training-step budget, deviations from the shared protocol)
  for every subsection of \autoref{sec:results}.
\item \autoref{app:summary-tables} -- companion tables consolidating the
  headline numbers behind \autoref{sec:results-decomposition},
  \autoref{sec:results-scaling} and \autoref{sec:results-mup} for
  one-stop lookup.
\item \autoref{app:extra-analyses} -- five additional empirical analyses
  on Inception FID: panel-by-panel violins for the source decomposition
  (\autoref{app:disentangle-violins}), rank instability across summary
  statistics (\autoref{sec:results-training}), a $10\!\times\!15$
  factorial test of init-seed universality (\autoref{sec:results-optimality}),
  per-bracket numbers for the $\mu$P sweep (\autoref{app:mup-extras}),
  and a practitioner-facing $\mathrm{FID}\!\to\!\mathrm{CI}$ lookup
  (\autoref{app:fid-to-ci}).
\item \autoref{app:golden-section} -- convergence of \autoref{alg:gss},
  unimodality of $\mathrm{FID}(\mathrm{CFG})$ under a Gaussian feature
  model, and noise sensitivity of the returned optimum.
\item \autoref{app:supp-metrics} -- DINOv2 FID and Inception PRDC
  replication of every \autoref{sec:results} subsection, row-for-row
  with the main-paper tables.
\item \autoref{app:fid-visualization} -- per-class galleries of
  generated samples ordered by FID, both with and without
  classifier-free guidance, that visualise how perceptual quality
  tracks (or fails to track) the FID gradient.
\end{itemize}

\myparagraph{The two lotteries as a slot machine.}
\autoref{fig:slot-machine} recasts the randomness pipeline of
\autoref{fig:lottery-machine} as a pair of slot machines and pairs it with
the measured SiT-B/2 panel, making the $3.2{\times}$ training-vs-generation
asymmetry literal: the left machine spins three reels (initialisation, data
order, per-step noise) to yield one network, the right machine spins ten
sampling seeds to score ten FIDs of that network.

\definecolor{cFelt}    {HTML}{1F6147}  
\definecolor{cFeltDk}  {HTML}{124B36}  
\definecolor{cGold}    {HTML}{D9B65C}  
\definecolor{cGoldDk}  {HTML}{9C7C2F}  
\definecolor{cGoldHi}  {HTML}{F0DC9A}  
\definecolor{cCream}   {HTML}{FBF4DD}  
\definecolor{cTan}     {HTML}{E7D3A2}  
\definecolor{cInk}     {HTML}{241F17}  
\definecolor{cClaret}  {HTML}{9A2B22}  

\newcommand{\fldie}[4]{
  \begin{scope}[opacity=#4]
  \filldraw[fill=cClaret, draw=cClaret!65, line width=0.7pt, rounded corners=1.8pt]
    (#1-0.31,#2-0.31) rectangle (#1+0.31,#2+0.31);
  \ifcase#3\relax
    \def\pips{0/0}%
  \or \def\pips{0/0}%
  \or \def\pips{-0.165/0.165,0.165/-0.165}%
  \or \def\pips{-0.165/0.165,0/0,0.165/-0.165}%
  \or \def\pips{-0.165/0.165,0.165/0.165,-0.165/-0.165,0.165/-0.165}%
  \or \def\pips{-0.165/0.165,0.165/0.165,0/0,-0.165/-0.165,0.165/-0.165}%
  \or \def\pips{-0.165/0.165,0.165/0.165,-0.165/0,0.165/0,-0.165/-0.165,0.165/-0.165}%
  \fi
  \foreach \px/\py in \pips {\fill[cCream] (#1+\px,#2+\py) circle (0.046);}
  \end{scope}
}
\newcommand{\flspade}{\textcolor{cInk}{$\spadesuit$}}
\newcommand{\flheart}{\textcolor{cClaret}{$\heartsuit$}}
\newcommand{\fldiamond}{\textcolor{cClaret}{$\diamondsuit$}}
\newcommand{\flcard}[4]{
  \begin{scope}[rotate around={#3:(#1,#2-0.22)}]
    \filldraw[fill=cCream, draw=cInk, line width=0.5pt, rounded corners=1.3pt]
      (#1-0.14,#2-0.22) rectangle (#1+0.14,#2+0.24);
    \node[font=\scriptsize] at (#1,#2+0.105) {#4};
  \end{scope}}
\newcommand{\fldeck}[6]{
  \begin{scope}[opacity=#3]
    \flcard{#1}{#2}{-23}{#4}\flcard{#1}{#2}{0}{#5}\flcard{#1}{#2}{23}{#6}
  \end{scope}}
\newcommand{\flnoise}[4]{
  \begin{scope}[opacity=#4]
  \ifcase#3\relax
  \or 
    \draw[cClaret, line width=0.9pt, line cap=round, line join=round]
    (#1-0.45,#2)--(#1-0.386,#2+0.19)--(#1-0.321,#2-0.14)--(#1-0.257,#2+0.23)--(#1-0.193,#2-0.09)--(#1-0.129,#2+0.12)--(#1-0.064,#2-0.21)--(#1,#2+0.07)--(#1+0.064,#2+0.18)--(#1+0.129,#2-0.24)--(#1+0.193,#2+0.10)--(#1+0.257,#2-0.16)--(#1+0.321,#2+0.20)--(#1+0.386,#2-0.11)--(#1+0.45,#2);
  \or 
    \draw[cClaret, line width=0.9pt, line cap=round, line join=round]
    (#1-0.45,#2)--(#1-0.40,#2+0.10)--(#1-0.35,#2-0.13)--(#1-0.30,#2+0.06)--(#1-0.25,#2-0.09)--(#1-0.20,#2+0.14)--(#1-0.15,#2-0.05)--(#1-0.10,#2+0.11)--(#1-0.05,#2-0.15)--(#1,#2+0.04)--(#1+0.05,#2+0.12)--(#1+0.10,#2-0.08)--(#1+0.15,#2+0.15)--(#1+0.20,#2-0.11)--(#1+0.25,#2+0.07)--(#1+0.30,#2-0.14)--(#1+0.35,#2+0.09)--(#1+0.40,#2-0.06)--(#1+0.45,#2);
  \or 
    \draw[cClaret, line width=0.9pt, line cap=round, line join=round]
    (#1-0.45,#2)--(#1-0.36,#2+0.25)--(#1-0.27,#2-0.20)--(#1-0.18,#2+0.28)--(#1-0.09,#2-0.27)--(#1,#2+0.15)--(#1+0.09,#2-0.24)--(#1+0.18,#2+0.22)--(#1+0.27,#2-0.18)--(#1+0.36,#2+0.12)--(#1+0.45,#2);
  \fi
  \end{scope}
}

\newcommand{\fldrum}[1]{
  \shade[top color=cTan, middle color=cCream, bottom color=cTan]
    (#1-0.55,3.25) rectangle (#1+0.55,5.45);
  \foreach \dy in {0.66,0.90}{
    \draw[cInk!10,line width=0.35pt] (#1-0.5,4.35+\dy)--(#1+0.5,4.35+\dy);
    \draw[cInk!10,line width=0.35pt] (#1-0.5,4.35-\dy)--(#1+0.5,4.35-\dy);}
  \draw[cClaret!55,line width=0.8pt] (#1-0.5,4.35)--(#1+0.5,4.35);
}
\newcommand{\flreelframe}[1]{
  \draw[rounded corners=2pt, cGoldDk, line width=0.9pt]
    (#1-0.55,3.25) rectangle (#1+0.55,5.45);
}

\newcommand{\flgenreel}[4]{
  \ifnum#4=1
    \fill[cGoldHi!55, rounded corners=4pt] (#1-0.60,#2-0.56) rectangle (#1+0.60,#2+0.56);
  \fi
  \begin{scope}
    \clip[rounded corners=1.5pt] (#1-0.49,#2-0.46) rectangle (#1+0.49,#2+0.46);
    \ifnum#4=1
      \shade[top color=cGold, middle color=cGoldHi, bottom color=cGold]
        (#1-0.49,#2-0.46) rectangle (#1+0.49,#2+0.46);
    \else
      \shade[top color=cTan, middle color=cCream, bottom color=cTan]
        (#1-0.49,#2-0.46) rectangle (#1+0.49,#2+0.46);
    \fi
    \draw[cInk!12,line width=0.35pt] (#1-0.47,#2+0.28)--(#1+0.47,#2+0.28);
    \draw[cInk!12,line width=0.35pt] (#1-0.47,#2-0.28)--(#1+0.47,#2-0.28);
    \ifnum#4=1
      \draw[cClaret!75,line width=0.8pt] (#1-0.47,#2)--(#1+0.47,#2);
      \node[font=\scriptsize\ttfamily\bfseries, cInk] at (#1,#2) {#3};
    \else
      \draw[palSlateDark!85,line width=0.7pt] (#1-0.47,#2)--(#1+0.47,#2);
      \node[font=\scriptsize\ttfamily, cInk!88] at (#1,#2) {#3};
    \fi
  \end{scope}
  \ifnum#4=1
    \draw[rounded corners=1.5pt, cGoldDk, line width=1pt]
      (#1-0.49,#2-0.46) rectangle (#1+0.49,#2+0.46);
    \node[font=\fltitle\tiny\bfseries, cGoldHi, fill=cClaret, rounded corners=1.5pt,
          inner sep=2pt, anchor=south] at (#1,#2+0.52) {$\bigstar$ JACKPOT};
  \else
    \draw[rounded corners=1.5pt, cGoldDk!75, line width=0.6pt]
      (#1-0.49,#2-0.46) rectangle (#1+0.49,#2+0.46);
  \fi
}

\newcommand{\flcabinet}[4]{%
  \draw[rounded corners=5pt, fill=cFelt, draw=cGoldDk, line width=1.8pt] (#1,#2) rectangle (#3,#4);
  \draw[rounded corners=4pt, draw=cGold, line width=0.6pt] (#1+0.11,#2+0.11) rectangle (#3-0.11,#4-0.11);
}


\begin{figure}[t]
  \centering
  \tikzsetnextfilename{output-slotmachine}%
  \resizebox{0.99\linewidth}{!}{%
  \begin{tikzpicture}
    \useasboundingbox (0.05,2.12) rectangle (13.65,6.66);

    \flcabinet{0.18}{2.22}{4.55}{6.58}
    \draw[rounded corners=2.5pt, fill=cFeltDk, draw=cGold, line width=0.8pt]
      (0.34,5.90) rectangle (4.42,6.46);
    \node[font=\fltitle\small\bfseries, cGoldHi] at (2.38,6.18) {TRAINING LOTTERY};
    \node[font=\flsf\scriptsize, cCream] at (2.38,5.66)
      {pull once $\rightarrow$ train a network};
    \begin{scope}
      \clip[rounded corners=2pt] (0.475,3.25) rectangle (1.575,5.45);
      \fldrum{1.025}
      \fldie{1.025}{5.30}{2}{0.7} \fldie{1.025}{4.35}{5}{1} \fldie{1.025}{3.40}{3}{0.7}
    \end{scope}\flreelframe{1.025}
    \begin{scope}
      \clip[rounded corners=2pt] (1.725,3.25) rectangle (2.825,5.45);
      \fldrum{2.275}
      \fldeck{2.275}{5.30}{0.78}{\flheart}{\fldiamond}{\flspade}
      \fldeck{2.275}{4.35}{1}{\flspade}{\flheart}{\fldiamond}
      \fldeck{2.275}{3.40}{0.78}{\fldiamond}{\flspade}{\flheart}
    \end{scope}\flreelframe{2.275}
    \begin{scope}
      \clip[rounded corners=2pt] (2.975,3.25) rectangle (4.075,5.45);
      \fldrum{3.525}
      \flnoise{3.525}{5.30}{2}{0.7} \flnoise{3.525}{4.35}{1}{1} \flnoise{3.525}{3.40}{3}{0.7}
    \end{scope}\flreelframe{3.525}
    \node[font=\scriptsize, cGoldHi] at (0.34,4.35) {$\blacktriangleright$};
    \node[font=\scriptsize, cGoldHi] at (4.14,4.35) {$\blacktriangleleft$};
    \foreach \cx/\a/\b in {1.025/INIT/weights, 2.275/DATA/order, 3.525/NOISE/per-step}{
      \node[font=\flsf\scriptsize\bfseries, cCream] at (\cx,3.00) {\a};
      \node[font=\flsf\tiny\itshape, cGoldHi] at (\cx,2.74) {\b};
    }
    \node[font=\flsf\tiny, cCream!85] at (2.38,2.50)
      {three coupled sources of randomness};
    \draw[cGoldDk, line width=2.6pt, line cap=round] (4.32,3.55) -- (4.88,4.82);
    \draw[cGoldHi, line width=0.9pt, line cap=round] (4.32,3.55) -- (4.88,4.82);
    \fill[cGoldDk] (4.32,3.55) circle (0.10);
    \shade[ball color=cClaret] (4.91,4.89) circle (0.17);

    \draw[-{Stealth[length=5pt]}, cGoldDk, line width=1.1pt] (4.58,3.74) -- (5.14,3.74);
    \fill[black!10, rounded corners=3pt] (5.22,2.94) rectangle (6.96,4.66);
    \filldraw[fill=white, draw=cGoldDk, line width=1pt, rounded corners=3pt]
      (5.18,2.98) rectangle (6.92,4.70);
    \node[font=\flsf\tiny\bfseries, cGoldDk] at (6.05,4.55) {SiT-B/2};
    \foreach \a in {3.80,4.08,4.36}{\foreach \b in {3.70,3.92,4.14,4.36}{
      \draw[cGoldDk!40, line width=0.3pt] (5.52,\a) -- (6.05,\b);}}
    \foreach \a in {3.70,3.92,4.14,4.36}{\foreach \b in {3.80,4.08,4.36}{
      \draw[cGoldDk!40, line width=0.3pt] (6.05,\a) -- (6.58,\b);}}
    \foreach \y in {3.80,4.08,4.36}{
      \filldraw[fill=cCream, draw=cGoldDk, line width=0.5pt] (5.52,\y) circle (0.072);}
    \foreach \y in {3.70,3.92,4.14,4.36}{
      \filldraw[fill=cCream, draw=cGoldDk, line width=0.5pt] (6.05,\y) circle (0.072);}
    \foreach \y in {3.80,4.08,4.36}{
      \filldraw[fill=cCream, draw=cGoldDk, line width=0.5pt] (6.58,\y) circle (0.072);}
    \node[font=\flsf\scriptsize\bfseries, white, fill=cClaret, rounded corners=2.5pt,
          inner xsep=7pt, inner ysep=4pt] at (6.05,3.24) {seed 11396};
    \draw[-{Stealth[length=5pt]}, cGoldDk, line width=1.1pt] (6.96,3.84) -- (7.52,3.84);

    \flcabinet{7.55}{2.22}{13.55}{6.58}
    \draw[rounded corners=2.5pt, fill=cFeltDk, draw=cGold, line width=0.8pt]
      (7.71,5.90) rectangle (13.39,6.46);
    \node[font=\fltitle\small\bfseries, cGoldHi] at (10.55,6.18) {GENERATION LOTTERY};
    \node[font=\flsf\scriptsize, cCream] at (10.55,5.66)
      {10 sampling seeds $\rightarrow$ 10 FIDs};
    \foreach \cx/\cy/\v/\h in {%
      8.31/4.85/33.66/0, 9.43/4.85/34.00/0, 10.55/4.85/33.72/0, 11.67/4.85/33.59/0, 12.79/4.85/33.69/0,
      8.31/3.55/33.59/1, 9.43/3.55/33.80/0, 10.55/3.55/33.87/0, 11.67/3.55/33.82/0, 12.79/3.55/33.77/0}{
        \flgenreel{\cx}{\cy}{\v}{\h}
    }
  \end{tikzpicture}}%
  \caption{\textbf{The FID lottery, drawn as two slot machines.}
  A casino-themed rendering of the same two lotteries diagrammed from first
  principles in \autoref{fig:lottery-machine}. Reporting an FID means
  pulling two levers. The \emph{training lottery} (left) draws one network
  from three coupled sources of randomness. These are the random weight
  initialisation (a die), the shuffled data order (a deck), and the
  per-step flow-matching noise (a trace), and they yield one trained network
  (\emph{centre}, tagged by its seed). The \emph{generation lottery}
  (right) then draws $10$ sampling seeds and scores $10$ FIDs of
  \emph{that} network. On the measured SiT-B/2 panel
  (\autoref{fig:overview}) resampling a fixed network barely moves the
  score ($\sigma_{\mathrm{within}}\!\approx\!0.14$), while retraining moves
  it $\mathbf{3.2\times}$ farther ($\sigma_{\mathrm{between}}\!=\!0.44$): a
  $\mathbf{2.10}$-point gap between the best ($\mathbf{33.59}$) and worst
  ($\mathbf{35.69}$) evaluation, from seeds alone.}
  \label{fig:slot-machine}
\end{figure}

\section{Per-experiment configurations}
\label{app:setup-detail}

This appendix gathers the per-experiment $N$, $K$, training-step
budget, and any deviations from the shared protocol of
\autoref{sec:setup}. The default panel uses $K\!=\!10$ sampling seeds
per trained model and the shared FID evaluation pipeline (Inception-V3
features, $50\,000$ generated samples, ImageNet train-set reference
statistics, deterministic ODE sampler). Each entry below is referenced
from the corresponding subsection of \autoref{sec:results}.

\myparagraph{Implementation details.}
Training follows the SiT recipe of~\citet{ma2024sit} on
class-conditional ImageNet $256{\times}256$. The only deliberate
deviation is that we train $N\!\approx\!25$ independent runs per cell
to populate the two-axis panel rather than a single run per
configuration. The aggregate compute footprint of the experiments
reported in \autoref{sec:results} (including preliminary runs that
did not make it into the final paper) is approximately
$100\,000$ H100 GPU-hours.

\myparagraph{\autoref{sec:results-sampling} (training- vs.\ sampling-seed
asymmetry).}
A converged SiT-B/2 (class-conditional ImageNet $256{\times}256$,
$400$k training steps, no classifier-free guidance) generates $50$k
samples and we compute FID. Repeating with $K\!=\!10$ different
sampling seeds on each of $N\!=\!25$ independently trained SiT-B/2
networks yields the $250$ FID evaluations of \autoref{fig:overview}:
every dot is one evaluation, every violin is one trained model.

\myparagraph{\autoref{sec:results-decomposition} (variance decomposition
across the three training-time RNGs).}
Each single-source ablation fixes two of the three training-time random
number generators (parameter initialisation, data-loader order,
per-step flow-matching noise) and varies the third across $24$--$25$
SiT-B/2 runs trained to $400$k steps with $K\!=\!10$ sampling seeds per
run, yielding the conditions \textsc{vary-noise}, \textsc{vary-init},
and \textsc{vary-data}. The fully-random baseline (\textsc{vary-all})
is the $25\!\times\!10$ panel of \autoref{sec:results-sampling}.

\myparagraph{\autoref{sec:results-guidance} (golden-section CFG search).}
The CFG sweep operates on the same $25\!\times\!10$ converged SiT-B/2
panel. For every (training, sampling) seed pair, golden-section search
runs over the bracket $[\omega_{\min}, \omega_{\max}]\!=\![1, 2]$ at
tolerance $\varepsilon\!=\!0.01$, costing $\approx\!14$ FID evaluations
per cell, and we report the FID at the recovered optimum
(\autoref{alg:gss}, convergence and unimodality in
\autoref{app:golden-section}).

\myparagraph{\autoref{sec:results-scaling}~/~\ref{sec:results-speedup}
(compute and scale).}
We train $N\!=\!25$ networks per model size for $2$M steps and compute
$K\!=\!10$ Inception FID evaluations per checkpoint every $100$k
steps, yielding $76$ (model, step) cells across SiT-S/B/L/XL. The
lucky-seed speedup of \autoref{sec:results-speedup} is read off the
same panel without retraining.

\myparagraph{\autoref{sec:results-mup} ($\boldsymbol{\mu}$P learning-rate
sweep).}
We sweep $10$ $\mu$P-coordinated learning rates log-spaced in
$[5\!\times\!10^{-5}, 5\!\times\!10^{-4}]$ with $N\!=\!10$ training
seeds per cell across the four widths SiT-S/B/L/XL, train each cell to
$100$k steps, and score with both unguided FID and GS-FID
($\approx\!400$ trained networks total, \autoref{fig:mup-lr}).

\section{Companion summary tables}
\label{app:summary-tables}

The figures of the main paper carry the qualitative argument. The
three tables below consolidate the headline numbers behind them so
they can be looked up in one place.

\begin{table}[t]
  \centering
  \caption{\textbf{Variance decomposition of the training-seed
  lottery (SiT-B/2, $400$k, no CFG).} Companion to
  \autoref{fig:disentangle}. Each row fixes two of the three
  training-time random sources and varies the third, and \textsc{vary-all}
  is the fully-stochastic baseline. $\sigma_{\text{between}}$ is the
  standard deviation across the $N$ per-seed means.
  $\sigma_{\text{within}}$ is the within-seed sampling-noise $\sigma$,
  averaged across the $N$ training seeds. $\mathrm{CoV}_{\text{between}}$
  is $\sigma_{\text{between}}/\mu$. The rightmost column reports the
  share of the baseline $\sigma_{\text{between}}$ reproduced by each
  single-source ablation. The naive sum
  $\sqrt{\sigma_{\text{noise}}^{2}+\sigma_{\text{init}}^{2}+\sigma_{\text{data}}^{2}}\!\approx\!0.50$
  overshoots the observed $\sigma_{\text{vary-all}}\!=\!0.44$ by
  $14\%$: the sources are not independent.}
  \label{tab:variance-decomposition}
  \small
  \begin{tabular}{lrrrrrr}
    \toprule
    Condition & $N$ & $\sigma_{\text{between}}$ & $\sigma_{\text{within}}$ &
    $\mathrm{CoV}_{\text{between}}$ (\%) & range & \% of vary-all\\
    \midrule
    \textsc{vary-all}   & 25 & 0.438 & 0.137 & 1.26 & 1.66 & 100.0\\
    \textsc{vary-noise} & 25 & 0.336 & 0.144 & 0.97 & 1.33 & \phantom{0}76.7\\
    \textsc{vary-init}  & 25 & 0.294 & 0.150 & 0.85 & 1.09 & \phantom{0}67.1\\
    \textsc{vary-data}  & 24 & 0.221 & 0.150 & 0.64 & 0.82 & \phantom{0}50.5\\
    \bottomrule
  \end{tabular}
\end{table}

\begin{table}[t]
  \centering
  \caption{\textbf{The seed lottery across compute and model size.}
  Companion to \autoref{fig:variance-training}. Numbers at the start
  ($200$k) and the end ($2$M) of the scaling sweep. Mean FID drops
  by $\approx\!1.7$--$2{\times}$ over $1.8$M extra steps, while
  $\sigma_{\text{between}}$ shrinks $1.7$--$2.4{\times}$, so
  $\mathrm{CoV}_{\text{between}}$ stays near a $1$--$2\%$ band at
  every checkpoint.}
  \label{tab:scaling-sweep}
  \small
  \begin{tabular}{llrrrrr}
    \toprule
    Model & Step & $N$ & mean FID & $\sigma_{\text{between}}$ &
    $\mathrm{CoV}_{\text{between}}$ (\%) & $\sigma$ shrink \\
    \midrule
    SiT-S/2  & 200k & 19 & 71.2 & 0.75 & 1.06 & ---  \\
    SiT-S/2  & 2M   & 19 & 41.1 & 0.31 & 0.74 & $2.4{\times}$ \\
    \midrule
    SiT-B/2  & 200k & 20 & 46.2 & 0.48 & 1.05 & ---  \\
    SiT-B/2  & 2M   & 20 & 23.3 & 0.29 & 1.24 & $1.7{\times}$ \\
    \midrule
    SiT-L/2  & 200k & 24 & 28.9 & 0.43 & 1.48 & ---  \\
    SiT-L/2  & 2M   & 24 & 14.8 & 0.25 & 1.72 & $1.7{\times}$ \\
    \midrule
    SiT-XL/2 & 200k & 25 & 27.5 & 0.44 & 1.61 & ---  \\
    SiT-XL/2 & 2M   & 25 & 14.5 & 0.20 & 1.42 & $2.2{\times}$ \\
    \bottomrule
  \end{tabular}
\end{table}

\begin{table}[t]
  \centering
  \caption{\textbf{$\boldsymbol{\mu}$P sweep at $\boldsymbol{100}$k:
  per-size noise floor.} Companion to \autoref{fig:mup-lr} and
  \autoref{app:mup-extras}. Range of between-seed coefficient of
  variation across the central seven well-conditioned LRs
  $[8\!\times\!10^{-5}, 4\!\times\!10^{-4}]$, broken out by protocol
  (GS-FID versus unguided FID). The argmin LR per size is the modal
  bootstrap optimum from \autoref{app:mup-extras}. CFG-tuning halves
  the absolute FID but the relative spread sits in the same
  $1$--$3\%$ band as the long-run scaling sweep
  (\autoref{tab:scaling-sweep}).}
  \label{tab:mup-cov}
  \small
  \begin{tabular}{lrrrr}
    \toprule
    Size & argmin LR (GS) & GS $\mathrm{CoV}$ range (\%) &
    Unguided $\mathrm{CoV}$ range (\%) & GS \% at argmin\\
    \midrule
    SiT-S/2  & $3.0\!\times\!10^{-4}$ & 1.18--2.76 & 0.82--1.49 & 1.7 \\
    SiT-B/2  & $2.3\!\times\!10^{-4}$ & 1.48--2.61 & 1.06--2.05 & 1.9 \\
    SiT-L/2  & $3.0\!\times\!10^{-4}$ & 1.37--3.53 & 1.58--3.46 & 2.0 \\
    SiT-XL/2 & $3.0\!\times\!10^{-4}$ & 1.05--3.34 & 1.55--3.84 & 2.3 \\
    \bottomrule
  \end{tabular}
\end{table}

\section{Additional analyses on Inception FID}
\label{app:extra-analyses}

This appendix collects five supplementary analyses of the seed lottery
on Inception FID, each tied to a specific subsection of
\autoref{sec:results}. \autoref{app:disentangle-violins} reproduces the
panel layout of \autoref{fig:overview} for the four single-source
conditions of \autoref{sec:results-decomposition}.
\autoref{sec:results-training} asks whether the training-seed ranking
is stable when the within-seed dimension is collapsed by the mean, the
min, or the max. \autoref{sec:results-optimality} runs a
$10\!\times\!15$ factorial of init seeds against (data, noise) pairings
to test whether ``good init'' is a transferable property of an init
seed. \autoref{app:mup-extras} reports the per-bracket numbers behind
the $\mu$P sweep of \autoref{sec:results-mup}.
\autoref{app:fid-to-ci} distils the scale-invariant CoV findings into
a practitioner-facing lookup: given a reported FID, what
seed-induced $95\%$ confidence interval should one expect?

\subsection{Per-train-seed violin panels by variability source}
\label{app:disentangle-violins}

\autoref{fig:disentangle-panel-all} reproduces the panel layout of
\autoref{fig:overview} once for each of the four single-source
conditions of \autoref{sec:results-decomposition}, so the within-seed
generation lottery and the between-seed training lottery can be inspected
side by side under a common y-scale. Within each panel, the vertical
extent of a violin tracks the within-seed $\sigma$ of that training
run, and the vertical spread of the black per-seed mean ticks tracks
the between-seed $\sigma$ of the condition.


\begin{figure}[t]
  \centering
  \pgfplotsset{width=\linewidth, height=4.0cm}
  \begin{tikzpicture}
    \violinsetoptions[scaled]{
      title={(a) vary all},
      ylabel={Inception FID},
      xmin=0.0, xmax=26.0,
      ymin=33.30, ymax=35.95,
      xtick={1,5,10,15,20,25},
      ytick={33.5,34.5,35.5},
      ymajorgrids,
    }
    \violinplotwholefile[%
      col sep=tab,
      primary color=palSlateDark,
      secondary color=palGray,
      indexes={s01,s02,s03,s04,s05,s06,s07,s08,s09,s10,s11,s12,s13,s14,s15,s16,s17,s18,s19,s20,s21,s22,s23,s24,s25},
      labels={,,,,,,,,,,,,,,,,,,,,,,,,},
      spacing=1.0,
    ]{figures/fig_data/fig8_vary_all_violins_wholefile.tsv}
    \begin{axis}[
      xmin=0.0, xmax=26.0, ymin=33.30, ymax=35.95,
      axis line style={draw=none}, tick style={draw=none},
      xticklabels={,,}, yticklabels={,,},
      xmajorticks=false, ymajorticks=false,
      axis on top, clip=false,
      xlabel={Training seed (sorted by mean Inception FID)},
    ]
      \addplot+[only marks, mark=*, mark size=0.6pt,
        mark options={fill=black!55, draw=black!55, fill opacity=0.65, line width=0pt}]
        table[col sep=tab, x=xj, y=fid_inc] {figures/fig_data/fig8_vary_all_strip.tsv};
      \addplot+[only marks, mark=-, mark size=3.8pt,
        mark options={draw=black!88, line width=1.0pt}]
        table[col sep=tab, x=x, y=mean_inc] {figures/fig_data/fig8_vary_all_means.tsv};
    \end{axis}
  \end{tikzpicture}
  \caption{\textbf{(a) Vary all} ($\sigma_{\text{between}}\!=\!0.438$).
  All three randomness sources are free. The panel is the same data as
  \autoref{fig:overview}, repeated here as the reference baseline for
  the three single-source panels that follow.}
  \label{fig:disentangle-panel-all}
\end{figure}

\begin{figure}[t]
  \centering
  \pgfplotsset{width=\linewidth, height=4.0cm}
  \begin{tikzpicture}
    \violinsetoptions[scaled]{
      title={(b) vary noise},
      ylabel={Inception FID},
      xmin=0.0, xmax=26.0,
      ymin=33.30, ymax=35.95,
      xtick={1,5,10,15,20,25},
      ytick={33.5,34.5,35.5},
      ymajorgrids,
    }
    \violinplotwholefile[%
      col sep=tab,
      primary color=palCoralDark,
      secondary color=palPeach,
      indexes={s01,s02,s03,s04,s05,s06,s07,s08,s09,s10,s11,s12,s13,s14,s15,s16,s17,s18,s19,s20,s21,s22,s23,s24,s25},
      labels={,,,,,,,,,,,,,,,,,,,,,,,,},
      spacing=1.0,
    ]{figures/fig_data/fig8_vary_noise_violins_wholefile.tsv}
    \begin{axis}[
      xmin=0.0, xmax=26.0, ymin=33.30, ymax=35.95,
      axis line style={draw=none}, tick style={draw=none},
      xticklabels={,,}, yticklabels={,,},
      xmajorticks=false, ymajorticks=false,
      axis on top, clip=false,
      xlabel={Training seed (sorted by mean Inception FID)},
    ]
      \addplot+[only marks, mark=*, mark size=0.6pt,
        mark options={fill=black!55, draw=black!55, fill opacity=0.65, line width=0pt}]
        table[col sep=tab, x=xj, y=fid_inc] {figures/fig_data/fig8_vary_noise_strip.tsv};
      \addplot+[only marks, mark=-, mark size=3.8pt,
        mark options={draw=black!88, line width=1.0pt}]
        table[col sep=tab, x=x, y=mean_inc] {figures/fig_data/fig8_vary_noise_means.tsv};
    \end{axis}
  \end{tikzpicture}
  \caption{\textbf{(b) Vary noise} ($\sigma_{\text{between}}\!=\!0.336$).
  Init and data order are fixed. Only the per-step Gaussian noise of
  the flow-matching loss varies between training runs. Noise alone
  reproduces ${\approx}77\%$ of the baseline between-seed
  $\sigma$ of (a).}
  \label{fig:disentangle-panel-noise}
\end{figure}

\begin{figure}[t]
  \centering
  \pgfplotsset{width=\linewidth, height=4.0cm}
  \begin{tikzpicture}
    \violinsetoptions[scaled]{
      title={(c) vary init},
      ylabel={Inception FID},
      xmin=0.0, xmax=25.0,
      ymin=33.30, ymax=35.95,
      xtick={1,5,10,15,20,24},
      ytick={33.5,34.5,35.5},
      ymajorgrids,
    }
    \violinplotwholefile[%
      col sep=tab,
      primary color=palSageDark,
      secondary color=palMint,
      indexes={s01,s02,s03,s04,s05,s06,s07,s08,s09,s10,s11,s12,s13,s14,s15,s16,s17,s18,s19,s20,s21,s22,s23,s24},
      labels={,,,,,,,,,,,,,,,,,,,,,,,},
      spacing=1.0,
    ]{figures/fig_data/fig8_vary_init_violins_wholefile.tsv}
    \begin{axis}[
      xmin=0.0, xmax=25.0, ymin=33.30, ymax=35.95,
      axis line style={draw=none}, tick style={draw=none},
      xticklabels={,,}, yticklabels={,,},
      xmajorticks=false, ymajorticks=false,
      axis on top, clip=false,
      xlabel={Training seed (sorted by mean Inception FID)},
    ]
      \addplot+[only marks, mark=*, mark size=0.6pt,
        mark options={fill=black!55, draw=black!55, fill opacity=0.65, line width=0pt}]
        table[col sep=tab, x=xj, y=fid_inc] {figures/fig_data/fig8_vary_init_strip.tsv};
      \addplot+[only marks, mark=-, mark size=3.8pt,
        mark options={draw=black!88, line width=1.0pt}]
        table[col sep=tab, x=x, y=mean_inc] {figures/fig_data/fig8_vary_init_means.tsv};
    \end{axis}
  \end{tikzpicture}
  \caption{\textbf{(c) Vary init} ($\sigma_{\text{between}}\!=\!0.294$).
  Data order and noise are fixed. Only the parameter initialisation
  varies. Init alone reproduces ${\approx}67\%$ of the baseline
  between-seed $\sigma$ of (a).}
  \label{fig:disentangle-panel-init}
\end{figure}

\begin{figure}[t]
  \centering
  \pgfplotsset{width=\linewidth, height=4.0cm}
  \begin{tikzpicture}
    \violinsetoptions[scaled]{
      title={(d) vary data},
      ylabel={Inception FID},
      xmin=0.0, xmax=26.0,
      ymin=33.30, ymax=35.95,
      xtick={1,5,10,15,20,25},
      ytick={33.5,34.5,35.5},
      ymajorgrids,
    }
    \violinplotwholefile[%
      col sep=tab,
      primary color=palLavDark,
      secondary color=palMauve,
      indexes={s01,s02,s03,s04,s05,s06,s07,s08,s09,s10,s11,s12,s13,s14,s15,s16,s17,s18,s19,s20,s21,s22,s23,s24,s25},
      labels={,,,,,,,,,,,,,,,,,,,,,,,,},
      spacing=1.0,
    ]{figures/fig_data/fig8_vary_data_violins_wholefile.tsv}
    \begin{axis}[
      xmin=0.0, xmax=26.0, ymin=33.30, ymax=35.95,
      axis line style={draw=none}, tick style={draw=none},
      xticklabels={,,}, yticklabels={,,},
      xmajorticks=false, ymajorticks=false,
      axis on top, clip=false,
      xlabel={Training seed (sorted by mean Inception FID)},
    ]
      \addplot+[only marks, mark=*, mark size=0.6pt,
        mark options={fill=black!55, draw=black!55, fill opacity=0.65, line width=0pt}]
        table[col sep=tab, x=xj, y=fid_inc] {figures/fig_data/fig8_vary_data_strip.tsv};
      \addplot+[only marks, mark=-, mark size=3.8pt,
        mark options={draw=black!88, line width=1.0pt}]
        table[col sep=tab, x=x, y=mean_inc] {figures/fig_data/fig8_vary_data_means.tsv};
    \end{axis}
  \end{tikzpicture}
  \caption{\textbf{(d) Vary data} ($\sigma_{\text{between}}\!=\!0.221$).
  Init and noise are fixed. Only the data-loader order varies. Data
  order alone reproduces ${\approx}51\%$ of the baseline between-seed
  $\sigma$ of (a). All four panels share the same y-range and a
  consistent layout: every violin is one training seed showing the
  Gaussian KDE of its $10$ sampling-seed evaluations, the small dots
  are individual evaluations, and the black tick is the per-seed mean.}
  \label{fig:disentangle-panel-data}
\end{figure}

The within-seed jitter is invariant across conditions. Every panel
holds violins of similar height: the per-condition within-seed
$\sigma_{\text{within}}$ stays in $[0.137, 0.152]$
(\autoref{sec:results-decomposition}), and the figure makes this
constancy visible. No panel inflates or deflates relative to the
others. What we vary during training does not leak into the
evaluation jitter, so the generation lottery is set by the converged
SiT-B/2 weights and not by which randomness sources produced them.

The between-seed spread shrinks monotonically with $\sigma$. The
per-seed mean ticks span a wide vertical band in panel (a)
(\emph{vary all}, $0.438$), narrow steadily through (b)
(\emph{vary noise}, $0.336$) and (c) (\emph{vary init}, $0.294$), and
collapse to the tight stripe of (d) (\emph{vary data}, $0.221$). The
top-to-bottom range of the mean ticks halves from $\approx\!1.7$ FID
in (a) to under $\approx\!0.9$ FID in (d), while the violin heights
above and below each tick stay essentially fixed. The lottery
contracts in the training axis without touching the evaluation axis.

The four panels also confirm that no condition contains a heavy-tailed
minority of training seeds: the mean ticks within each panel are
spread evenly along the sorted axis, not bunched at the centre with a
few outliers, which is consistent with the well-behaved
$(\max\!-\!\min)/\sigma$ ratios reported in
\autoref{sec:setup}.

\subsection{Summary statistics reshuffle the training-seed ranking}
\label{sec:results-training}

Even within a single feature space, the way the ten sampling seeds per
training run are summarised determines which seed is declared best.
Reusing the $25\!\times\!10$ SiT-B/2 panel of
\autoref{sec:results-sampling}, we collapse the within-seed dimension
three ways: per-seed mean, per-seed minimum, and per-seed maximum FID.
We then ask whether the resulting orderings of the $25$ training
seeds agree.

The three rankings disagree (\autoref{fig:rank-stability}a). Rank
crossings are dense, and the seed that wins under ``best-case
sampling'' (per-seed minimum) is not the seed that wins under
``average-case sampling'' (per-seed mean). A benchmark that reports
the best out of $k$ sampling seeds and a benchmark that reports the
average over the same $k$ are not measuring the same thing, even when
both are computed from the identical $25\!\times\!10$ panel.

The instability is specific to the summary statistic, not the feature
extractor: Inception and DINOv2 agree on which training seed is good
under each of the three criteria (Spearman $\rho\!\geq\!0.94$,
appendix). The training lottery is a property of the trained
model, but the choice of how to summarise the sampling seeds
determines which seed appears to win.

\begin{figure*}[t]
  \centering
  \begin{tikzpicture}
    \begin{groupplot}[
        group style={
          group size=2 by 1,
          horizontal sep=1.7cm,
        },
        fidlottery,
        width=0.46\textwidth,
        height=5.6cm,
      ]

      \nextgroupplot[
        title={(a) Per-seed rank by summary statistic},
        ylabel={Rank (1 = best Inception FID)},
        ymin=0.4, ymax=25.6,
        y dir=reverse,
        ytick={1,5,10,15,20,25},
        xmin=0.5, xmax=3.5,
        xtick={1,2,3},
        xticklabels={mean, min, max},
        x tick label style={font=\footnotesize},
        legend style={
          at={(0.5,-0.18)}, anchor=north,
          legend columns=2,
          font=\scriptsize, draw=black!15,
          fill=white, fill opacity=0.92, text opacity=1,
          inner sep=2pt, rounded corners=1pt,
        },
        legend image post style={scale=0.8},
      ]
        \addplot+[
          mark=*, mark size=1.4pt, line width=0.5pt,
          color=palLavender,
          mark options={fill=palLavender, draw=palLavDark, line width=0.3pt},
          opacity=0.8,
          forget plot,
        ] table[col sep=tab, x=x, y=rank] {figures/fig_data/fig3a_bump.tsv};

        \addplot+[
          mark=*, mark size=2.4pt, line width=1.4pt,
          color=palCoralDark,
          mark options={fill=palCoral, draw=palCoralDark, line width=0.4pt},
        ] table[col sep=tab, x=x, y=rank] {figures/fig_data/fig3a_best.tsv};
        \addlegendentry{best-by-mean}

        \addplot+[
          mark=*, mark size=2.4pt, line width=1.4pt,
          color=palTealDark,
          mark options={fill=palTeal, draw=palTealDark, line width=0.4pt},
        ] table[col sep=tab, x=x, y=rank] {figures/fig_data/fig3a_worst.tsv};
        \addlegendentry{worst-by-mean}

      \nextgroupplot[
        title={(b) Spearman $\rho$ across criteria},
        xmin=-0.6, xmax=5.6,
        ymin=-0.6, ymax=5.6,
        y dir=reverse,
        xtick={0,1,2,3,4,5},
        ytick={0,1,2,3,4,5},
        xticklabels={Inc-Mean, Inc-Min, Inc-Max, DINO-Mean, DINO-Min, DINO-Max},
        yticklabels={Inc-Mean, Inc-Min, Inc-Max, DINO-Mean, DINO-Min, DINO-Max},
        x tick label style={font=\scriptsize, rotate=35, anchor=north east},
        y tick label style={font=\scriptsize},
        colormap name=rhomap,
        point meta min=0.85, point meta max=1.0,
        colorbar,
        colorbar style={
          width=4pt, font=\scriptsize,
          ytick={0.85,0.9,0.95,1.0},
          title={$\rho$}, title style={yshift=-2pt},
        },
        axis equal image,
        scatter/use mapped color={draw=black!20, fill=mapped color, line width=0.2pt},
      ]
        \addplot+[
          scatter, only marks,
          point meta=explicit,
          mark=square*, mark size=12pt,
          nodes near coords*={\pgfmathprintnumber[precision=2,fixed]{\pgfplotspointmeta}},
          nodes near coords style={font=\scriptsize, color=black!75, anchor=center},
          every node near coord/.append style={yshift=0pt},
        ] table[col sep=tab, x=i, y=j, meta=rho]
          {figures/fig_data/fig3b_spearman.tsv};

    \end{groupplot}
  \end{tikzpicture}
  \caption{\textbf{Rank stability of the 25 training seeds (SiT-B/2, 400k).}
  (a)~Bump chart: each line is one training seed traced through three
  ranking criteria — mean, min, and max FID across its 10 sampling seeds.
  Crossings dominate the picture: the best seed by \emph{mean} is rarely
  the best by \emph{min} or \emph{max}. Coral and teal highlight the
  seeds that are best- and worst-by-mean to make their rank trajectories
  visible.
  (b)~Spearman $\rho$ between rankings under all six combinations of
  $\{$Inception, DINOv2$\}\times\{$mean, min, max$\}$. Inception and
  DINOv2 agree strongly on the \emph{mean} ranking ($\rho\!=\!0.99$) but
  the agreement weakens for the \emph{max} criterion ($\rho\!=\!0.94$),
  i.e.\ the two metrics disagree more often on which seed had the
  worst-case sampling.}
  \label{fig:rank-stability}
\end{figure*}

\subsection{Are ``good'' init seeds universal?}
\label{sec:results-optimality}

\autoref{sec:results-decomposition} showed that initialisation
contributes $\approx\!67\%$ of the baseline between-seed $\sigma$ on its
own, which makes ``find a good init'' the cheapest and most tempting
shortcut a practitioner can take to a low FID. We test whether such an
init exists by running a full factorial of $10$ init seeds against $15$
independent (data, noise) pairings on SiT-B/2 at $400$k, with $\approx\!11$
sampling seeds per cell, totalling $\approx\!1\,600$ FID evaluations
(two of the $150$ cells failed to train and are dropped).
\autoref{fig:optimality} shows the resulting $10\!\times\!15$ FID
heatmap and the bump chart of init ranks across pairs.

If we average across the $15$ (data, noise) pairs, the $10$ init grand
means span only $[34.49, 35.27]$ Inception FID, a $0.78$-FID range
slightly tighter than the $1.66$ between-seed range of
\autoref{sec:results-sampling}. After enough marginalisation, init
seeds are nearly interchangeable. The within-cell spread, on the other
hand, lands almost exactly on the generation lottery of
\autoref{sec:results-sampling}: the median per-cell range is $0.51$
and the median per-cell $\sigma$ is $0.149$, against $0.137$ in the
$25\!\times\!10$ baseline panel. The cross-experiment match is a
useful sanity check (two completely separate sweeps recover the same
within-seed sampling-noise scale), and it tells us the generation lottery
is set by the converged SiT-B/2 weights, not by which init or which
data pairing produced them. In aggregate, an init looks like a
constant offset.

The aggregate picture is misleading. Computing the $10$-init ranking
\emph{independently} for each of the $15$ pairs and asking how
concordant the rankings are gives a Kendall's $W$ of $0.41$ under the
mean criterion, $0.45$ under the min, and $0.36$ under the max. The
average pairwise Spearman $\rho$ between pair-rankings is $0.36$.
These values are significant ($p\!<\!0.01$), so ``init quality'' is
not pure noise, but they are also far from concordant: the same init
can be top-3 under one (data, noise) pair and bottom-3 under another.
The complementary $15$-pair ranking computed per init has
Kendall's $W$ in $[0.11, 0.13]$, statistically indistinguishable from
random, so knowing which (data, noise) pair was good for one init says
almost nothing about another init.

The bump chart in \autoref{fig:optimality}(b) makes this visceral: most
init seeds visit both the top three and the bottom three across the
$15$ pairings. The best init by grand mean ($90682$) takes top-$1$ in
only $5$ of $15$ pairings. The worst init ($96273$) is bottom-$1$ in
$8$ of $15$. Some inits are weakly worse than the rest, but no init is
universally best, and no practitioner who reports a ``best of $10$
inits'' run is reporting a transferable artefact. This is also why
fixing the init alone in \autoref{sec:results-decomposition} only
removes $33\%$ of the baseline variance: the init lottery is not a
constant per-init offset to the FID, but interacts strongly with the
(data, noise) draw the init is paired with. The init lottery and the
data lottery are entangled, and disentangling them by varying one at a
time understates the seed lottery the practitioner actually sees.

\begin{figure*}[t]
  \centering
  \begin{tikzpicture}
    \begin{groupplot}[
        group style={
          group size=2 by 1,
          horizontal sep=0.55cm,
        },
        fidlottery,
        width=0.56\textwidth,
        height=5.0cm,
      ]

      \nextgroupplot[
        title={(a) Per-cell mean Inception FID},
        xlabel={(data, noise) pair index},
        ylabel={Init seed index},
        xmin=-0.55, xmax=14.55,
        ymin=-0.55, ymax=9.55,
        xtick={0,2,4,6,8,10,12,14},
        ytick={0,2,4,6,8},
        x tick label style={font=\scriptsize},
        y tick label style={font=\scriptsize},
        colormap name=fidmap,
        point meta min=34.0, point meta max=35.5,
        colorbar,
        colorbar style={
          width=5pt, font=\scriptsize,
          ytick={34, 34.5, 35, 35.5},
          title={FID}, title style={yshift=-2pt, font=\scriptsize},
        },
        enlargelimits=false,
      ]
        \addplot+[
          scatter, only marks,
          point meta=explicit,
          mark=square*, mark size=8.2pt,
          scatter/use mapped color={draw=mapped color, fill=mapped color, line width=0pt},
        ] table[col sep=tab, x=x, y=y, meta=mean_inc]
          {figures/fig_data/fig5a_heatmap.tsv};

      \nextgroupplot[
        title={(b) Init rank by pair (Kendall's $W\!=\!0.41$)},
        xlabel={(data, noise) pair index},
        ylabel={Rank within pair (1 = best)},
        xmin=-0.4, xmax=14.4,
        ymin=0.4, ymax=10.6,
        y dir=reverse,
        xtick={0,2,4,6,8,10,12,14},
        ytick={1,3,5,7,9},
        x tick label style={font=\scriptsize},
        y tick label style={font=\scriptsize},
      ]
        \pgfplotsset{cycle list={
          {palCoral, mark=*}, {palSlate, mark=*}, {palSage, mark=*}, {palLavender, mark=*},
          {palRose, mark=*}, {palTeal, mark=*}, {palPeach, mark=*}, {palMint, mark=*},
          {palMauve, mark=*}, {palButter, mark=*},
        }}
        \addplot+[mark size=1.4pt, line width=0.7pt] table[col sep=tab, x=x, y=rank] {figures/fig_data/fig5b_ranks_init0.tsv};
        \addplot+[mark size=1.4pt, line width=0.7pt] table[col sep=tab, x=x, y=rank] {figures/fig_data/fig5b_ranks_init1.tsv};
        \addplot+[mark size=1.4pt, line width=0.7pt] table[col sep=tab, x=x, y=rank] {figures/fig_data/fig5b_ranks_init2.tsv};
        \addplot+[mark size=1.4pt, line width=0.7pt] table[col sep=tab, x=x, y=rank] {figures/fig_data/fig5b_ranks_init3.tsv};
        \addplot+[mark size=1.4pt, line width=0.7pt] table[col sep=tab, x=x, y=rank] {figures/fig_data/fig5b_ranks_init4.tsv};
        \addplot+[mark size=1.4pt, line width=0.7pt] table[col sep=tab, x=x, y=rank] {figures/fig_data/fig5b_ranks_init5.tsv};
        \addplot+[mark size=1.4pt, line width=0.7pt] table[col sep=tab, x=x, y=rank] {figures/fig_data/fig5b_ranks_init6.tsv};
        \addplot+[mark size=1.4pt, line width=0.7pt] table[col sep=tab, x=x, y=rank] {figures/fig_data/fig5b_ranks_init7.tsv};
        \addplot+[mark size=1.4pt, line width=0.7pt] table[col sep=tab, x=x, y=rank] {figures/fig_data/fig5b_ranks_init8.tsv};
        \addplot+[mark size=1.4pt, line width=0.7pt] table[col sep=tab, x=x, y=rank] {figures/fig_data/fig5b_ranks_init9.tsv};

    \end{groupplot}
  \end{tikzpicture}
  \caption{\textbf{Seed optimality: are ``good'' init seeds universal?
  (SiT-B/2, 400k, no CFG.)}
  (a)~Heatmap of mean Inception FID over a $10\!\times\!15$ grid of
  init seeds (rows) and (data, noise) pairings (columns). Cells span
  ${\sim}33.9$–$35.9$. A single init does not consistently shade greenest
  across rows.
  (b)~Bump chart of the same data: each line is one init seed traced
  through its rank within each (data, noise) pair. Heavy crossings make
  the rank instability visceral. Kendall's $W\!=\!0.41$ on the init
  rankings (mean criterion) shows significant agreement, but far from
  concordance. The ``best'' init by grand mean wins only $5/15$ pairs.}
  \label{fig:optimality}
\end{figure*}

\subsection{Detailed numbers for the \texorpdfstring{$\mu$}{mu}P sweep}
\label{app:mup-extras}

This appendix collects the per-bracket numbers behind the main-paper
discussion in \autoref{sec:results-mup}. The full panel covers SiT-S,
SiT-B, SiT-L and SiT-XL on ImageNet $256{\times}256$, with $10$
$\mu$P-coordinated learning rates log-spaced from $5\!\times\!10^{-5}$
to $5\!\times\!10^{-4}$ and $10$ training seeds per (size, LR),
evaluated at $100$k steps under both the unguided and the GS-FID
protocol, $\approx\!400$ trained networks and $\approx\!4\,000$
FID evaluations. Source: \texttt{paper\_data/06\_mup\_sweep/}.

\myparagraph{Per-LR seed envelope at every well-conditioned LR.}
Across the central seven well-conditioned LRs, the between-seed
coefficient of variation stays inside the same $1$--$3\%$ band on
every model size, under both GS-FID and unguided FID. Per-size ranges
appear in \autoref{tab:mup-cov}. The band coincides with the
$1.0$--$2.0\%$ band of the $200$k$\to\!2$M scaling sweep
(\autoref{sec:results-scaling}), so $\mu$P transfers a stable noise
floor across widths, not just a stable mean.

\myparagraph{Stability collapses only at the rightmost LR.}
At $5\!\times\!10^{-4}$ the GS-FID coefficient of variation jumps an
order of magnitude above its central-LR value on SiT-B (one seed
reaches FID $31.3$ against a cluster around $17$), and rises by
$3$--$4{\times}$ on SiT-L and SiT-XL. SiT-S diverges outright on three
of ten seeds (FID $>\!300$, excluded from the curve). The unguided
protocol blunts the symptom rather than removing it: the same
seed-to-seed differences sit on top of much larger absolute FIDs, so
the rightmost unguided CoV stays below $7\%$. A single point at this
edge can therefore mix a successful run and a divergent one, and
reporting it without the divergence count produces a misleading
number.

\myparagraph{Per-seed argmin disagreement under unguided vs.\ GS-FID.}
The two protocols rank LRs differently \emph{per seed}, not only on
average. The per-seed argmin LR disagrees on a majority of seeds
across SiT-S, SiT-B and SiT-L (specifically $8/10$, $6/8$ and
$5/10$). At the population level the modal unguided argmin is the
rightmost LR ($5\!\times\!10^{-4}$) for every size, while the modal GS
argmin sits one or two notches to the left at
$2.3$--$3\!\times\!10^{-4}$. The cost of trusting the unguided
argmin therefore stacks across SiT-B/L: it is $7.7$--$11.5\%$ worse
in GS-FID than the GS optimum, carries a $5$--$12{\times}$ wider
between-seed envelope, and on SiT-S coincides with the LR at which
three seeds diverge.

\myparagraph{Per-seed bootstrap of the optimal LR.}
The bootstrap mass on each LR being optimal (the credible set used
for the strips in \autoref{fig:mup-lr}) concentrates almost
entirely on a single LR for SiT-S/B/L: the modal LR carries
$83$--$97\%$ of the mass on each of the three sizes. SiT-XL is the
exception: its mass spreads across three adjacent LRs at $51\%$,
$28\%$ and $21\%$ on $\{3, 3.9, 5\}\!\times\!10^{-4}$, with one of
the three plausible optima ($5\!\times\!10^{-4}$) sitting at the edge
of stability. The XL credible set is therefore the flattest, and a
practitioner who reads off ``the'' $\mu$P-transferred LR at this size
should expect three near-equally plausible answers rather than one.

\subsection{What CI should I expect at my reported FID?}
\label{app:fid-to-ci}

\noindent\emph{A practitioner-facing lookup distilled from
\autoref{tab:scaling-sweep} and \autoref{fig:supp-cov-bands}: given
the FID one just measured, the seed-induced $95\%$ CI on that number
is a fixed fraction of the FID, set by the scale-invariant CoV floor
of the panel.}

\myparagraph{Setup.}
The between-seed coefficient of variation
$\mathrm{CoV}\!=\!\sigma_{\text{between}}/\mu$ on Inception FID stays
in a tight band across the $76$ $(\text{model}, \text{step})$ cells of
the scaling sweep: median $1.30\%$, $p_{10}$--$p_{90}$
$0.88$--$1.73\%$ (\autoref{fig:supp-cov-bands},
\autoref{tab:scaling-sweep}). For a mean FID computed from $N$
independently trained models with $K\!=\!10$ sampling seeds each, the
normal-approximation $95\%$ CI half-width on the seed-mean is
\begin{equation}
  \mathrm{CI}_{95}(F, N) \;=\; z_{0.975}\,\frac{\mathrm{CoV}\,F}{\sqrt{N}}
  \;\approx\; \frac{0.0254\,F}{\sqrt{N}},
  \qquad z_{0.975}\!\approx\!1.96,
  \label{eq:fid-ci}
\end{equation}
under \mbox{$\mathrm{CoV}\!=\!1.30\%$}. The within-seed contribution
$\sigma_{\text{within}}^{2}/K$ is absorbed by the displayed CoV band:
at $K\!=\!10$ the within-CoV is $\!\approx\!0.4\%$
(\autoref{tab:supp-overview}) and adds less than $5\%$ to the variance
of the seed-mean.

%
\begin{figure}[t]
  \centering
  \begin{tikzpicture}
    \begin{axis}[
      fidlottery,
      width=0.78\linewidth,
      height=6.0cm,
      xlabel={Reported mean Inception FID},
      ylabel={$95\%$ CI half-width (FID units)},
      xmin=2, xmax=80,
      ymin=0, ymax=2.5,
      xtick={5, 10, 15, 25, 35, 50, 70},
      ytick={0, 0.5, 1.0, 1.5, 2.0, 2.5},
      legend pos=north west,
      legend cell align=left,
      legend style={font=\footnotesize, draw=black!15, fill=white,
                    fill opacity=0.92, text opacity=1, inner sep=2pt,
                    rounded corners=1pt},
      legend image post style={line width=1.2pt},
      clip mode=individual,
    ]
      \addplot+[name path=ciP10, mark=none, draw=none, forget plot]
        table[col sep=tab, x=fid, y=n1_p10]
        {figures/fig_data/figS_fid_to_ci.tsv};
      \addplot+[name path=ciP90, mark=none, draw=none, forget plot]
        table[col sep=tab, x=fid, y=n1_p90]
        {figures/fig_data/figS_fid_to_ci.tsv};
      \addplot[palCoral, opacity=0.22, forget plot]
        fill between[of=ciP10 and ciP90];

      \addplot+[mark=none, line width=1.3pt, color=palCoralDark]
        table[col sep=tab, x=fid, y=n1_med]
        {figures/fig_data/figS_fid_to_ci.tsv};
      \addlegendentry{$N\!=\!1$ training seed}
      \addplot+[mark=none, line width=1.1pt, color=palLavDark, dashed]
        table[col sep=tab, x=fid, y=n5_med]
        {figures/fig_data/figS_fid_to_ci.tsv};
      \addlegendentry{$N\!=\!5$}
      \addplot+[mark=none, line width=1.1pt, color=palTealDark, dashdotted]
        table[col sep=tab, x=fid, y=n10_med]
        {figures/fig_data/figS_fid_to_ci.tsv};
      \addlegendentry{$N\!=\!10$}
      \addplot+[mark=none, line width=1.1pt, color=palOchreDark, densely dotted]
        table[col sep=tab, x=fid, y=n25_med]
        {figures/fig_data/figS_fid_to_ci.tsv};
      \addlegendentry{$N\!=\!25$}

      \draw[dashed, black!35, line width=0.4pt]
        (axis cs:15, 0) -- (axis cs:15, 0.382);
      \fill[palCoralDark] (axis cs:15, 0.382) circle [radius=1.4pt];
      \node[font=\scriptsize, color=palCoralDark, anchor=south,
            fill=white, fill opacity=0.90, text opacity=1,
            inner sep=1.2pt, rounded corners=0.8pt]
        at (axis cs:15, 0.45) {$\pm0.38$};

      \draw[dashed, black!35, line width=0.4pt]
        (axis cs:35, 0) -- (axis cs:35, 0.891);
      \fill[palCoralDark] (axis cs:35, 0.891) circle [radius=1.4pt];
      \node[font=\scriptsize, color=palCoralDark, anchor=south,
            fill=white, fill opacity=0.90, text opacity=1,
            inner sep=1.2pt, rounded corners=0.8pt]
        at (axis cs:35, 0.96) {$\pm0.89$};

      \draw[dashed, black!35, line width=0.4pt]
        (axis cs:70, 0) -- (axis cs:70, 1.781);
      \fill[palCoralDark] (axis cs:70, 1.781) circle [radius=1.4pt];
      \node[font=\scriptsize, color=palCoralDark, anchor=south,
            fill=white, fill opacity=0.90, text opacity=1,
            inner sep=1.2pt, rounded corners=0.8pt]
        at (axis cs:70, 1.85) {$\pm1.78$};

      \node[font=\scriptsize, color=palCoralDark, anchor=west]
        at (axis cs:46, 1.55) {$N\!=\!1$ band: CoV$\in[0.88,1.73]\%$};
    \end{axis}
  \end{tikzpicture}
  \caption{\textbf{Seed-induced $\boldsymbol{95\%}$ confidence interval
  as a function of the reported Inception FID.} For a mean FID computed
  from $N$ independently trained models with $K\!=\!10$ sampling seeds
  each, the normal-approximation half-width is
  $\mathrm{CI}_{95}\!=\!1.96\,\mathrm{CoV}\,F/\sqrt{N}$, where
  $\mathrm{CoV}\!=\!\sigma_{\text{between}}/\mu$ is the scale-invariant
  noise floor reported for Inception FID across the $76$ $(\text{model},
  \text{step})$ cells of the scaling sweep
  (\autoref{tab:supp-cov-bands}, with median $1.30\%$ and $p_{10}\!-\!p_{90}\!=\!0.88\!-\!1.73\%$).
  The solid coral curve uses the median CoV. The shaded band traces the
  $p_{10}\!-\!p_{90}$ envelope at $N\!=\!1$. Three readings: at
  $\text{FID}\!\approx\!15$ (typical converged SiT-L), a single trained
  model carries a $\pm0.38$ FID seed-only CI -- four times the
  $\pm0.1$ gaps that routinely separate published methods. Pulling that
  CI under $\pm0.1$ requires $N\!=\!25$ training seeds. Within-seed
  (sampling-only) contributions are absorbed into the displayed band:
  at $K\!=\!10$ the within-CoV is $\!\approx\!0.4\%$ on this panel and
  contributes less than $5\%$ to the total variance of the seed-mean.}
  \label{fig:supp-fid-to-ci}
\end{figure}

\myparagraph{Reading the figure.}
\autoref{fig:supp-fid-to-ci} plots
$\mathrm{CI}_{95}$ against the reported mean FID for
$N\!\in\!\{1,5,10,25\}$. Three operating points to keep in mind. At
$F\!\approx\!15$ (typical converged SiT-L), a single trained model
carries a $\pm0.38$ FID seed-only $95\%$ CI. At $F\!\approx\!35$
(SiT-B/2 baseline panel) the CI is $\pm0.89$. At $F\!\approx\!70$
(early-training regime) it is $\pm1.78$. The shaded band traces the
$p_{10}\!-\!p_{90}$ envelope of the CoV across the panel, so a cell
that happens to sit at the noisy end of the floor inflates these
numbers by a further $\!\approx\!1.3\!\times\!$.

\myparagraph{Implications for benchmarking.}
The CI shrinks only as $1/\sqrt{N}$. Pulling the CI on a
$F\!\approx\!15$ benchmark below $\pm0.1$ -- the gap that routinely
separates published methods -- requires $N\!\geq\!25$ training seeds.
A single-seed report at this FID admits a $\pm0.38$ uncertainty band
under \emph{the same architecture and recipe}, against which a
$0.1$-FID gain at the next paper is statistically indistinguishable
from noise. \autoref{eq:fid-ci} makes the CI explicit and is easy to
report alongside any FID number. For a metric other than Inception
FID, substitute the matching CoV band from
\autoref{tab:supp-cov-bands}. An
\href{https://kyutai.org/fid-lottery/\#calculator}{online calculator}
evaluates \autoref{eq:fid-ci} for any reported FID and seed count $N$,
returning the seed-only $95\%$ error bar.

\section{Theory of golden-section search for FID(CFG)}
\label{app:golden-section}

This appendix supports \autoref{sec:results-guidance}. The
golden-section procedure is stated formally, its convergence is proved,
and $\mathrm{FID}(\mathrm{CFG})$ is shown to be unimodal under a
Gaussian feature model so that the search returns the global optimum.
Notation: let $f:[a,b]\to\mathbb{R}$ be the objective,
$\varphi=(1+\sqrt5)/2$ the golden ratio, and
$\rho=1/\varphi=\varphi-1\approx 0.618$.

\subsection{Convergence}
\label{app:gss-algorithm}

\begin{figure*}[t]
  \centering
  \begin{minipage}[t]{0.48\textwidth}
    \refstepcounter{algorithm}\label{alg:gss}%
    \textbf{Algorithm \thealgorithm:} Golden-section search on
    $\mathrm{FID}(\omega)$~\citep{kiefer1953sequential}.

    \begin{algorithmic}[1]
    \Require Bracket $[a, b]$, tolerance $\varepsilon > 0$
    \State $\rho \gets (\sqrt{5}-1)/2$
    \State $x_{1} \gets b - \rho(b-a)$;\quad $x_{2} \gets a + \rho(b-a)$
    \State $f_{i} \gets \mathrm{FID}(x_{i})$ for $i\in\{1,2\}$
    \While{$b - a > \varepsilon$}
        \If{$f_{1} \leq f_{2}$}
            \Comment{keep $[a, x_{2}]$}
            \State $(b, x_{2}, f_{2}) \gets (x_{2}, x_{1}, f_{1})$
            \State $x_{1} \gets b - \rho(b-a)$;\ $f_{1} \gets \mathrm{FID}(x_{1})$
        \Else
            \Comment{keep $[x_{1}, b]$}
            \State $(a, x_{1}, f_{1}) \gets (x_{1}, x_{2}, f_{2})$
            \State $x_{2} \gets a + \rho(b-a)$;\ $f_{2} \gets \mathrm{FID}(x_{2})$
        \EndIf
    \EndWhile
    \State \Return $(a+b)/2$
    \end{algorithmic}
  \end{minipage}\hfill
  \begin{minipage}[t]{0.48\textwidth}
    \pgfplotsset{width=\linewidth, height=5.6cm}
    \begin{tikzpicture}[baseline=(current bounding box.north)]
      \begin{axis}[
        fidlottery,
        title style={font=\small},
        title={(b) One golden-section step on $\mathrm{FID}(\omega)$},
        xlabel={Guidance scale $\omega$},
        ylabel={FID},
        xmin=0.05, xmax=2.05,
        ymin=7.10, ymax=12.20,
        xtick={0.300, 0.911, 1.289, 1.900},
        xticklabels={$a$, $x_1$, $x_2$, $b$},
        ytick=\empty,
        grid=none,
        axis on top,
        clip=false,
      ]
        \fill[black!5]
          (axis cs:0.300, 7.10) rectangle (axis cs:1.900, 12.20);

        \fill[palCoral, opacity=0.22]
          (axis cs:1.289, 7.10) rectangle (axis cs:1.900, 12.20);
        \node[font=\footnotesize, color=palCoralDark, align=center]
          at (axis cs:1.594, 10.50) {discarded\\half};

        \addplot[domain=0.05:2.05, samples=140, smooth,
                 line width=1.1pt, palLavDark]
          { 7.42 + 4 * (x - 1.027)^2 };

        \draw[dashed, palSageDark, line width=0.6pt]
          (axis cs:1.027, 7.10) -- (axis cs:1.027, 7.42);
        \addplot[only marks, mark=*, mark size=2.6pt,
                 mark options={fill=palSageDark, draw=palSageDark}]
          coordinates {(1.027, 7.42)};
        \node[font=\footnotesize, color=palSageDark, anchor=west,
              inner sep=1pt]
          at (axis cs:1.700, 7.20)
          {$\omega^{\star}\!\approx\!1.027$};
        \draw[palSageDark, line width=0.4pt]
          (axis cs:1.700, 7.25) to[out=180, in=-30] (axis cs:1.040, 7.42);

        \draw[dashed, palLavDark, line width=0.5pt]
          (axis cs:0.911, 7.10) -- (axis cs:0.911, 7.474);
        \draw[dashed, palLavDark, line width=0.5pt]
          (axis cs:1.289, 7.10) -- (axis cs:1.289, 7.695);

        \addplot[only marks, mark=*, mark size=3pt,
                 mark options={fill=palLavDark, draw=palLavDark}]
          coordinates {(0.911, 7.474) (1.289, 7.695)};

        \node[font=\footnotesize, color=palLavDark,
              anchor=south, inner sep=2pt]
          at (axis cs:0.911, 7.55) {$f(x_1)$};
        \node[font=\footnotesize, color=palLavDark,
              anchor=south, inner sep=2pt]
          at (axis cs:1.289, 7.78) {$f(x_2)$};

        \node[font=\footnotesize, fill=white, draw=black!18,
              rounded corners=1.5pt, inner sep=3pt,
              anchor=north]
          at (axis cs:1.000, 12.05)
          {$f(x_1) < f(x_2) \;\Rightarrow\; \text{keep } [a, x_2]$};

        \node[font=\scriptsize, color=palLavDark, align=center, anchor=west,
              fill=white, fill opacity=0.88, text opacity=1,
              inner sep=2pt, rounded corners=1pt, draw=black!12]
          at (axis cs:0.07, 9.40)
          {$x_1 = b - \rho(b-a)$\\$x_2 = a + \rho(b-a)$\\$\rho = 1/\varphi$};
      \end{axis}
    \end{tikzpicture}
  \end{minipage}
  \caption{\textbf{Golden-section search on $\boldsymbol{\mathrm{FID}(\omega)}$
  (referenced briefly from \autoref{sec:results-guidance}).}
  (a)~\autoref{alg:gss}: pseudocode for the bracket-contraction loop
  (the bracket-contraction sequence itself is illustrated in
  \autoref{fig:gss-contraction}).
  (b)~Two interior probes $x_1 = b - \rho(b-a)$ and
  $x_2 = a + \rho(b-a)$ with $\rho = 1/\varphi\!\approx\!0.618$ split
  the bracket $[a, b]$. The side with the larger $f$-value is
  discarded.}
  \label{fig:gss-diagram}
\end{figure*}

\autoref{fig:gss-diagram} states \autoref{alg:gss} and illustrates one
golden-section step. We restate the standard convergence
results~\citep{kiefer1953sequential} and instantiate them on our
setup.

\begin{lemma}[Interval contraction]
\label{lem:contraction}
Let $L_{n}$ denote the bracket width after $n$ iterations of
\autoref{alg:gss}, with $L_{0}=b-a$. Then $L_{n}=\rho^{n} L_{0}$.
\end{lemma}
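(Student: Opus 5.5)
The plan is induction on $n$. It suffices to show that a single pass through the while-loop of \autoref{alg:gss} maps a bracket of width $L$ to one of width $\rho L$. In addition, the new probe pair must again sit at the golden positions relative to the new bracket, so the inductive hypothesis is preserved. Concretely, I will prove the invariant that at the top of every iteration, with current bracket $[a,b]$, the probes satisfy $x_1 = b-\rho(b-a)$ and $x_2 = a+\rho(b-a)$. The base case holds by line 2 of the algorithm.

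For the width, consider the two branches separately. In the branch $f_1\le f_2$ the new bracket is $[a,x_2]$, of width $x_2-a=\rho(b-a)$. In the other branch it is $[x_1,b]$, of width $b-x_1=\rho(b-a)$. Both are immediate from the invariant, so $L_{n+1}=\rho L_n$ regardless of which comparison outcome occurs. Unrolling gives $L_n=\rho^n L_0$.

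The main step is re-establishing the invariant, since this is where the golden ratio is actually used. Take the branch keeping $[a,x_2]$ and set $b'=x_2$. The algorithm reuses the old $x_1$ as the new $x_2'$, so I must check that $x_1 = a+\rho(b'-a)$. Writing $L=b-a$, we have $x_1 - a = (1-\rho)L$ and $\rho(b'-a)=\rho^2 L$. These are equal exactly because $\rho^2 = 1-\rho$, which is the defining identity of $\rho=1/\varphi$ (equivalently $\varphi^2=\varphi+1$). The new $x_1'$ is set explicitly to $b'-\rho(b'-a)$, so it is correct by construction. The other branch is symmetric: the reused point $x_2$ satisfies $b-x_2=(1-\rho)L=\rho^2L=\rho(b-x_1)$, so $x_2 = b-\rho(b-x_1)$, which is the required position of $x_1'$ in $[x_1,b]$.

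Finally, I would note that the lemma concerns bracket widths only. It requires no assumption on $f$ such as unimodality, so the same argument applies verbatim when FID evaluations are noisy; correctness of the returned point is a separate matter. As a corollary, the loop stops after $n=\lceil \log(L_0/\varepsilon)/\log\varphi\rceil$ iterations. With $L_0=1$ and $\varepsilon=0.01$ this gives $n=10$ iterations, that is, $2+10=12$ FID evaluations, which is consistent with the $\approx 14$ per-cell cost quoted in \autoref{sec:results-guidance} once the final evaluation at the returned point and rounding are included.
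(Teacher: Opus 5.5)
Your proof is correct and follows the same route as the paper: show that each branch of the loop produces a bracket of width $\rho(b-a)$, then induct. Your explicit invariant, that the reused probe again sits at a golden position because $\rho^{2}=1-\rho$, is a step the paper's proof uses only implicitly when it writes $x_{2}-a=\rho(b-a)$ at an arbitrary iteration, so your version is the more complete one. Your closing reconciliation with the measured $14$ evaluations is speculative, since it only reaches $13$ and the paper instead attributes the excess to an implementation safeguard, but that remark lies outside the lemma.
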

\begin{proof}
Inspect one iteration. Suppose $f(x_{1})\leq f(x_{2})$ so that the
update is $b\!\leftarrow\!x_{2}$. The new bracket has width
$x_{2}-a=\rho(b-a)=\rho L_{n-1}$. The other branch is symmetric: the
new bracket $[x_{1}, b]$ has width $b-x_{1}=\rho(b-a)$. The recursion
$L_{n}=\rho L_{n-1}$ gives $L_{n}=\rho^{n} L_{0}$ by induction.
\end{proof}

\begin{lemma}[Evaluations to tolerance]
\label{lem:tolerance}
Reaching tolerance $\varepsilon$ from initial bracket width $L_{0}$
takes
$N=2+\bigl\lceil \log(L_{0}/\varepsilon)/\log\varphi\bigr\rceil$
FID evaluations: two for the initial probes, one per iteration
thereafter.
\end{lemma}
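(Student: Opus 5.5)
The plan is to combine \autoref{lem:contraction} with a direct count of how many FID calls \autoref{alg:gss} makes. There are two steps: (i) find the smallest iteration count $n$ at which the loop exits, and (ii) count the evaluations spent before and during those $n$ iterations.

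For (i), \autoref{lem:contraction} gives $L_n=\rho^n L_0=\varphi^{-n}L_0$. The \texttt{while} guard $b-a>\varepsilon$ therefore fails exactly when $\varphi^{-n}L_0\le\varepsilon$, that is, when $n\ge \log(L_0/\varepsilon)/\log\varphi$; this uses $\log\varphi>0$. Since the bracket width is strictly decreasing, the loop runs for exactly $n^\star=\lceil \log(L_0/\varepsilon)/\log\varphi\rceil$ iterations. If $L_0\le\varepsilon$, the ceiling is nonpositive, the loop never runs, and we read $n^\star=0$.

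For (ii), line~3 of \autoref{alg:gss} evaluates $\mathrm{FID}$ at the two initial probes. Each pass through the loop then makes exactly one new call, in whichever branch is taken.

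The point to verify is why one call per iteration suffices: the surviving probe must sit exactly where the next iteration's probe belongs. Take the branch $f_1\le f_2$, which sets the new bracket to $[a,x_2]$ with width $\rho L$. The retained point is $x_1=b-\rho L=a+(1-\rho)L$. The new right probe should sit at $a+\rho\cdot\rho L=a+\rho^2 L$. These coincide because $\rho^2=1-\rho$, equivalently $\varphi^2=\varphi+1$. The other branch is symmetric.

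Given this reuse, the cached value $f_2\gets f_1$ is a legitimate function value at the correct probe. Only $x_1$, or symmetrically $x_2$, needs a fresh evaluation. Summing, the total is $N=2+n^\star$, which is the claimed formula.

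The only real obstacle is bookkeeping. The statement has to be read as giving the exit width $L_{n^\star}\le\varepsilon$. The degenerate case $L_0\le\varepsilon$, where the formula still yields $2$, should be noted. Finally, plugging in the setup of \autoref{sec:results-guidance}, $L_0=1$ and $\varepsilon=0.01$, gives $\lceil 4.605/0.4812\rceil=10$ iterations, so $N=12$. This is consistent with the $\approx 14$ evaluations per cell reported there once safety margins are allowed for.
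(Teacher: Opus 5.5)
Your proof is correct and follows the paper's argument: invert $\rho^{n}L_{0}\le\varepsilon$ using \autoref{lem:contraction}, round up, and add the two initial probes. Your extra check that $\rho^{2}=1-\rho$ puts the cached probe exactly at the next golden-section point is a detail the paper leaves implicit; it is also the invariant that \autoref{lem:contraction} itself relies on, so it strengthens the argument rather than changing it.
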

\begin{proof}
By \autoref{lem:contraction},
$L_{n}\leq\varepsilon\iff\rho^{n} L_{0}\leq\varepsilon\iff
n\geq\log(L_{0}/\varepsilon)/\log(1/\rho)$. Substituting
$\log(1/\rho)=\log\varphi$ and rounding up gives the iteration count.
Adding the two initial probes yields $N$.
\end{proof}

For our setup, $L_{0}=1$ (CFG bracket $[1,2]$) and $\varepsilon=0.01$
yield $\log(100)/\log\varphi\approx 9.6$, hence $N\approx 12$, against
a measured median of $14$ across the $250$ runs of
\autoref{sec:results-guidance}. The small excess matches a safeguard
in the implementation that adds an extra evaluation when the bracket
width straddles the tolerance. The geometry of one step is illustrated
in \autoref{fig:gss-diagram}. The bracket-contraction sequence is
illustrated in \autoref{fig:gss-contraction}.

\begin{figure}[t]
  \centering
  \pgfplotsset{width=0.65\linewidth, height=5.6cm}
  \begin{tikzpicture}
    \begin{axis}[
      fidlottery,
      title style={font=\small},
      title={Bracket contraction across iterations},
      xlabel={Guidance scale $\omega$},
      ylabel={iteration $n$},
      xmin=0.20, xmax=1.95,
      ymin=-0.7, ymax=6.7,
      xtick={0.5, 1.0, 1.5},
      ytick={0,1,2,3,4,5,6},
      y dir=reverse,
      grid=none,
      axis on top,
      clip=false,
    ]
      \draw[dashed, palSageDark, line width=0.6pt]
        (axis cs:1.027, -0.7) -- (axis cs:1.027, 6.7);
      \node[font=\footnotesize, color=palSageDark, anchor=south]
        at (axis cs:1.027, -0.75) {$\omega^{\star}$};

      \draw[line width=2.2pt, palCoral, opacity=0.55, line cap=round]
        (axis cs:1.289, 0) -- (axis cs:1.900, 0);
      \draw[line width=2.6pt, palLavender, line cap=round]
        (axis cs:0.300, 0) -- (axis cs:1.289, 0);
      \fill[palLavDark] (axis cs:0.911, 0) circle [radius=1.7pt];
      \fill[palLavDark] (axis cs:1.289, 0) circle [radius=1.7pt];

      \draw[line width=2.2pt, palCoral, opacity=0.55, line cap=round]
        (axis cs:0.300, 1) -- (axis cs:0.678, 1);
      \draw[line width=2.6pt, palLavender, line cap=round]
        (axis cs:0.678, 1) -- (axis cs:1.289, 1);
      \fill[palLavDark] (axis cs:0.678, 1) circle [radius=1.7pt];
      \fill[palLavDark] (axis cs:0.911, 1) circle [radius=1.7pt];

      \draw[line width=2.2pt, palCoral, opacity=0.55, line cap=round]
        (axis cs:0.678, 2) -- (axis cs:0.911, 2);
      \draw[line width=2.6pt, palLavender, line cap=round]
        (axis cs:0.911, 2) -- (axis cs:1.289, 2);
      \fill[palLavDark] (axis cs:0.911, 2) circle [radius=1.7pt];
      \fill[palLavDark] (axis cs:1.056, 2) circle [radius=1.7pt];

      \draw[line width=2.2pt, palCoral, opacity=0.55, line cap=round]
        (axis cs:1.145, 3) -- (axis cs:1.289, 3);
      \draw[line width=2.6pt, palLavender, line cap=round]
        (axis cs:0.911, 3) -- (axis cs:1.145, 3);
      \fill[palLavDark] (axis cs:1.055, 3) circle [radius=1.7pt];
      \fill[palLavDark] (axis cs:1.145, 3) circle [radius=1.7pt];

      \draw[line width=2.2pt, palCoral, opacity=0.55, line cap=round]
        (axis cs:1.056, 4) -- (axis cs:1.145, 4);
      \draw[line width=2.6pt, palLavender, line cap=round]
        (axis cs:0.911, 4) -- (axis cs:1.056, 4);
      \fill[palLavDark] (axis cs:1.000, 4) circle [radius=1.7pt];
      \fill[palLavDark] (axis cs:1.056, 4) circle [radius=1.7pt];

      \draw[line width=2.2pt, palCoral, opacity=0.55, line cap=round]
        (axis cs:0.911, 5) -- (axis cs:0.966, 5);
      \draw[line width=2.6pt, palLavender, line cap=round]
        (axis cs:0.966, 5) -- (axis cs:1.056, 5);
      \fill[palLavDark] (axis cs:0.966, 5) circle [radius=1.7pt];
      \fill[palLavDark] (axis cs:1.001, 5) circle [radius=1.7pt];

      \draw[line width=2.2pt, palCoral, opacity=0.55, line cap=round]
        (axis cs:0.966, 6) -- (axis cs:1.000, 6);
      \draw[line width=2.6pt, palLavender, line cap=round]
        (axis cs:1.000, 6) -- (axis cs:1.056, 6);
      \fill[palLavDark] (axis cs:1.000, 6) circle [radius=1.7pt];
      \fill[palLavDark] (axis cs:1.022, 6) circle [radius=1.7pt];

      \draw[-{Latex[length=3.5pt,width=3.5pt]},
            palLavDark, line width=0.5pt, dashed]
        (axis cs:0.911, 0.18) -- (axis cs:0.911, 0.82);
      \node[font=\scriptsize, color=palLavDark, anchor=west]
        at (axis cs:0.925, 0.5) {reused};

      \node[font=\scriptsize, color=palLavDark, anchor=east,
            fill=white, fill opacity=0.88, text opacity=1,
            inner sep=2pt, rounded corners=1pt, draw=black!12]
        at (axis cs:1.93, 6.30)
        {$L_{n+1} = \rho\,L_n$};
    \end{axis}
  \end{tikzpicture}
  \caption{\textbf{Bracket contraction across iterations of
  \autoref{alg:gss}.} Companion to \autoref{fig:gss-diagram}. Kept half
  (lavender) and discarded half (faded coral) of the bracket on the
  illustrative interval $[a_{0},b_{0}]=[0.3,1.9]$. The surviving probe
  of each iteration is reused as one of the two probes for the next
  iteration, so each step past the initial pair costs exactly one new
  FID evaluation. The bracket length contracts by a factor $\rho$ per
  step.}
  \label{fig:gss-contraction}
\end{figure}

\subsection{Unimodality of \texorpdfstring{$\mathrm{FID}(\mathrm{CFG})$}{FID(CFG)} under a Gaussian feature model}
\label{app:gss-unimodality}

\autoref{alg:gss} returns the global minimum only when $f$ is strictly
unimodal on $[a,b]$. The next proposition derives that
$F(w):=\mathrm{FID}(\mathrm{CFG}=1+w)$ satisfies this condition under
a Gaussian model of the Inception feature distribution, the same model
that underlies the FID definition itself~\citep{heusel2017gans}.

\myparagraph{Setup.}
Conditional flow matching learns scores
$s_\theta(x,c)\!\approx\!\nabla_{x}\log p(x|c)$ and
$s_\theta(x,\emptyset)\!\approx\!\nabla_{x}\log p(x)$. Classifier-free
guidance samples from $p_{w}(x|c)\!\propto\!p(x|c)^{1+w}\,p(x)^{-w}$,
equivalently using the guided score
$\hat s(x,c)=(1+w)s_\theta(x,c)-w\,s_\theta(x,\emptyset)$. Let
$p_{d}=\mathcal{N}(\mu_{d},\Sigma_{d})$ denote the data distribution
in Inception feature space.

\myparagraph{Assumption (A1).}
Both $p(x|c)=\mathcal{N}(\mu_{c},\Sigma)$ and
$p(x)=\mathcal{N}(\mu_{0},\Sigma)$ are Gaussian with the same
covariance $\Sigma$ in feature space.

\begin{proposition}[Unimodality of FID(CFG)]
\label{prop:unimodality}
Under assumption (A1), with $\mu_{c}\neq\mu_{0}$, $F(w)$ is a strictly
convex quadratic on $\mathbb{R}$, hence has a unique global minimum
\begin{equation}
w^{\star}=
-\frac{\langle\mu_{c}-\mu_{d},\,\mu_{c}-\mu_{0}\rangle}{\|\mu_{c}-\mu_{0}\|^{2}},
\label{eq:wstar}
\end{equation}
and is strictly unimodal on every interval.
\end{proposition}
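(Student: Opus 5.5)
Under (A1) I will compute the guided distribution $p_w(\cdot\,|\,c)$ in closed form and read off its Fréchet distance to $p_d$ as a function of $w$. Write the two log-densities as quadratics:
\[
\log p(x|c) = -\tfrac12 (x-\mu_c)^\top\Sigma^{-1}(x-\mu_c) + \text{const},
\]
and the same for $p(x)$ with $\mu_0$. Then $\log p_w(x|c) = (1+w)\log p(x|c) - w\log p(x)$. The quadratic terms have coefficients $(1+w)-w=1$, so they combine to $-\tfrac12 x^\top\Sigma^{-1}x$. The linear term is $x^\top\Sigma^{-1}\bigl((1+w)\mu_c - w\mu_0\bigr)$.

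Completing the square therefore gives, for every real $w$, a normalisable Gaussian:
\[
p_w(\cdot|c)=\mathcal{N}(\mu(w),\Sigma), \qquad \mu(w)=\mu_c + w(\mu_c-\mu_0).
\]
Equivalently, the guided score $(1+w)s(x,c)-w\,s(x,\emptyset) = -\Sigma^{-1}\bigl(x-\mu(w)\bigr)$ is exactly the score of this Gaussian. Normalisability for all $w\in\mathbb{R}$ is the reason the claim holds on all of $\mathbb{R}$ and not only for $w\ge 0$. The key structural fact is that guidance translates the mean along the fixed direction $\mu_c-\mu_0$ and leaves the covariance untouched.

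Next I substitute into the Fréchet (2-Wasserstein Gaussian) formula
\[
F(w)=\|\mu(w)-\mu_d\|^2 + \operatorname{tr}\!\bigl(\Sigma+\Sigma_d-2(\Sigma^{1/2}\Sigma_d\Sigma^{1/2})^{1/2}\bigr).
\]
The trace term does not depend on $w$, so it is a constant $C\ge 0$; this is where shared covariance is essential, since otherwise the matrix square root would couple to $w$. Setting $u=\mu_c-\mu_0\neq 0$ and $v=\mu_c-\mu_d$ gives
\[
F(w)=\|v+wu\|^2 + C = \|u\|^2w^2 + 2\langle v,u\rangle w + \|v\|^2 + C.
\]
This is a quadratic with leading coefficient $\|u\|^2>0$, hence strictly convex. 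Setting $F'(w)=0$ gives $w^\star=-\langle v,u\rangle/\|u\|^2$, which is exactly \eqref{eq:wstar}.

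Finally, a strictly convex function is strictly decreasing on $(-\infty,w^\star]$ and strictly increasing on $[w^\star,\infty)$. Its restriction to any interval $[a,b]$ is therefore strictly unimodal, with minimiser $\operatorname{clip}(w^\star,a,b)$, which is the hypothesis golden-section search needs.

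The only step I expect to need care is the completing-the-square computation together with a precise statement of what "FID" means here. I would state explicitly that the model distribution in feature space is taken to be $p_w(\cdot|c)$ for the conditioning class, as in the Setup paragraph. I would also note that the normalising constant of $p_w$ is irrelevant to its mean and covariance, so those are determined by the quadratic and linear terms alone. The rest is routine algebra.
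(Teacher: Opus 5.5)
Your proposal is correct and follows the paper's proof essentially step for step. You complete the square to get $p_w(\cdot|c)=\mathcal{N}\bigl((1+w)\mu_c-w\mu_0,\Sigma\bigr)$ with $w$-independent covariance, so the trace term is constant and $F(w)=\|u\|^2w^2+2\langle v,u\rangle w+\|v\|^2+C$ with $\|u\|^2>0$; your extra remarks (normalisability for all real $w$, the symmetric form of the matrix square root, and the clipped minimiser on $[a,b]$) are harmless refinements of the same argument.
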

\begin{proof}
Under (A1),
$p_{w}(x|c)\propto\exp\!\bigl(-\tfrac12 (1+w)(x-\mu_{c})^{\top}\Sigma^{-1}(x-\mu_{c})
+\tfrac12 w\,(x-\mu_{0})^{\top}\Sigma^{-1}(x-\mu_{0})\bigr)$.
Collecting quadratic and linear terms in $x$, the precision is
$(1+w)\Sigma^{-1}-w\Sigma^{-1}=\Sigma^{-1}$ (independent of $w$) and
the natural mean is
$\Sigma^{-1}\bigl((1+w)\mu_{c}-w\mu_{0}\bigr)$, so
\begin{equation}
p_{w}(x|c)=\mathcal{N}\!\bigl((1+w)\mu_{c}-w\mu_{0},\,\Sigma\bigr).
\label{eq:pw}
\end{equation}
The Fr\'echet distance between $p_{w}$ and $p_{d}$ in feature space
is
\begin{equation}
F(w)=\bigl\|(1+w)\mu_{c}-w\mu_{0}-\mu_{d}\bigr\|^{2}
+\underbrace{\mathrm{tr}\!\left(\Sigma+\Sigma_{d}-2\bigl(\Sigma\Sigma_{d}\bigr)^{1/2}\right)}_{\text{constant in }w}.
\label{eq:Fw}
\end{equation}
With $u=\mu_{c}-\mu_{0}$ and $b=\mu_{c}-\mu_{d}$, the squared norm
becomes $\|b+wu\|^{2}=\|u\|^{2}w^{2}+2\langle b,u\rangle w+\|b\|^{2}$,
a strictly convex quadratic with leading coefficient $\|u\|^{2}>0$
since $\mu_{c}\neq\mu_{0}$. Convex quadratics are strictly unimodal,
with unique global minimum at
$w^{\star}=-\langle b,u\rangle/\|u\|^{2}$, which gives
\eqref{eq:wstar}. Convexity on $\mathbb{R}$ implies unimodality on
every subinterval.
\end{proof}

\myparagraph{Sanity check.}
If the conditional matches the data ($\mu_{c}=\mu_{d}$), then $b=0$
and $w^{\star}=0$: guidance is unnecessary. The empirical mean
$w^{\star}\!\approx\!0.027$ from \autoref{sec:results-guidance}
implies that the conditional model places its mass within
$\approx\!3\%$ of the guidance direction $\mu_{c}-\mu_{0}$ of the data
mean.

\myparagraph{Beyond (A1).}
When $\Sigma_{c}\neq\Sigma_{0}$, the precision in \eqref{eq:pw}
acquires a $w$-dependent term and the variance contribution to $F(w)$
becomes
$\mathrm{tr}\!\bigl(\Sigma_{w}+\Sigma_{d}-2(\Sigma_{w}\Sigma_{d})^{1/2}\bigr)$
with $\Sigma_{w}^{-1}=(1+w)\Sigma_{c}^{-1}-w\Sigma_{0}^{-1}$.
Convexity of $F$ now requires both summands to be convex. The second
term is convex in $w$ on the domain where $\Sigma_{w}\succ 0$ via the
operator-monotone properties of the matrix geometric
mean~\citep[Theorem~4.1.5]{bhatia2007positive}. The empirical
convergence of all $250$ independent searches to a single tight
cluster ($\sigma_{w^{\star}}=0.045$) is consistent with global
unimodality even outside the equal-covariance regime.

\subsection{Noise robustness}
\label{app:gss-noise}

FID evaluations are themselves stochastic because of the sampling
lottery (\autoref{sec:results-sampling}). Golden-section search
therefore minimises a noisy version $\tilde f(w)=f(w)+\eta(w)$ of the
true objective.

\begin{lemma}[Noise robustness, adapted from \citealt{brent1973algorithms}, §5.5]
\label{lem:noise}
Suppose $f$ is twice continuously differentiable on a neighbourhood of
$w^{\star}$ with $f''(w^{\star})>0$ and the evaluation noise $\eta(w)$
is zero-mean with finite variance $\sigma_{y}^{2}$ across $w$. Run
\autoref{alg:gss} with tolerance $\varepsilon$ on $\tilde f$. The
returned $\hat w$ satisfies
\begin{equation}
\mathbb{E}\bigl[(\hat w-w^{\star})^{2}\bigr]
\,\leq\,\varepsilon^{2}+\frac{2\sigma_{y}}{f''(w^{\star})}+o(1).
\label{eq:noise-bound}
\end{equation}
\end{lemma}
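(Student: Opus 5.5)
The proof splits the error into a deterministic part, which is already controlled by \autoref{lem:contraction}, and a stochastic part caused by comparisons that go the wrong way under noise. Near the optimum, Taylor expansion gives $f(w)=f(w^{\star})+\tfrac12 f''(w^{\star})(w-w^{\star})^{2}+o((w-w^{\star})^{2})$. Define the \emph{indifference radius} $r$ by $\tfrac12 f''(w^{\star})r^{2}=\sigma_{y}$, so $r^{2}=2\sigma_{y}/f''(w^{\star})$. Inside the ball $|w-w^{\star}|\le r$, true FID differences are of the same order as the noise, so the search cannot reliably distinguish probes there. Outside this ball the true gaps dominate the noise, and comparisons are correct with high probability. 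The target bound $\varepsilon^{2}+r^{2}+o(1)$ is the sum of these two contributions.

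The argument proceeds in three steps.

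\textbf{Step one.} While the bracket is much wider than $r$, the two probes $x_{1},x_{2}$ sit at distance $\Theta(L_{n})$ from each other. If $w^{\star}$ lies in the bracket, the probe farther from $w^{\star}$ has true value larger by roughly $\tfrac12 f''(w^{\star})\,\Theta(L_{n})^{2}$. The wrong half is discarded only if $\eta(x_{1})-\eta(x_{2})$ exceeds this gap. A Chebyshev bound caps that probability at $O(\sigma_{y}^{2}/(f''L_{n}^{2})^{2})$. These probabilities decay geometrically along the sequence $L_{n}=\rho^{n}L_{0}$, so their sum is dominated by the last few iterations before $L_{n}\approx r$. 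In this regime $w^{\star}$ therefore stays bracketed except on an event whose contribution is absorbed into the $o(1)$ term, using the compactness of $[a,b]$ to bound the squared error on that event.

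\textbf{Step two.} Once $L_{n}\lesssim r$, the bracket may drift under noisy comparisons. Each kept half still contains points whose true value exceeds the minimum by at most $\sigma_{y}$. A wrong discard only happens when the noise outweighs the true gap, so the search can only wander to points $w$ with $f(w)-f(w^{\star})\lesssim\sigma_{y}$, that is, $|w-w^{\star}|\lesssim r$. Combining this with the final bracket width from \autoref{lem:contraction} gives $|\hat w-w^{\star}|\le \varepsilon+r$ on the good event. Squaring with a crude Cauchy–Schwarz step is not enough here, because it gives $(\varepsilon+r)^{2}\le2\varepsilon^{2}+2r^{2}$ and the target has constant $1$. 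Instead, $\hat w$ is the midpoint of a bracket whose position is determined by noise decisions independent of the final rounding. This makes the cross term $\mathbb{E}[(\hat w-c)(c-w^{\star})]$ vanish to leading order, where $c$ is the drift point, and so yields $\varepsilon^{2}+r^{2}$.

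\textbf{Step three.} Collect the $o(1)$ remainders from the Taylor expansion and from the bad events in step one. This gives \eqref{eq:noise-bound}.

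The main obstacle is step two. The stochastic drift inside the indifference region is a dependent process: surviving probes are reused across iterations, and only zero mean and finite variance are assumed for $\eta$, with no independence across $w$. I would first try to assume independent evaluations at distinct probes and argue the drift bound deterministically, as $f(\text{kept points})\le f(w^{\star})+\sigma_{y}$ up to lower order. If that is not clean enough, I would weaken the constant and fall back on the heuristic reading of \citealt{brent1973algorithms}, §5.5, where the stated bound is interpreted as an order-of-magnitude statement.
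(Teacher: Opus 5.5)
Your skeleton is the paper's own argument, and the paper's proof is itself only a sketch. It Taylor-expands $f$ at $w^{\star}$, declares comparisons unreliable once probes are within $(2\sigma_{y}/f''(w^{\star}))^{1/2}$ of the optimum, asserts that the wrong half is then kept ``with constant probability'', and reads off $\varepsilon^{2}$ as the deterministic bracket residual. At that level you agree with it. The trouble is that you present Steps one--three as turning this into a proof with the sharp constant $1$, and the steps that go beyond the paper fail.

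\textbf{Step two.} The pathwise claim that the search ``can only wander to points with $f(w)-f(w^{\star})\lesssim\sigma_{y}$'' is false for noise that is merely zero-mean with finite variance: for Gaussian noise, a wrong discard has positive probability at every true gap. Even for bounded noise $|\eta|\le\sigma_{y}$, probe reuse only gives that the noisy value of the retained probe is nonincreasing. Hence the last retained probe $x^{\dagger}$ satisfies $f(x^{\dagger})\le f(x_{e})+2\sigma_{y}$ for every earlier retained probe $x_{e}$. With $x_{e}$ near $w^{\star}$, this confines $x^{\dagger}$ only to radius about $\sqrt{2}\,r$. That radius is sharp: a single bounded realisation equal to $+\sigma_{y}$ near $w^{\star}$ and $-\sigma_{y}$ near $w^{\star}+\sqrt{2}\,r$ can make $\tilde f$ unimodal with its minimiser there. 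So no pathwise argument gives better than $2r^{2}$.

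\textbf{The cross term.} This does not rescue the constant either. The point $c$ and the offset $\hat w-c$ are deterministic functions of the same noise realisation, so nothing makes $\mathbb{E}[(\hat w-c)(c-w^{\star})]$ vanish. From $|\hat w-c|\le\varepsilon/2$ you only get $(r+\varepsilon/2)^{2}=r^{2}+r\varepsilon+\varepsilon^{2}/4$. This exceeds $r^{2}+\varepsilon^{2}$ whenever $r>3\varepsilon/4$, and the paper's regime is $r\approx0.17$, $\varepsilon=0.01$.

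\textbf{Step one.} This step does not isolate a negligible bad event. Your gap estimate is valid only when $w^{\star}\notin[x_{1},x_{2}]$; otherwise both kept halves contain $w^{\star}$ and a wrong decision is harmless. In the relevant case the gap is $\tfrac12 f''(w^{\star})(x_{2}-x_{1})^{2}=\tfrac12(9-4\sqrt5)\,f''(w^{\star})L_{n}^{2}\approx0.028\,f''(w^{\star})L_{n}^{2}$. This drops to $\sigma_{y}$ already at $L_{n}=(\sqrt5+2)\,r\approx4.2\,r$.

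Summing your Chebyshev bounds over widths $L_{n}\ge Cr$ bounds the loss probability only by a constant of order $10^{3}$ times $C^{-4}$. That is vacuous near $L_{n}\approx r$, where you stop, and never $o(1)$ for fixed $C$. Losing $w^{\star}$ at width $L_{n}$ costs at most $L_{n}^{2}$, so the best this accounting gives is a contribution of order $r^{2}/C^{2}$, the same order as the main term. Sending $C\to\infty$ removes it, but then Step two starts from a bracket of width $Cr$, not $r$. The quadratic model over the early wide brackets also needs unimodality on $[a,b]$ with $w^{\star}$ inside, which the lemma does not assume.

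\textbf{What can actually be proved.} Under extra assumptions such as unimodality and independent or bounded evaluation noise, this line of argument delivers a bound $C_{1}\varepsilon^{2}+C_{2}\,\sigma_{y}/f''(w^{\star})$, with constants set by the golden-section geometry and the noise law. Note also that the statement's $o(1)$ is unspecified. If it means a term vanishing as $\sigma_{y},\varepsilon\to0$, the inequality follows from mere consistency of the search and says nothing about the constant. If it is meant relative to the main terms, the constant $1$ is established neither by you nor by the paper. Your fallback, reading the bound as an order-of-magnitude statement in the spirit of Brent, is exactly the status of the paper's own sketch. You should present it that way rather than as the output of Steps one--three.
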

\begin{proof}[Proof sketch]
A second-order Taylor expansion gives
$f(w)\approx f(w^{\star})+\tfrac12 f''(w^{\star})(w-w^{\star})^{2}$
near the optimum. Two probes at $w_{a}, w_{b}$ symmetric about
$w^{\star}$ with $|w_{a}-w^{\star}|=|w_{b}-w^{\star}|=\delta$ give
$f(w_{a})-f(w_{b})=\mathcal{O}(\delta^{3})$, dominated by the noise
difference $\eta_{a}-\eta_{b}\sim\mathcal{N}(0, 2\sigma_{y}^{2})$ when
$\delta\!\lesssim\!(2\sigma_{y}/f''(w^{\star}))^{1/2}$. The algorithm
keeps the wrong half-bracket with constant probability inside that
regime, contributing the second term of \eqref{eq:noise-bound}. The
$\varepsilon^{2}$ term is the deterministic bracket residual.
\end{proof}

\myparagraph{Numerical instantiation.}
With $\sigma_{y}\!\approx\!0.137$ FID (within-seed $\sigma$ from
\autoref{sec:results-sampling}) and an empirical curvature of order
$f''(w^{\star})\!\approx\!\mathcal{O}(10)$ inferred from the spread of
optima around their mean, \eqref{eq:noise-bound} predicts a
noise-induced spread of
$\sqrt{2\!\cdot\!0.137/10}\!\approx\!0.17$, mildly above the observed
$\sigma_{w^{\star}}\!=\!0.045$. The discrepancy suggests the true
curvature near the optimum is steeper than the back-of-envelope
estimate, i.e.\ the FID landscape around the optimal CFG is sharper
than the optimum-spread alone would predict.

\section{Replication across DINOv2 FID and Inception PRDC}
\label{app:supp-metrics}

The main paper anchors the seed-lottery analysis on Inception FID. This
appendix re-runs each of the five \autoref{sec:results} questions on
five complementary metrics: DINOv2 FID~\citep{stein2023exposing} and
Inception precision, recall, density,
coverage~\citep{kynkaanniemi2019improved,naeem2020reliable}. The same
panels of \autoref{sec:setup} are reused, so the columns of
\autoref{tab:supp-overview}, \autoref{tab:supp-disentangle},
\autoref{tab:supp-cov-bands} and \autoref{tab:supp-speedup} parallel
the main-paper tables row-for-row. Three findings emerge.
(i)~DINOv2 FID \emph{strengthens} the seed-lottery narrative.
(ii)~Inception precision, density, and coverage track FID closely on
every angle and often sharpen the conclusions. (iii)~Inception recall
is sampling-dominated on this panel and behaves anomalously throughout.

\subsection{Metric definitions}
\label{app:supp-defs}

\noindent\emph{Six metrics, two families: a Fr\'echet distance computed
in two feature spaces (Inception-V3 and DINOv2), and four PRDC
metrics that compare the two feature sets through $k$-nearest-neighbour
balls.}

\myparagraph{Notation.}
Let $R\!=\!\{r_{1},\!\ldots,\!r_{N}\}\!\subset\!\mathbb{R}^{d}$ be the
\emph{real} feature set (the same $N\!=\!50$k ImageNet train images
on every evaluation in this paper) and
$G\!=\!\{g_{1},\!\ldots,\!g_{N}\}\!\subset\!\mathbb{R}^{d}$ the
\emph{generated} feature set, $50$k samples drawn from a trained SiT.
A frozen feature extractor
$\phi\!:\!\mathcal{X}\!\to\!\mathbb{R}^{d}$ maps both sets into a common
space. The extractor is Inception-V3 (Inception~FID and the four
Inception PRDC metrics) or DINOv2 (DINOv2~FID). For any set $S$ and
$x\!\in\!S$, write $\rho_{k}^{S}(x)\!=\!\|x-x^{(k)}\|$ for the distance
to the $k$-th nearest neighbour of $x$ inside $S$, with $k\!=\!3$ as in
the standard \texttt{prdc} implementation.

\myparagraph{Fr\'echet distances (Inception~FID, DINOv2~FID).}
Both fit a Gaussian to each feature set and take the Fr\'echet distance
between the two Gaussians~\citep{heusel2017gans}:
\begin{equation}
\mathrm{FD}(R, G)\!=\!\|\mu_{R}-\mu_{G}\|^{2}
+\mathrm{tr}\!\left(\Sigma_{R}+\Sigma_{G}-2(\Sigma_{R}\Sigma_{G})^{1/2}\right),
\label{eq:fd}
\end{equation}
where $\mu_{R}, \Sigma_{R}$ and $\mu_{G}, \Sigma_{G}$ are the empirical
mean and covariance of $R$ and $G$ in feature space.
\emph{Inception~FID} uses Inception-V3 features. \emph{DINOv2~FID} uses
DINOv2 features~\citep{stein2023exposing}. The two metrics measure
different distortions and are not comparable in absolute units. DINOv2
features are higher-dimensional and on a different scale, so DINOv2~FID
values are roughly an order of magnitude larger than Inception~FID at
matched generative quality. Both are unbounded above, and lower is better.

\myparagraph{Improved Precision and Recall.}
\citet{kynkaanniemi2019improved} replace the global Fr\'echet matching
with a local manifold-membership test built from $k$-NN balls.
Precision asks ``does each generated point fall inside any real-set
$k$-NN ball?'' (a fidelity question with balls anchored on $R$).
Recall asks ``does each real point fall inside any generated-set $k$-NN
ball?'' (a diversity question with balls anchored on $G$):
\begin{equation}
\mathrm{Precision}\!=\!\frac{1}{|G|}\!\sum_{g \in G}\!\mathbf{1}\!\bigl[\exists\,r\!\in\!R: g\!\in\!B(r, \rho_{k}^{R}(r))\bigr],
\quad
\mathrm{Recall}\!=\!\frac{1}{|R|}\!\sum_{r \in R}\!\mathbf{1}\!\bigl[\exists\,g\!\in\!G: r\!\in\!B(g, \rho_{k}^{G}(g))\bigr],
\label{eq:pr}
\end{equation}
where $B(x, r)\!=\!\{y\!:\!\|y-x\|\!\leq\!r\}$. Both metrics lie in
$[0, 1]$ and higher is better. The two are mirror duals at the equation
level but \emph{not} at the implementation level: precision uses the
$\rho_{k}^{R}$ ball geometry of the fixed real set, while recall uses
the $\rho_{k}^{G}$ ball geometry of the generated set, which is
re-drawn at every sampling seed. The anchor distinction will recur.

\myparagraph{Density and Coverage.}
\citet{naeem2020reliable} note that recall's generated-anchored balls
are sensitive to outlier generated points (one stray $g$ can balloon
$\rho_{k}^{G}(g)$ and inflate the metric) and propose two more robust
variants. Density softens precision by counting how many real-set balls
each generated point falls inside, normalised by $k$:
\begin{equation}
\mathrm{Density}\!=\!\frac{1}{k\,|G|}
\sum_{g \in G}\sum_{r \in R}\!\mathbf{1}\!\bigl[g\!\in\!B(r, \rho_{k}^{R}(r))\bigr].
\label{eq:density}
\end{equation}
Coverage answers recall's diversity question (\emph{is each real
sample covered by some generated point?}) but with the balls
anchored on $R$ instead of $G$:
\begin{equation}
\mathrm{Coverage}\!=\!\frac{1}{|R|}
\sum_{r \in R}\!\mathbf{1}\!\bigl[\exists\,g\!\in\!G: g\!\in\!B(r, \rho_{k}^{R}(r))\bigr].
\label{eq:coverage}
\end{equation}
Coverage lies in $[0, 1]$. Density is non-negative and unbounded above
(empirical values rarely exceed $\!\approx\!1.5$ on this panel, reaching
$1.32$ in the guided panel of \autoref{tab:supp-guidance}). Higher is
better for both. The conceptual mapping is therefore precision
$\leftrightarrow$ density (fidelity axis) and recall $\leftrightarrow$
coverage (diversity axis). The anchor mapping is precision, density,
and coverage all real-anchored, recall alone generated-anchored.

\myparagraph{Anchor structure.}
\autoref{tab:supp-anchor-structure} consolidates the $2{\times}2$
layout, and \autoref{fig:supp-prdc-anchor} draws the four metrics on a
2D toy dataset so the anchor distinction is visible at a glance. Recall
is the only metric whose ball geometry depends on which
$50$k-sample draw of $G$ was used: every other PRDC metric tests
generated points against the same fixed real-set ball geometry on every
evaluation. \citet{naeem2020reliable} introduced coverage precisely to
recover the diversity question from a stable real-anchored ball
geometry, so the structural difference between recall and coverage is
not an accident. It is the design intent. \autoref{app:supp-overview}
then quantifies the consequence: recall is the only metric whose
within-seed $\sigma$ exceeds the binomial estimator floor, by a factor
of $\!\approx\!2$ on this panel.

\input{figures/figS_prdc_diagram.tex}

\begin{table}[t]
  \centering
  \caption{\textbf{The four PRDC metrics in a $\boldsymbol{2\!\times\!2}$
  layout.} Rows separate the fidelity question (does each generated
  sample lie near the real manifold?) from the diversity question (does
  each real sample have a nearby generated neighbour?). Columns
  separate where each metric anchors its $k$-NN balls, on the
  fixed real set $R$ or on the per-evaluation generated set $G$.
  Recall is the only generated-anchored metric.
  \citet{naeem2020reliable} introduced coverage as the
  real-anchored answer to recall's diversity question.}
  \label{tab:supp-anchor-structure}
  \small
  \begin{tabular}{lll}
    \toprule
    & balls anchored on $R$ (fixed) & balls anchored on $G$ (per-seed) \\
    \midrule
    fidelity (test $g\!\in\!\mathcal{B}^{?}$)  & Precision, Density & --- \\
    diversity (test $r\!\in\!\mathcal{B}^{?}$) & Coverage           & Recall \\
    \bottomrule
  \end{tabular}
\end{table}

\subsection{The training lottery dominates for every fidelity metric, not for recall}
\label{app:supp-overview}

\noindent\emph{The $3.2{\times}$ training-vs-sampling asymmetry of
\autoref{sec:results-sampling} grows to $4.8{\times}$ under DINOv2 FID,
shrinks to $\approx\!1{\times}$ under Inception precision/density/coverage,
and inverts to $0.28{\times}$ under Inception recall.}

\myparagraph{Setup.}
The same $25\!\times\!10$ SiT-B/2 panel of
\autoref{sec:results-sampling} is rescored under each metric. Per-seed
means are taken across the ten sampling seeds. $\sigma_{\text{between}}$
is the standard deviation of the $25$ per-seed means, and $\sigma_{\text{within}}$
is the cross-seed average of the per-seed standard deviation.

\myparagraph{DINOv2 FID amplifies the asymmetry.}
The between-to-within ratio rises from $3.19{\times}$ on Inception FID to
$4.82{\times}$ on DINOv2 FID (\autoref{tab:supp-overview}). The relative
floor tightens from $\mathrm{CoV}_{\text{between}}\!=\!1.26\%$ to
$0.86\%$. \autoref{fig:supp-overview-dino} visualises the consequence: the
black per-seed mean ticks stagger over a wider multiple of the violin
height than under Inception FID, making the dominance of the training
lottery even more visible. Switching the feature extractor moves the
needle in the same direction the main paper argues.

\myparagraph{Precision, density, and coverage put the two lotteries near
parity.}
The ratio drops to roughly $1{\times}$ on the three real-anchored PRDC
metrics ($1.14$ for precision, $1.41$ for density, $1.12$ for
coverage, \autoref{tab:supp-overview}). Multi-sampling-seed CIs cover
roughly half of the true seed-induced envelope on these three metrics,
so the sampling-only confidence interval is no longer asymptotically
irrelevant the way it is for FID. \autoref{fig:supp-overview-precision}
shows the parity: violin heights and per-seed-mean staircases occupy
comparable vertical extents.

\myparagraph{Recall inverts the asymmetry.}
On Inception recall the ratio is $0.28{\times}$: the within-seed sampling
$\sigma$ is over three times the between-seed training $\sigma$.
\autoref{fig:supp-overview-recall} shows individual violins taller than the
spread of the per-seed means. Recall is the only metric in the bundle
for which the sampling-seed CI is the right CI to report on a fixed
trained model, and the only metric for which the main-paper claim
inverts.

\myparagraph{Why recall inverts: the $\boldsymbol{k}$-NN balls move with
the sampling seed.}
The asymmetry traces to where each PRDC metric anchors its
$k$-nearest-neighbour structure. Improved precision, density, and
coverage~\citep{kynkaanniemi2019improved,naeem2020reliable} build their
$k$-NN balls on the \emph{real} feature set, which is the same
$50$k-image ImageNet train manifold across all $250$ evaluations: changing
the sampling seed only redraws the $50$k generated points tested against
an unchanged ball structure, so the $50$k indicator outcomes are nearly
independent and the within-seed $\sigma$ is bounded by the binomial
estimator floor $\sqrt{p(1-p)/N}$. Improved recall builds its balls on
the \emph{generated} feature set, which is re-drawn every sampling seed:
the entire ball geometry shifts, and the indicators of ``real point
$r_{i}$ is covered'' are correlated across $i$ through the shared moving
balls, so when the geometry shifts many real points flip together. The
effective sample size is much smaller than $50$k.

\myparagraph{Empirical check.}
\autoref{tab:supp-recall-binomial} compares each within-seed $\sigma$
with the binomial floor $\sqrt{p(1-p)/N}$ at $N\!=\!50\,000$.
Precision and coverage land at or below the floor. Recall sits at
roughly twice the floor, the only metric in the bundle whose
sampling-seed estimator carries non-negligible
correlated-ball-geometry variance on top of the binomial term. The
same panel saturates the recall axis at $\mu\!=\!0.314$
(\autoref{tab:supp-overview}), so the converged SiT-B/2 networks all
reach similar coverage of the real distribution: the between-seed
$\sigma$ is small \emph{and} the within-seed $\sigma$ is structurally
inflated, which together produce the denominator inversion.

\begin{table}[t]
  \centering
  \caption{\textbf{Within-seed $\boldsymbol{\sigma}$ versus the binomial
  estimator floor for each PRDC metric.} The floor is $\sqrt{p(1-p)/N}$
  with $N\!=\!50\,000$ and $p\!=\!\mu$. The four metrics differ in where
  they anchor their $k$-NN balls (real vs. generated): only recall is
  generated-anchored, and only recall sits well above the binomial floor.}
  \label{tab:supp-recall-binomial}
  \small
  \begin{tabular}{lrrrl}
    \toprule
    Metric & $\mu$ & $\sigma_{\text{within}}$ &
    $\sqrt{p(1-p)/N}$ & ratio (obs / floor)\\
    \midrule
    Inception Precision $\uparrow$  & 0.485 & 0.00284 & 0.00224 & 1.27$\times$ (real-anchored)\\
    Inception Recall $\uparrow$     & 0.314 & 0.00438 & 0.00208 & \textbf{2.11$\times$ (generated-anchored)}\\
    Inception Density $\uparrow$    & 0.442 & 0.00374 & 0.00222 & 1.68$\times$ (real-anchored)\\
    Inception Coverage $\uparrow$   & 0.224 & 0.00156 & 0.00186 & 0.84$\times$ (real-anchored)\\
    \bottomrule
  \end{tabular}
\end{table}


\begin{figure}[t]
  \centering
  \pgfplotsset{width=\linewidth, height=5.6cm}
  \begin{tikzpicture}
    \violinsetoptions[scaled]{
      title={(a) DINOv2 FID, lower is better},
      xlabel={Training seed (sorted by mean DINOv2 FID)},
      ylabel={DINOv2 FID},
      xmin=0.0, xmax=26.0,
      ymin=692, ymax=722,
      xtick={1,5,10,15,20,25},
      ytick={695,700,705,710,715,720},
      ymajorgrids,
    }
    \violinplotwholefile[%
      col sep=tab,
      primary color=palSlateDark,
      secondary color=palSlate,
      indexes={s01,s02,s03,s04,s05,s06,s07,s08,s09,s10,s11,s12,s13,s14,s15,s16,s17,s18,s19,s20,s21,s22,s23,s24,s25},
      labels={,,,,,,,,,,,,,,,,,,,,,,,,},
      spacing=1.0,
    ]{figures/fig_data/figS_violins_dinofid.tsv}
    \begin{axis}[
      xmin=0.0, xmax=26.0, ymin=692, ymax=722,
      axis line style={draw=none}, tick style={draw=none},
      xticklabels={,,}, yticklabels={,,},
      xmajorticks=false, ymajorticks=false,
      axis on top, clip=false,
    ]
      \draw[dashed, color=black!28, line width=0.5pt]
        (axis cs:0.0,708.6) -- (axis cs:26.0,708.6);
      \node[anchor=west, font=\scriptsize, color=black!50,
            fill=white, fill opacity=0.85, text opacity=1, inner sep=1pt]
            at (axis cs:0.15,708.6) {grand mean $708.6$};
      \addplot+[only marks, mark=*, mark size=0.8pt,
        mark options={fill=black!50, draw=black!50, line width=0pt, fill opacity=0.65},
      ] table[col sep=tab, x=xj, y=y]
        {figures/fig_data/figS_strip_dinofid.tsv};
      \addplot+[only marks, mark=-, mark size=4.5pt,
        mark options={draw=black!88, line width=1.2pt},
      ] table[col sep=tab, x=x, y=mean]
        {figures/fig_data/figS_means_dinofid.tsv};
    \end{axis}
  \end{tikzpicture}
  \caption{\textbf{Seed lottery on the same SiT-B/2 panel under DINOv2
  FID.} Each violin is one of $25$ trained models, sorted left-to-right
  by per-seed mean. Violin shape traces the within-seed sampling
  distribution. The black tick is the per-seed mean. The training-seed
  mean ticks span $694.8$ to $718.8$ around grand mean $708.6$. The
  between-to-within ratio is
  $\sigma_{\text{between}}/\sigma_{\text{within}}\!=\!4.82{\times}$, against
  $3.19{\times}$ for Inception FID, so the staircase of mean ticks
  stretches over a wider multiple of the violin height than in
  \autoref{fig:overview}.}
  \label{fig:supp-overview-dino}
\end{figure}

\begin{figure}[t]
  \centering
  \pgfplotsset{width=\linewidth, height=5.6cm}
  \begin{tikzpicture}
    \violinsetoptions[scaled]{
      title={(b) Inception Precision, higher is better},
      xlabel={Training seed (sorted by mean Inception Precision)},
      ylabel={Inception Precision (x100)},
      xmin=0.0, xmax=26.0,
      ymin=47.5, ymax=49.8,
      xtick={1,5,10,15,20,25},
      ytick={47.8,48.2,48.6,49.0,49.4},
      ymajorgrids,
    }
    \violinplotwholefile[%
      col sep=tab,
      primary color=palLavDark,
      secondary color=palLavender,
      indexes={s01,s02,s03,s04,s05,s06,s07,s08,s09,s10,s11,s12,s13,s14,s15,s16,s17,s18,s19,s20,s21,s22,s23,s24,s25},
      labels={,,,,,,,,,,,,,,,,,,,,,,,,},
      spacing=1.0,
    ]{figures/fig_data/figS_violins_incprecision.tsv}
    \begin{axis}[
      xmin=0.0, xmax=26.0, ymin=47.5, ymax=49.8,
      axis line style={draw=none}, tick style={draw=none},
      xticklabels={,,}, yticklabels={,,},
      xmajorticks=false, ymajorticks=false,
      axis on top, clip=false,
    ]
      \draw[dashed, color=black!28, line width=0.5pt]
        (axis cs:0.0,48.49) -- (axis cs:26.0,48.49);
      \node[anchor=west, font=\scriptsize, color=black!50,
            fill=white, fill opacity=0.85, text opacity=1, inner sep=1pt]
            at (axis cs:0.15,48.49) {grand mean $0.485$};
      \addplot+[only marks, mark=*, mark size=0.8pt,
        mark options={fill=black!50, draw=black!50, line width=0pt, fill opacity=0.65},
      ] table[col sep=tab, x=xj, y=y]
        {figures/fig_data/figS_strip_incprecision.tsv};
      \addplot+[only marks, mark=-, mark size=4.5pt,
        mark options={draw=black!88, line width=1.2pt},
      ] table[col sep=tab, x=x, y=mean]
        {figures/fig_data/figS_means_incprecision.tsv};
    \end{axis}
  \end{tikzpicture}
  \caption{\textbf{Seed lottery under Inception Precision.} Y-axis values
  are multiplied by $100$ for readability. Per-seed means span $0.480$ to
  $0.491$ around grand mean $0.485$. Violin height (within-seed sampling
  spread) and the staircase of mean ticks (between-seed training spread)
  are comparable, since
  $\sigma_{\text{between}}/\sigma_{\text{within}}\!=\!1.14{\times}$. A
  multi-sampling-seed CI therefore covers roughly half of the seed-induced
  envelope on precision, in contrast with FID where it covers a quarter.}
  \label{fig:supp-overview-precision}
\end{figure}

\begin{figure}[t]
  \centering
  \pgfplotsset{width=\linewidth, height=5.6cm}
  \begin{tikzpicture}
    \violinsetoptions[scaled]{
      title={(c) Inception Recall, higher is better},
      xlabel={Training seed (sorted by mean Inception Recall)},
      ylabel={Inception Recall (x100)},
      xmin=0.0, xmax=26.0,
      ymin=29.6, ymax=33.0,
      xtick={1,5,10,15,20,25},
      ytick={30.0,30.8,31.6,32.4},
      ymajorgrids,
    }
    \violinplotwholefile[%
      col sep=tab,
      primary color=palOchreDark,
      secondary color=palOchre,
      indexes={s01,s02,s03,s04,s05,s06,s07,s08,s09,s10,s11,s12,s13,s14,s15,s16,s17,s18,s19,s20,s21,s22,s23,s24,s25},
      labels={,,,,,,,,,,,,,,,,,,,,,,,,},
      spacing=1.0,
    ]{figures/fig_data/figS_violins_increcall.tsv}
    \begin{axis}[
      xmin=0.0, xmax=26.0, ymin=29.6, ymax=33.0,
      axis line style={draw=none}, tick style={draw=none},
      xticklabels={,,}, yticklabels={,,},
      xmajorticks=false, ymajorticks=false,
      axis on top, clip=false,
    ]
      \draw[dashed, color=black!28, line width=0.5pt]
        (axis cs:0.0,31.41) -- (axis cs:26.0,31.41);
      \node[anchor=west, font=\scriptsize, color=black!50,
            fill=white, fill opacity=0.85, text opacity=1, inner sep=1pt]
            at (axis cs:0.15,31.41) {grand mean $0.314$};
      \addplot+[only marks, mark=*, mark size=0.8pt,
        mark options={fill=black!50, draw=black!50, line width=0pt, fill opacity=0.65},
      ] table[col sep=tab, x=xj, y=y]
        {figures/fig_data/figS_strip_increcall.tsv};
      \addplot+[only marks, mark=-, mark size=4.5pt,
        mark options={draw=black!88, line width=1.2pt},
      ] table[col sep=tab, x=x, y=mean]
        {figures/fig_data/figS_means_increcall.tsv};
    \end{axis}
  \end{tikzpicture}
  \caption{\textbf{Seed lottery under Inception Recall (the inversion
  case).} Y-axis values are multiplied by $100$ for readability. Per-seed
  means span $0.312$ to $0.317$ around grand mean $0.314$. Each violin is
  taller than the staircase of mean ticks:
  $\sigma_{\text{between}}/\sigma_{\text{within}}\!=\!0.28{\times}$, an
  inversion of the FID asymmetry. On Inception recall the right CI to
  report on a fixed trained model is the sampling-only one. This is the
  opposite recommendation from FID and from the other PRDC metrics
  (\autoref{tab:supp-overview}).}
  \label{fig:supp-overview-recall}
\end{figure}

\begin{table}[t]
  \centering
  \caption{\textbf{Sampling vs. training lottery on the
  $\boldsymbol{25\!\times\!10}$ SiT-B/2 panel.} Companion to
  \autoref{fig:overview} and the headline of \autoref{sec:results-sampling}.
  $\sigma_{\text{between}}$ is the standard deviation of the $25$
  per-seed means. $\sigma_{\text{within}}$ is the within-seed sampling
  standard deviation, averaged across the $25$ training seeds.
  $\mathrm{CoV}_{\text{between}}\!=\!\sigma_{\text{between}}/|\mu|$.
  Higher-better metrics are marked $\uparrow$, lower-better $\downarrow$.
  Bold marks the largest ratio (most training-lottery-dominated metric)
  and the smallest CoV in each direction column.}
  \label{tab:supp-overview}
  \small
  \begin{tabular}{lrrrrr}
    \toprule
    Metric & $\mu$ & $\sigma_{\text{between}}$ & $\sigma_{\text{within}}$ &
    $\mathrm{CoV}_{\text{between}}$ (\%) &
    $\sigma_{\text{between}}/\sigma_{\text{within}}$\\
    \midrule
    Inception FID $\downarrow$           & 34.74        & 0.438     & 0.137     & 1.26 & 3.19$\times$ \\
    DINOv2 FID $\downarrow$              & 708.6        & 6.106     & 1.268     & \textbf{0.86} & \textbf{4.82$\times$} \\
    Inception Precision $\uparrow$       & 0.485        & 3.24e-3   & 2.84e-3   & 0.67 & 1.14$\times$ \\
    Inception Recall $\uparrow$          & 0.314        & 1.21e-3   & 4.38e-3   & \textbf{0.39} & 0.28$\times$ \\
    Inception Density $\uparrow$         & 0.442        & 5.26e-3   & 3.74e-3   & 1.19 & 1.41$\times$ \\
    Inception Coverage $\uparrow$        & 0.224        & 1.75e-3   & 1.56e-3   & 0.78 & 1.12$\times$ \\
    \bottomrule
  \end{tabular}
\end{table}

\subsection{Flow-matching noise dominates for the fidelity metrics}
\label{app:supp-disentangle}

\noindent\emph{The \emph{noise > init > data} hierarchy of
\autoref{sec:results-decomposition} carries over to DINOv2 FID and to
Inception precision, density, and coverage with the same sub-additive
combination. Recall is too sampling-dominated to disentangle.}

\myparagraph{Setup.}
The same four single-source conditions of
\autoref{sec:results-decomposition} (\textsc{vary-noise},
\textsc{vary-init}, \textsc{vary-data}, \textsc{vary-noisedata}) are
rescored under each metric. \autoref{tab:supp-disentangle} reports the
per-condition between-seed $\sigma$ as a percentage of the fully-stochastic
baseline ($\sigma_{\text{vary-all}}$ from
\autoref{tab:supp-overview}).

\myparagraph{The hierarchy holds for the four well-behaved metrics.}
Inception FID, DINOv2 FID, Inception Precision, Inception Density, and
Inception Coverage all give the same ranking
\textsc{vary-noise} $>$ \textsc{vary-init} $>$ \textsc{vary-data}. The
flow-matching noise alone reproduces about three-quarters of the
baseline $\sigma$ on every metric. Init alone recovers about
two-thirds, and data order alone about half (per-metric numbers in
\autoref{tab:supp-disentangle}). Adding the three single-source
$\sigma$ values in quadrature overshoots $\sigma_{\text{vary-all}}$ by
$7$--$19\%$, mirroring the $14\%$ overshoot on Inception FID. Three
takeaways follow. (i)~The noise dominance is a property of the loss
formulation, not of the feature extractor. (ii)~The sub-additivity is
a stable two-digit number across feature spaces. (iii)~Init
quality is the second-cheapest knob in every case.

\myparagraph{Recall fails to decompose.}
On Inception Recall, three of the four single-source conditions produce a
between-seed $\sigma$ \emph{larger} than the vary-all baseline
($112$--$126\%$), and the naive sum of squares overshoots by $90\%$.
\autoref{tab:supp-overview} explained why: the between-seed component
is itself smaller than the within-seed component, so each
per-condition $\sigma$ estimate is dominated by the same sampling
jitter rather than isolating its source. The variance-decomposition
framework requires the between-seed component to dominate and
therefore does not apply to recall on this panel.

\begin{table}[t]
  \centering
  \caption{\textbf{Variance decomposition across feature spaces.}
  Companion to \autoref{tab:variance-decomposition}. Per-condition
  $\sigma_{\text{between}}$ as a percentage of the fully-stochastic
  $\sigma_{\text{vary-all}}$, plus the naive sum-of-squares overshoot
  $\sqrt{\sigma_{\text{noise}}^{2}+\sigma_{\text{init}}^{2}+\sigma_{\text{data}}^{2}}/\sigma_{\text{vary-all}}-1$.
  Underlines mark the leading single source per metric.}
  \label{tab:supp-disentangle}
  \small
  \begin{tabular}{lrrrrr}
    \toprule
    Metric & vary-noise (\%) & vary-init (\%) & vary-data (\%) & vary-noisedata (\%) & overshoot (\%) \\
    \midrule
    Inception FID $\downarrow$       & \underline{76.3} & 67.0 & 49.8 & 59.2 & +13.1 \\
    DINOv2 FID $\downarrow$          & \underline{76.9} & 72.4 & 55.1 & 59.7 & +19.2 \\
    Inception Precision $\uparrow$   & \underline{66.0} & 64.4 & 54.9 & 57.9 & +7.3  \\
    Inception Density $\uparrow$     & \underline{73.3} & 64.0 & 53.1 & 57.7 & +10.9 \\
    Inception Coverage $\uparrow$    & \underline{72.3} & 69.2 & 59.4 & 56.1 & +16.4 \\
    \midrule
    Inception Recall $\uparrow$      & 124.1 & 90.2 & 112.6 & 126.1 & +90.3 \\
    \bottomrule
  \end{tabular}
\end{table}

\subsection{Golden-section search transfers cleanly to DINOv2 FID}
\label{app:supp-guidance}

\noindent\emph{The CoV-halving claim of \autoref{sec:results-guidance} is
specific to Inception FID. On DINOv2 FID the relative noise floor is
already low ($0.86\%$ unguided) and stays at $0.94\%$ at the
Inception-FID-optimal CFG. The recovered CFG also shifts every PRDC
metric's operating point by a large amount.}

\myparagraph{Setup.}
The same per-(training, sampling) golden-section search of
\autoref{alg:gss} is rerun on the same panel. The metric panel of
each guided sample is then rescored under all five complementary
metrics. The search optimises Inception FID. The supp metrics are
\emph{readouts at the Inception-FID-optimal CFG}, not separately
optimised.

\myparagraph{DINOv2 FID's floor is not improved by Inception-FID guidance.}
GS-FID drops Inception FID's $\mathrm{CoV}_{\text{between}}$ from
$1.26\%$ to $0.67\%$ (\autoref{sec:results-guidance}). On DINOv2 FID,
$\mathrm{CoV}_{\text{between}}$ moves from $0.86\%$ to $0.94\%$ on the
same CFG-selected samples, statistically indistinguishable. The
$0.86\%$ unguided DINOv2-FID floor already sits below the $1$--$2\%$
band, so the $\approx\!14$ extra FID evaluations per cell that
GS-FID costs do not buy a tighter DINOv2-FID estimate. A practitioner
who reports DINOv2 FID does not need a per-seed CFG search to claim
sub-$1\%$ CoV.

\myparagraph{The recovered CFG sets the PRDC operating point.}
Classifier-free guidance trades recall for fidelity, and the
golden-section optimum picks an operating point that strongly favours
fidelity. On Inception precision, density, and coverage the mean
roughly doubles or triples under GS-FID, while recall drops by more
than a factor of two (\autoref{tab:supp-guidance}). The PRDC numbers
under GS-FID therefore measure a \emph{different operating point}
than the unguided numbers (a high-fidelity, low-recall regime)
and should not be compared with unguided PRDC across studies.

\begin{table}[t]
  \centering
  \caption{\textbf{Effect of golden-section CFG selection on each metric.}
  Companion to \autoref{sec:results-guidance}. ``Unguided'' is the
  $25\!\times\!10$ panel. ``GS-FID'' is the same panel rescored at the
  per-(training, sampling) CFG that minimises Inception FID. Means
  shift sharply on PRDC because guidance reweights fidelity vs.\ recall.
  The CoV column shows that the relative noise floor changes
  substantially only for Inception FID.}
  \label{tab:supp-guidance}
  \small
  \begin{tabular}{lrrrr}
    \toprule
    Metric & $\mu$ unguided & $\mu$ GS-FID & CoV unguided (\%) & CoV GS-FID (\%) \\
    \midrule
    Inception FID $\downarrow$       & 34.74  & 7.42   & 1.26 & \textbf{0.67} \\
    DINOv2 FID $\downarrow$          & 708.6  & 289.8  & 0.86 & 0.94 \\
    Inception Precision $\uparrow$   & 0.485  & 0.843  & 0.67 & 0.24 \\
    Inception Recall $\uparrow$      & 0.314  & 0.135  & 0.39 & 1.16 \\
    Inception Density $\uparrow$     & 0.442  & 1.320  & 1.19 & 0.50 \\
    Inception Coverage $\uparrow$    & 0.224  & 0.295  & 0.78 & 0.17 \\
    \bottomrule
  \end{tabular}
\end{table}

\subsection{The relative noise floor is metric-dependent but scale-invariant}
\label{app:supp-scaling}

\noindent\emph{The $1$--$2\%$ CoV band of \autoref{sec:results-scaling}
covers DINOv2 FID, Inception precision, density, and coverage at every
checkpoint of every model size. Recall keeps a $1$--$1.8\%$ CoV on
SiT-B/L without shrinking with compute.}

\myparagraph{Setup.}
The same $200$k--$2$M scaling sweep across SiT-S/B/L/XL with the clean
seed sets of \autoref{tab:scaling-sweep} is rescored under each metric.
\autoref{fig:supp-cov-bands} plots $\mathrm{CoV}_{\text{between}}$ against
training step in one mini-panel per metric.
\autoref{tab:supp-cov-bands} reports the min/median/max CoV over the
$19$ checkpoints per (metric, model) cell, plus the $\sigma$-shrink
factor between $200$k and $2$M.

\myparagraph{DINOv2 FID is tighter than Inception FID at every scale.}
The DINOv2 FID band sits inside the Inception FID band across the
$76$ cells (per-size ranges in \autoref{tab:supp-cov-bands}).
$\sigma$-shrink factors are similar for the two extractors, so the
absolute reduction in spread with compute is comparable in both
feature spaces. The difference is on the level of the floor itself.
DINOv2 FID is the tighter benchmark protocol on this family at every
model size and every checkpoint past $200$k.

\myparagraph{Precision, density, and coverage hit the lowest floors at scale.}
Inception precision, density, and coverage on SiT-XL all settle below
$0.4\%$ CoV by $1$M steps. The median CoVs of precision and coverage
on SiT-XL drop below $0.3\%$, with maxima below $0.7\%$. Three
fidelity-axis metrics on a large model produce the tightest
seed-induced spread on this panel, sitting an order of magnitude
below the $3$--$4\%$ headline gain typical of recipe-comparison
studies.

\myparagraph{Recall does not respond to compute.}
On SiT-B and SiT-L, recall $\sigma$ \emph{grows} between $200$k and $2$M
($\sigma$-shrink below $1{\times}$, \autoref{tab:supp-cov-bands}). On
SiT-XL it stays flat. The $1$--$1.8\%$ recall CoV band is therefore a
property of the metric, not of an under-trained regime.
\autoref{fig:supp-rank-stability} confirms the consequence on the rank
axis: Spearman $\rho$ between the recall ranking at step $t$ and at
$2$M stays below $0.6$ at every pre-final checkpoint on every model
size, and dips below zero on SiT-S/L/XL early in training. Rank
stability is essentially absent for recall, in line with its
sampling-dominated character on this panel.

\myparagraph{The Gaussian shape carries over to every metric.}
$(\max\!-\!\min)/\sigma$ stays in $[2.9, 5.5]$ across all $76$ cells of
each metric, bracketing the $3.5$--$3.9$ a Gaussian sample of size
$n\!\in\![19, 25]$ predicts. The seed lottery is a smooth Gaussian-shaped
spread for each metric, not a tail of training failures, and the absence
of heavy tails is a property of the panel and not the choice of metric.

\input{figures/figS_supp_cov_band.tex}
\input{figures/figS_supp_rank_stability.tex}

\begin{table}[t]
  \centering
  \caption{\textbf{Per-(metric, model) CoV bands across the
  $\boldsymbol{200}$k--$\boldsymbol{2}$M sweep.} Companion to
  \autoref{tab:scaling-sweep}. Each cell reports the min / median / max
  $\mathrm{CoV}_{\text{between}}$ across the $19$ checkpoints, plus the
  $\sigma$-shrink factor $\sigma(200\mathrm{k})/\sigma(2\mathrm{M})$.
  Shrink factors below $1$ (\textbf{bold}) indicate metrics whose
  spread \emph{grows} with compute.}
  \label{tab:supp-cov-bands}
  \small
  \begin{tabular}{llcccc}
    \toprule
    Metric & Model & min CoV (\%) & median CoV (\%) & max CoV (\%) & $\sigma$-shrink \\
    \midrule
    Inception FID $\downarrow$
      & SiT-S  & 0.74 & 0.89 & 1.12 & 2.46$\times$ \\
      & SiT-B  & 1.01 & 1.13 & 1.74 & 1.68$\times$ \\
      & SiT-L  & 1.42 & 1.65 & 2.06 & 1.68$\times$ \\
      & SiT-XL & 1.23 & 1.44 & 1.74 & 2.16$\times$ \\
    \midrule
    DINOv2 FID $\downarrow$
      & SiT-S  & 0.48 & 0.57 & 0.66 & 1.69$\times$ \\
      & SiT-B  & 0.57 & 0.73 & 1.39 & 1.07$\times$ \\
      & SiT-L  & 1.11 & 1.47 & 1.59 & 1.29$\times$ \\
      & SiT-XL & 1.25 & 1.43 & 1.51 & 1.59$\times$ \\
    \midrule
    Inception Precision $\uparrow$
      & SiT-S  & 0.52 & 0.64 & 1.18 & 1.30$\times$ \\
      & SiT-B  & 0.40 & 0.53 & 0.79 & 1.18$\times$ \\
      & SiT-L  & 0.29 & 0.42 & 0.71 & 1.35$\times$ \\
      & SiT-XL & 0.21 & 0.30 & 0.67 & 2.27$\times$ \\
    \midrule
    Inception Recall $\uparrow$
      & SiT-S  & 0.34 & 0.42 & 0.73 & 1.94$\times$ \\
      & SiT-B  & 1.17 & 1.40 & 1.83 & \textbf{0.85$\times$} \\
      & SiT-L  & 1.04 & 1.36 & 1.74 & \textbf{0.86$\times$} \\
      & SiT-XL & 0.31 & 0.45 & 0.50 & 1.03$\times$ \\
    \midrule
    Inception Density $\uparrow$
      & SiT-S  & 0.85 & 1.06 & 1.73 & \textbf{0.87$\times$} \\
      & SiT-B  & 0.69 & 0.96 & 1.21 & \textbf{0.81$\times$} \\
      & SiT-L  & 0.63 & 0.82 & 1.33 & 1.17$\times$ \\
      & SiT-XL & 0.40 & 0.58 & 1.24 & 2.25$\times$ \\
    \midrule
    Inception Coverage $\uparrow$
      & SiT-S  & 0.63 & 0.83 & 1.66 & 1.28$\times$ \\
      & SiT-B  & 0.57 & 0.71 & 1.19 & 1.03$\times$ \\
      & SiT-L  & 0.35 & 0.46 & 0.92 & 1.85$\times$ \\
      & SiT-XL & 0.18 & 0.21 & 0.66 & 2.80$\times$ \\
    \bottomrule
  \end{tabular}
\end{table}

\subsection{Cherry-picking saves \texorpdfstring{$\boldsymbol{1.2}$--$\boldsymbol{2.9}{\boldsymbol{\times}}$}{1.2--2.9x} on every well-behaved metric}
\label{app:supp-speedup}

\noindent\emph{The lucky-seed speedup of \autoref{sec:results-speedup}
generalises across DINOv2 FID and Inception precision, density, and
coverage, with the largest savings ($2.0$--$2.9{\times}$ on SiT-L/XL)
on the fidelity PRDC metrics. Recall is compute-invariant on this
panel, so the framing does not apply.}

\myparagraph{Setup.}
For each metric we replicate \autoref{fig:lucky-speedup}: the
\emph{unlucky} seed is the worst seed at $2$M (the slowest-converging
seed in this panel for the given metric), and the \emph{lucky} seed is
the best at $2$M. The speedup is the ratio between $2$M and the first
checkpoint at which the lucky seed first matches the unlucky-seed
target. Direction is flipped for higher-better metrics so ``lucky''
always names the seed practitioners would prefer.

\myparagraph{The speedup transfers and grows on the fidelity axes.}
DINOv2 FID gives speedups in the same band as Inception FID
($1.18$--$1.67{\times}$ vs $1.18$--$1.82{\times}$,
\autoref{tab:supp-speedup}), so the lucky-seed effect is not specific
to one feature extractor. Inception precision, density, and coverage
push past it on the larger models, reaching $2.0$--$2.9{\times}$ on
SiT-L and SiT-XL. The ``free training-time speedup'' of
\autoref{sec:results-speedup} therefore strengthens when the
benchmark of interest is a precision/density/coverage number rather
than an FID number.

\myparagraph{Recall is compute-invariant.}
Inception recall yields nominal speedups of $6$--$10{\times}$ because
the lucky seed already exceeds the unlucky-seed-at-$2$M target at
$200$k--$300$k. This is not a statement about the value of compute but
about recall being approximately compute-invariant on this panel
(\autoref{tab:supp-cov-bands}, $\sigma$-shrink near or below $1$).
The lucky-speedup framing does not apply to recall and the entries are
flagged as degenerate.

\begin{table}[t]
  \centering
  \caption{\textbf{Lucky-seed speedup across metrics.} Companion to
  \autoref{fig:lucky-speedup}. Each cell reports the speedup factor
  $2\mathrm{M}/t^{\star}$, where $t^{\star}$ is the first checkpoint at
  which the lucky seed reaches the unlucky-seed-at-$2$M target. Recall
  entries are degenerate (compute-invariant, see
  \autoref{app:supp-scaling}).}
  \label{tab:supp-speedup}
  \small
  \begin{tabular}{lcccc}
    \toprule
    Metric & SiT-S & SiT-B & SiT-L & SiT-XL \\
    \midrule
    Inception FID $\downarrow$       & 1.25$\times$ & 1.18$\times$ & 1.82$\times$ & 1.82$\times$ \\
    DINOv2 FID $\downarrow$          & 1.18$\times$ & 1.25$\times$ & 1.67$\times$ & 1.54$\times$ \\
    Inception Precision $\uparrow$   & 1.25$\times$ & 1.43$\times$ & \textbf{2.50$\times$} & \textbf{2.50$\times$} \\
    Inception Density $\uparrow$     & 1.25$\times$ & 1.33$\times$ & \textbf{2.86$\times$} & 2.00$\times$ \\
    Inception Coverage $\uparrow$    & 1.18$\times$ & 1.33$\times$ & \textbf{2.86$\times$} & 2.22$\times$ \\
    \midrule
    \multicolumn{5}{l}{\emph{Recall (degenerate, lucky seed already exceeds unlucky-at-$2$M target by $200$--$300$k):}} \\
    Inception Recall $\uparrow$      & (6.67$\times$) & (10.0$\times$) & (10.0$\times$) & (10.0$\times$) \\
    \bottomrule
  \end{tabular}
\end{table}

\section{What does FID look like?}
\label{app:fid-visualization}

The figures in this appendix complement the quantitative results of
\autoref{sec:results} with a purely visual diagnostic: how does the
appearance of generated samples change as the FID of the underlying
checkpoint moves across its range, and does that change look the same
with and without classifier-free guidance?

\myparagraph{Setup.}
We pick $10$ ImageNet classes (golden retriever, tabby cat, macaw,
flamingo, cheeseburger, ice cream, volcano, alp, geyser, daisy) and
$10$ fixed initial-noise tensors. The figures show the first $6$ of
those noise seeds to keep each panel within page bounds. Every
(class, noise) pair is decoded with $50$-step
\texttt{flow\_euler\_sampler} at $16$ FID levels.
For the unguided panel, the levels are sampled along the full DiT
training trajectory and span $\mathrm{FID}\!\in\![13.5,\,87.4]$ (DiT-XL
fully trained $\rightarrow$ DiT-S undertrained). We uniformly subsample
these $16$ from the $32$ available checkpoints. For the guided panel,
the levels are DiT-XL checkpoints evaluated at the
golden-section-selected best CFG, spanning $\mathrm{FID}\!\in\![3.55,\,11.3]$.
The 16 FID-ordered checkpoints are split into two stacked half-panels
of $8$ columns each, so each figure is $8$ columns wide and $12$ rows
tall: the top half holds the lower-FID octave and the bottom half the
higher-FID octave. Each half-panel carries its own pastel colorbar, but
both share a single global FID scale, so the colour assigned to a
column is comparable across the two halves.

\myparagraph{What the galleries show.}
Holding initial noise fixed across columns isolates the effect of
training-time / guidance changes on the rendered image. The unguided
panels show the visual quality degrading monotonically along the FID
axis -- a smooth, well-formed object near the leftmost columns
collapses into class-correct but textureless blobs, then into noisy
patches, by the rightmost columns. The guided panels show that even
the highest-FID (worst) checkpoint produces visually clean, recognisable
samples once an appropriate CFG is applied -- consistent with the
\autoref{sec:results-guidance} observation that classifier-free
guidance compresses the entire FID range it is computed over.

The two condition-specific axes are shown back-to-back per class:
\autoref{fig:fid-vis-guided-00_golden_retriever}--\ref{fig:fid-vis-guided-09_daisy}
for the guided panels and
\autoref{fig:fid-vis-unguided-00_golden_retriever}--\ref{fig:fid-vis-unguided-09_daisy}
for the unguided panels.

\newcommand{\fidVisFigure}[3]{%
  \begin{figure}[p]
    \centering
    \includegraphics[width=\linewidth,height=0.82\textheight,keepaspectratio]{figures/fig_data/fid_class_panels/#1/class_#2.jpg}
    \caption{\textbf{#3.} Rows are $6$ fixed initial-noise seeds, and columns step through $8$ FID-ordered checkpoints. The two halves continue along the same FID axis. Pastel colorbars report per-column FID. See \autoref{app:fid-visualization} for the full setup.}
    \label{fig:fid-vis-#1-#2}
  \end{figure}%
  \clearpage
}

\fidVisFigure{guided}{00_golden_retriever}{Guided FID gallery -- golden retriever}
\fidVisFigure{guided}{01_tabby_cat}{Guided FID gallery -- tabby cat}
\fidVisFigure{guided}{02_macaw}{Guided FID gallery -- macaw}
\fidVisFigure{guided}{03_flamingo}{Guided FID gallery -- flamingo}
\fidVisFigure{guided}{04_cheeseburger}{Guided FID gallery -- cheeseburger}
\fidVisFigure{guided}{05_ice_cream}{Guided FID gallery -- ice cream}
\fidVisFigure{guided}{06_volcano}{Guided FID gallery -- volcano}
\fidVisFigure{guided}{07_alp}{Guided FID gallery -- alp}
\fidVisFigure{guided}{08_geyser}{Guided FID gallery -- geyser}
\fidVisFigure{guided}{09_daisy}{Guided FID gallery -- daisy}

\fidVisFigure{unguided}{00_golden_retriever}{Unguided FID gallery -- golden retriever}
\fidVisFigure{unguided}{01_tabby_cat}{Unguided FID gallery -- tabby cat}
\fidVisFigure{unguided}{02_macaw}{Unguided FID gallery -- macaw}
\fidVisFigure{unguided}{03_flamingo}{Unguided FID gallery -- flamingo}
\fidVisFigure{unguided}{04_cheeseburger}{Unguided FID gallery -- cheeseburger}
\fidVisFigure{unguided}{05_ice_cream}{Unguided FID gallery -- ice cream}
\fidVisFigure{unguided}{06_volcano}{Unguided FID gallery -- volcano}
\fidVisFigure{unguided}{07_alp}{Unguided FID gallery -- alp}
\fidVisFigure{unguided}{08_geyser}{Unguided FID gallery -- geyser}
\fidVisFigure{unguided}{09_daisy}{Unguided FID gallery -- daisy}



\end{document}